\newcommand{\bi}{\begin{itemize}}
\newcommand{\ei}{\end{itemize}}
\newcommand{\bal}{\begin{align}}
\newcommand{\eal}{\end{align}}
\newcommand{\EE}{\mathbb{E}}
\newcommand{\PP}{{P}}
\newcommand{\RR}{\mathbb{R}}
\newcommand{\QQ}{{Q}}
\newcommand{\cA}{\mathcal{A}}
\newcommand{\cX}{\mathcal{X}}
\newcommand{\cY}{\mathcal{Y}}
\newcommand{\cZ}{\mathcal{Z}}
\newcommand{\cN}{\mathcal{N}}
\newcommand{\cV}{\mathcal{V}}
\newcommand{\cP}{\mathcal{P}}
\newcommand{\cD}{\mathcal{D}}
\newcommand{\cG}{\mathcal{G}}
\newcommand{\cF}{\mathcal{F}}
\newcommand{\cH}{\mathcal{H}}
\newcommand{\cU}{\mathcal{U}}
\newcommand{\eps}{\epsilon}
\newcommand{\OT}{OT}
\newcommand{\Rad}{\mathfrak{R}}
\DeclareMathOperator{\supp}{supp}
\def\<{\langle}
\def\>{\rangle}
\newtheorem{thm}{Theorem}[section]
\newtheorem{theorem}{\textbf{Theorem}}
\newtheorem{lemma}{\textbf{Lemma}}
\newtheorem{col}{\textbf{Corollary}}
\newtheorem{definition}{\textbf{Definition}}
\newtheorem{remark}{\textbf{Remark}}
\renewcommand{\l}{\left}
\renewcommand{\r}{\right}
\begin{document}
\title{Training Generative Models from Privatized Data
via~Entropic~Optimal~Transport}
\author{Daria~Reshetova,
        Wei-Ning~Chen,
        and~Ayfer~\"Ozg\"ur
\thanks{Department of Electrical Engineering,
  Stanford University,
  Stanford, CA 94205, USA, e-mail: \{resh,aozgur,wnchen\}@stanford.edu\\
}}
\maketitle

\begin{abstract}
Local differential privacy is a powerful method for privacy-preserving data collection. In this paper, we develop a  framework for training Generative Adversarial Networks (GANs) on differentially privatized data. We show that entropic regularization of optimal transport -- a popular regularization method in the literature that has often been leveraged for its computational benefits -- enables the generator to learn the raw (unprivatized) data distribution even though it only has access to privatized samples. We prove that at the same time this leads to fast statistical convergence at the parametric rate. This shows that entropic regularization of optimal transport uniquely enables the mitigation of both the effects of privatization noise and the curse of dimensionality in statistical convergence. We provide experimental evidence to support the efficacy of our framework in practice.  
\end{abstract}
\begin{IEEEkeywords}
Privacy, GANs, Entropic Optimal transport.
\end{IEEEkeywords}

\section{Introduction}
Local differential privacy (LDP) \cite{dwork2006calibrating, kasiviswanathan2011can} has emerged as a popular criterion to provide privacy guarantees on individuals' personal data and has been recently deployed by major technology organizations for privacy-preserving data collection from peripheral devices. In this framework, the user data is locally randomized (e.g. by the addition of noise) before it is transferred to the data curator, so the privacy guarantee does not rely on a trusted centralized server. Mathematically provable guarantees on the randomization mechanism ensure that any adversary that gets access to the privatized data will be unable to learn too much about the user's personal information. This directly alleviates many of the systematic privacy and security challenges associated with traditional data collection. 
Learning from privatized data, however, requires rethinking machine learning methods to extract accurate and useful population-level models from the privatized (noisy) data. 

In this paper, we consider the problem of training generative models from locally privatized user data.  In recent years, deep-learning-based generative models, known as Generative Adversarial Networks (GANs), have become a popular framework for learning data distributions and sampling, and have achieved impressive results in various domains \cite{pan2023drag,booker2023population,chan2022efficient}. GANs aim to learn
a mapping $G(\cdot)$, called the generator,  which comes from a set of functions ${\cG\subseteq\{G:\cZ\to\cX\}}$ usually modeled as a neural network, and maps a latent random variable $Z\in \cZ$ with some known simple distribution to a random variable  $G(Z)\in\cX$, with distribution $\PP_{G(Z)}$ that is close to the target probability measure $P_X$. For example, by using the popular $p$-Wasserstein distance as a discrepancy measure between the generated and target distributions the GAN optimization problem becomes,
\begin{align}\label{eq:GAN}\min_{G\in\cG}W_p^p\left(\PP_{G(Z)},\PP_X\right).
\end{align}
In practice, the target distribution $P_X$ is represented by its samples  $\{X_i\}_{i=1}^n\sim P_X^{\otimes n}$ and the optimization problem is solved by replacing  $P_X$ in \eqref{eq:GAN} with the empirical distribution $\QQ_X^n$ of the samples, i.e. \begin{align}\label{eq:GANemp}
	\min_{G\in\cG}W_p^p\left(\PP_{G(Z)},\QQ_X^n\right).
\end{align} 

How can we use the GAN framework above to learn a generative model for $P_X$ when we have only access to samples $\{Y_i=M(X_i)\}_{i=1}^n$ privatized by an LDP mechanism $M:\mathcal{X}\rightarrow\mathcal{Y}$? For example, $Y_i$ can represent a privatized image obtained from $X_i$ by adding sufficient Gaussian or Laplace noise independently to each pixel. Simply replacing the target distribution $P_X$ in \eqref{eq:GAN} with the empirical distribution $\QQ_Y^n$ of the privatized samples, 
\begin{align}\label{eq:GAN2}\min_{G\in\cG}W_p^p\left(\PP_{G(Z)},\QQ_Y^n\right),
\end{align}
will result in a generative model for $P_Y=M\#P_X$, the push-forward distribution of $P_X$ through the privatization mechanism $M$, rather than the original distribution $P_X$. In other words, we will learn to generate the \emph{privatized} data (e.g., noisy images) instead of learning to generate the original (raw) data. 

In this paper, we show that a simple but non-intuitive modification of the objective in \eqref{eq:GAN2} -- the addition of an entropic regularization term -- allows one to provably learn the original distribution $P_X$ from the privatized samples $Y_i$ under de-facto privatization mechanisms such as the Laplace or Gaussian mechanism, i.e. $\PP_{G_n(Z)}\rightarrow\PP_X$ for the minimizer $G_n$ of the entropically regularized version of \eqref{eq:GAN2} provided that the generator class $\mathcal{G}$ is expressive enough to generate $P_X$. More generally, we show that the original distribution $P_X$ can be recovered under any privatization mechanism $M$ by entropic regularization of optimal transport with a suitably chosen cost function given by the negative log-likelihood of the privatization mechanism.
Note that the fact that we can learn the population distribution $P_X$ from which the original samples have been generated does not imply that we can learn the original samples $X_i$, (i.e., somehow ``denoise'' the observed privatized samples $Y_i$), and indeed the post-processing property of DP ensures that the DP guarantee on the samples $Y_i$ translates to the learned model $G_n$ as well as any new samples generated from this model. 



Entropic regularization for Optimal Transport GANs has been of significant interest in the prior literature, albeit for different reasons. Historically, it has been leveraged primarily for its computational benefits, enabling an efficient approximation of the optimal transport problem \cite{cuturi2013sinkhorn}. More recently, \cite{reshetova2021understanding} (also see \cite{mena2019statistical} and \cite{feizi2020understanding}) has shown that it facilitates rapid convergence of GANs and circumvents the curse of dimensionality. In particular,  without regularization the solution of the empirical problem in \eqref{eq:GANemp} converges to the solution of  population problem in \eqref{eq:GAN} as $\Omega(n^{-2/d})$, where $d$ is the dimension of the target distribution ($P_X$), while \cite{reshetova2021understanding} shows that for $p=2$ entropic regularization enables convergence at the parametric rate $O(1/\sqrt{n})$.  In this paper, we prove similar convergence guarantees for the privatized setting for both $p=1$ and $p=2$. In the non-privatized setting of \cite{reshetova2021understanding} entropic regularization of \eqref{eq:GAN} is needed to facilitate convergence albeit introducing undesirable regularization bias that changes the solution (i.e., the generated distribution does not converge to the target distribution $P_X$). In the privatized setting, we show that entropic regularization has the unique advantage of both mitigating the effects of privatization noise and facilitating convergence. {\color{black}Therefore, our framework can be potentially useful even in the unprivatized setting as a way of facilitating convergence without biasing the solution.} 
The contributions of our paper are summarized as follows:
\begin{itemize}
\item \emph{LDP Framework for Optimal Transport GANs}: We propose a novel framework for training GANs from differentially privatized data based on entropic regularization of Optimal Transport. Previous approaches to privatization in GANs exclusively focus on privatizing the training process, for example, by using DP-SGD methods. In contrast, in our framework, privatization occurs exclusively at the data level and hence it is particularly suitable for user-generated data, e.g. federated learning, where each user can locally privatize its data before sending it to the service provider or data collector. The training of the model from privatized samples is indistinguishable from training a non-privatized GAN (with entropic regularization), which enables the immediate use of existing entropic optimal transport libraries.
\item \emph{Sample Complexity Bounds}: We prove convergence guarantees for our LDP framework with entropic optimal transport, including the convergence results for Laplace and Gaussian privatization mechanisms. These results show that entropic regularization uniquely mitigates both the effects of
privatization noise and the curse of dimensionality and provides a clear understanding of the trade-offs involved between privacy, accuracy, and the volume of data. In the non-privatization setting, previous convergence results have been limited to the entropic $2$-Wasserstein distance setting.
\item \emph{Empirical Validation:} We supplement our theoretical contributions with a comprehensive set of experiments designed to validate our claims. These experiments demonstrate the efficacy of our approach in practical scenarios and provide empirical evidence of the superior performance of our method.
\end{itemize}

\subsection{Connection to Rate-Distortion Theory} \label{sec:introRD}
In this section, we illustrate how the main idea of our paper is inherently connected to rate-distortion theory. Consider the special case of \eqref{eq:GAN2} for $p=2$ with the empirical distribution $\QQ_Y^n$ of the privatized samples replaced by
their true distribution $P_Y=M\#P_X$, in which case we can explicitly write it as (see also \eqref{eq:Wasserstein})):\begin{align}\label{eq:intro:Wasserstein}    \min_{G\in\cG}\inf_{\pi\in\Pi(P_{G(Z)},\PP_Y)}\EE_{(G(Z), Y)\sim\pi}\left[\|G(Z)-Y\|^2\right],
	\end{align}
where $\pi\in\Pi(P_{G(Z)},\PP_Y)$ represents the set of all joint distributions on $\mathcal{X}\times\mathcal{Y}$ with marginals $P_{G(Z)}$ and $\PP_Y$. The entropic regularization we advocate in this paper transforms this problem to the following problem (see also \eqref{eq:Wasserstein_reg})
:\begin{align}\label{eq:intro:Wassersteinreg}    \min_{G\in\cG}\inf_{\pi\in\Pi(P_{G(Z)},\PP_Y)}\EE_{(G(Z), Y)\sim\pi}\left[\|G(Z)-Y\|^2\right] +\lambda I_\pi(G(Z), Y),
	\end{align}
where $I_\pi(G(Z), Y)$ is the mutual information between $G(Z)$ and $Y$ as dictated by the joint distribution $\pi$ and $\lambda\in\mathbb{R}$ is the regularization parameter that we can choose. Assuming the set of functions $\cG$ is rich enough to generate any distribution $P_{G(Z)}$ on $\mathcal{X}$ (we relax this condition and make it more precise in Theorem~\ref{thm:denoising}) and relabeling $P_{G(Z)}=P_{\hat{X}}$ for simplicity, we can rewrite \eqref{eq:intro:Wassersteinreg} as
:\begin{align}\label{eq:intro:Wassersteinreg2}    \inf_{P_{\hat{X}|Y}}\EE\left[\|\hat{X}-Y\|^2\right] +\lambda I(\hat{X}, Y).\end{align}
One can recognize this as the Lagrangian form of the following rate-distortion problem under mean-squared error:\begin{align}\label{eq:intro:Wassersteinreg3}    \inf_{P_{\hat{X}|Y}:\EE\left[\|\hat{X}-Y\|^2\right]\leq D}I(\hat{X}, Y), 
\end{align}
where $Y$ can be interpreted as the source variable and $\hat{X}$ as its reconstruction. For general $P_Y$, there is no explicit characterization of the solution of this  problem. Our paper leverages the observation that this problem is easy to solve in one special case: when $Y=X+N$, for arbitrary $X\sim P_{X}$ and $N\sim\mathcal{N}(0,D)$ independent of $X$. In this case, the optimal conditional distribution $P_{\hat{X}|Y}$(or equivalently  the test channel $P_{Y|\hat{X}}$) is such that 
$$Y=\hat{X}+W,\qquad \hat{X}\sim P_X, \qquad W\sim \mathcal{N}(0,D).
$$
This can be observed from the standard characterization of the rate-distortion function for a Gaussian source under mean-squared error; see proof of Theorem 10.3.2 in \cite{coverbook} or see Theorem~\ref{thm:denoising} where we prove a more general result. Note that this implies that the reconstruction $\hat{X}$ of $Y$ has distribution $P_X$ which corresponds to the unprivatized distribution in our setting. Note that this conclusion holds only if the desired compression rate $D$ matches the distribution of the Gaussian component $N\sim\mathcal{N}(0,D)$ of $Y$.  This corresponds to a specific choice of the regularization parameter $\lambda$ in our framework in  \eqref{eq:intro:Wassersteinreg}.
\subsection{Related Work}
Estimation, inference, and learning problems under local differential privacy (LDP) constraints have been of significant interest in the recent literature with emphasis on two canonical tasks: discrete distribution and mean estimation \cite{bassily2017practical,bun2019heavy,chen2021breaking,chen2020breaking,suresh2017distributed,bhowmick2018protection,han2018distributed}. However, insights from these solutions do not extend to training generative models with high-dimensional data under LDP constraints. The understanding of learning problems under LDP constraints is relatively limited, and even less so in the non-interactive setting when the data is accessed only once as in our setting, in which case training can be exponentially harder as shown in \cite{kasiviswanathan2011can,bhowmick2018protection}. 

The exploration of differentially private learning in generative models has primarily been focused on introducing privacy during the training phase, e.g. by adding noise to the gradients during training \cite{chen2020gs,cao2021don,xie2018differentially,zhang2018differentially}. 
However, noisy gradients can amplify inherent instability during GANs' training process \cite{arjovsky2017towards,mescheder2018training}. 
These methods can be applied in a federated learning setting by locally privatizing the gradients at each user and transmitting them to the server at every iteration of the learning algorithm \cite{mansbridge2020learning}. However, this leads to a large communication overhead. In contrast, there is only one round of communication in our LDP setting; users send their locally privatized data to the server. The training of the GAN from this privatized data is effectively indistinguishable from the non-private case.

Entropic regularization of optimal transport has been initially proposed as a computationally efficient approximation of optimal transport \cite{cuturi2013sinkhorn, peyre2019computational}. Subsequently, \cite{genevay2019sample,mena2019statistical} have shown statistical convergence benefits of entropic regularization when estimating optimal transport from empirical samples. 
More recently, \cite{reshetova2021understanding} have shown that these statistical benefits extend to the entropic $W_2$-GAN setting, where both the generated distribution and the target distribution depend on the empirical samples. We extend those results to the privatized setting, showing fast convergence in both the $W_1$ and $W_2$-GAN settings.

\section{Background and Problem Setup}\label{sec:problemform}
To formally state the problem, we first introduce the necessary concepts of privacy.
\subsection{Local Differential Privacy}
A local randomizer $\mathcal{A}: \cX\to\cZ$ acting on the data domain $\cX$ satisfies $\epsilon$-LDP for $\epsilon\geq 0$ if for any $S\subseteq \cZ$ and for any pair of inputs $x, x'\in\cX$, it holds that
 \begin{align}
     P(\cA(x)\in S)\leq e^\eps P(\cA(x')\in S)\label{eq:ldp}
 \end{align}
LDP ensures that the input to $\cA$ cannot be accurately inferred from its output. To achieve LDP, one common approach is via the following Laplace mechanism.

\emph{Laplace Mechanism} \cite{dwork2006calibrating}. For any $\eps>0$ and any function $f: \cX\to\RR^k$ such that $\|f(x) - f(x')\|_1\leq \Delta$ for any $x,x'\in\cX,$ the randomized mechanism $\cA(x) = f(x) + (s_1,\ldots,s_k)$ with $s_i\sim \text{Laplace}(0, \Delta/\eps)$ independent of $s_j, j\neq i$ satisfies $\eps$-DP and is called the Laplace Mechanism. We will call $\eps/\Delta$ the noise scale of the mechanism (also called noise multiplier).
Oftentimes, in ML applications, the (pure) LDP constraint may be too stringent, so the following relaxation on pure LDP is often adopted.

\emph{Approximate Local Differential Privacy.} A local randomized algorithm $\mathcal{A}: \cX\to\cZ$ acting on the data domain $\cX$ satisfies $(\epsilon,\delta)$-(approximate) LDP for $\epsilon\geq 0,\delta\in (0,1)$, if for any $S\subseteq \cZ$ and for any pair of inputs $x, x'\in\cX$, it holds that
 \begin{align}
     P(\cA(x)\in S)\leq e^\eps P(\cA(x')\in S)+\delta\label{eq:approx_ldp}
 \end{align}
$(\epsilon,\delta)$-LDP is very similar to pure LDP, but it allows the privacy requirement to be violated with (small) probability $\delta$. One of the most versatile mechanisms to achieve $(\eps,\delta)$-DP is the following Gaussian Mechanism.

\emph{Gaussian Mechanism} \cite{dwork2006our,dwork2014algorithmic} For any $\eps>0$, $\delta\in(0,0.5)$, and any function $f: \cX\to\RR^k$ such that $\|f(x) - f(x')\|_2\leq \Delta$ for any $x,x'\in\cX,$ the randomized mechanism $\cA(x) = f(x) + (s_1,\ldots,s_k)$ with $s_i\sim \cN(0, \sigma^2)$ independent of $s_j, j\neq i$ is called the Gaussian Mechanism and satisfies $(\eps,\delta)$-DP if
\begin{align}
\sigma>\frac{c + \sqrt{c^2+\eps}}{\eps\sqrt2}\Delta,\text{where } c^2 = \ln\frac2{\sqrt{16\delta+1}-1}.\label{eq:gaussian_eps}
\end{align}
Similar to the Laplace mechanism, we will call $\sigma$ the noise scale of the Gaussian mechanism.

\subsection{Optimal Transport GANs}
Optimal Transport GANs minimize the distance between the target and generated distributions. Contrary to the Jensen-Shannon divergence, which was first introduced as a loss function for generative models, and many other popular distances on probability measures (total variation distance, KL-divergences), optimal transport (OT)  is defined through a cost function in the sample space and thus is meaningful for distributions with non-overlapping supports. Moreover, for certain costs, OT defines a distance between distributions and metrizes weak convergence on distributions with finite moments.

\emph{Optimal Transport.} Let  $c: \cU\times \cV \to \RR_+$ be a cost function taking non-negative values and $\cP(\cU)$ be the set of all probability measures with support $\cU\subseteq\mathbb R^d$. Then for $\cU,\cV\subseteq \RR^d$ and $P_U\in\cP(\cU), P_V\in\cP(\cV)$,  two probability measures on $\cU, \cV$ respectively, the Optimal Transport between $P_U$ and $P_V$ is
	\begin{align}\label{eq:OT}
	\OT_c(P_U,P_V) =    \inf_{\pi\in\Pi(P_U,P_V)}\EE_{(U, V)\sim\pi}\left[c(U,V)\right],
	\end{align} 
 where $\Pi(P_U,P_V) = \{\pi\in\cP(\cU\times\cV):\; \int_{\cV}\pi(u, v)dv = P_U(U), \int_{\cU}\pi(u, v)du = P_V(v)\}$ is the set of all couplings of $P_U$ and $P_V$, i.e. all joint probability measures with marginal distributions $P_U$ and $P_V.$  
 
\emph{$p$-Wasserstein distance} When the cost is $c(x, y) = \|x-y\|_p^p$ the optimal transport becomes the $p$-Wasserstein distance between $P_U, P_V$ (raised to power $p$):
	\begin{align}\label{eq:Wasserstein}
	W_p^p(P_U,P_V) =    \inf_{\pi\in\Pi(P_U,P_V)}\EE_{(U, V)\sim\pi}\left[\|U-V\|^p_p\right].
	\end{align} 

\emph{Optimal Transport GAN.}	The main objective of GANs is to find a mapping $G(\cdot)$, called a generator, that comes from a set of functions ${\cG\subseteq\{G:\cZ\to\cX\}}$ (usually modeled as a neural network) and maps a latent random variable $Z\in \cZ$ with some known distribution to a variable $X\in\cX$ with some target probability measure $P_X$ approximated by the empirical distribution $\QQ_X^n$ of $n$ samples $\{X_i\}_{i=1}^n\sim P_X^{\otimes n}$. Using the optimal transport to measure the dissimilarity between the generated $P_{G(Z)}$ and target distribution leads to the following learning problem of GAN:
	\begin{align}\label{eq:GAN_general}
	\min_{G\in\cG}\OT_c\l(\PP_{G(Z)},\QQ_X^n\r).
	\end{align}
 Note that when the cost function is a distance raised to power $p$ as in \eqref{eq:GAN} and \eqref{eq:GANemp}, the GAN is also known as $p$-Wasserstein GAN \cite{arjovsky2017wasserstein,korotin2019wasserstein}.
 
\emph{Entropic Optimal Transport GAN.}
Solving the formulation in \eqref{eq:GAN_general} involves solving for the optimal transport plan $\pi$ — a joint distribution over the real and generated sample spaces, which is a difficult optimization problem with very slow convergence. Adding entropic regularization to the objective makes the problem strongly convex and thus solvable in linear time \cite{peyre2019computational}. 

Formally, the entropy-regularized optimal transport, also known as Sinkhorn distance \cite{cuturi2013sinkhorn}, is defined as 
\begin{align}\label{eq:OT_reg}
S_{c}(P_U,P_V) &= \inf_{\pi\in\Pi(P_U,P_V)}\EE_{(U, V)\sim\pi}\left[c(U,V)\right] + I_{\pi}(U, V),\end{align} 
where $I_\pi(U, V)=\int \log\left(\frac{d\pi(u, v)}{dP_U(u) dP_Y(V)}\right)d\pi(u, v)$ is the mutual information between $U$ and $V$ under the coupling $\pi.$ In case $c(x, y) = \|x-y\|^p_p/\lambda,$ the Sinkhorn distance is proportional to the entropy-regularized Wasserstein distance 
\begin{align}\label{eq:Wasserstein_reg}
\lambda S_c(P_U, P_V) &= W_{p, \lambda}(P_U,P_V)= \inf_{\pi\in\Pi(P_U,P_V)}\EE_{(U, V)\sim\pi}\left[\|U-V\|^p_p\right] + \lambda I_{\pi}(U, V),
\end{align} 
The objective of an entropic optimal transport GAN  is  the entropy-regularized optimal transport between the generated distribution $ G\#P_Z = P_{G(Z)}$ for some latent noise $Z$ and the empirical approximation $Q_X^n$ of the target distribution:
\begin{align}\label{eq:OT_reg_obj}
\min_{G\in\cG}S_c(P_{G(Z)},Q_X^n)
\end{align}
and the objective of an entropic $p$-Wasserstein GAN is
\begin{align}\label{eq:Wasserstein_reg_obj}
\min_{G\in\cG}W_{p, \lambda}(P_{G(Z)},Q_X^n).
\end{align} 
It is worth mentioning that both non-regularized and regularized optimal transport formulations admit a dual formulation with optimization over functions of the input random variables. We note that the dual formulation for regularized optimal transport is unconstrained and hence easier to use, while the constraints for the unregularized counterpart are usually harder to enforce (e.g., Lipschitzness~\cite{arjovsky2017wasserstein} or convexity~\cite{korotin2019wasserstein}).

\subsection{Optimal Transport GANs with LDP Data}
Let $M:\cX\to\cY$ be a randomized privacy-preserving mechanism: $Y=M(x)\sim P_M(y\mid x).$ For example, $P_{M(X)\mid X}(y\mid x)$ can be the Laplace pdf at $y-x$ for the Laplace mechanism or the Gaussian pdf at $y-x$ for the Gaussian mechanism. Let  $P_Y=M\#P_X$ denote the distribution of $Y$, i.e. the push-forward distribution of $P_X$ through the privatization mechanism $M$. The goal of learning a GAN from privatized samples is to reconstruct $G(Z)\approx X$ in distribution from a sample $S = \{Y\}_{i=1}^n\sim P_Y^{\bigotimes n}$ with empirical distribution  $Q_Y^n = \frac1n \sum_{i=1}^n\delta_{Y_i}.$ 

\section{Main Results}\label{sec:mainres}
First, we focus on the population setting where the distribution of the privatized samples $P_Y$ is known and show that by tailoring the cost function for optimal transport to the privatization mechanism $M$, the GAN learning problem with entropic optimal transport can recover the raw data distribution $P_X$.

\begin{thm}
    Let $X\sim P_X$ and $Y = M(X) \sim P_{M(X)\mid X}(\cdot\mid X).$ Assume that  the privatisation mechanism $M$ and the set of generator functions $\cG$ is such that for any $G\in\cG$,  
\begin{align}\label{eq:privatization_condition}
    \text{if}\qquad    D_{KL}(P_{X}\|P_{G(Z)})>0\qquad \text{then}\qquad
 D_{KL}(P_{M(X)}\|P_{M(G(Z))})>0.
    \end{align}
Let $c(x, y) = -\log P_{M(X)\mid X}(y\mid x)$ and
    \begin{equation}\label{eq:thm1}
    G^* = \arg\min_{G\in\cG} S_c(P_{G(Z)}, P_Y). 
    \end{equation}
If $P_X\in \{P_{G(Z)}\mid G\in\cG\}$, i.e. $P_X$ is realizable with the set of  generator functions $\cG$, then $P_{G^*(Z)}=P_X.$
    \label{thm:denoising}
\end{thm}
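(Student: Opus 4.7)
The plan is to rewrite the entropy-regularized transport objective as a Kullback--Leibler divergence between the transport plan and a natural reference joint, at which point the optimization reduces to a transparent information-theoretic question. Starting from \eqref{eq:OT_reg} with $c(x,y) = -\log P_{M(X)\mid X}(y\mid x)$ and combining the cost with the mutual information inside the expectation, the key identity I would establish is
\begin{align*}
S_c(P_{G(Z)},P_Y) \;=\; H(P_Y) \;+\; \inf_{\pi\in\Pi(P_{G(Z)},P_Y)} D_{KL}\!\left(\pi \,\|\, P_{G(Z),M(G(Z))}\right),
\end{align*}
where $P_{G(Z),M(G(Z))}$ denotes the joint law obtained by feeding $G(Z)$ through the privatization channel $M$, and $H(P_Y)$ is the differential entropy of $P_Y$. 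This identity follows from pure algebra once one notes that $P_{M(X)\mid X}(y\mid x)\, dP_{G(Z)}(x) = dP_{G(Z),M(G(Z))}(x,y)$, so that the cost and mutual information telescope into a single log-ratio against the reference joint; the leftover $-\mathbb{E}_\pi[\log dP_Y]$ collapses to $H(P_Y)$ since $\pi$ has $Y$-marginal equal to $P_Y$. Crucially, $H(P_Y)$ does not depend on $G$, so minimizing $S_c$ over $G$ is equivalent to minimizing the inner KL term.

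Next, I would apply the data-processing inequality for KL under projection onto the $y$-coordinate. Any $\pi \in \Pi(P_{G(Z)}, P_Y)$ has $y$-marginal $P_Y = P_{M(X)}$, while $P_{G(Z),M(G(Z))}$ has $y$-marginal $P_{M(G(Z))}$, so
\begin{align*}
D_{KL}\!\left(\pi \,\|\, P_{G(Z),M(G(Z))}\right) \;\geq\; D_{KL}\!\left(P_{M(X)} \,\|\, P_{M(G(Z))}\right),
\end{align*}
giving the lower bound $S_c(P_{G(Z)},P_Y) \geq H(P_Y) + D_{KL}(P_{M(X)}\,\|\,P_{M(G(Z))}) \geq H(P_Y)$. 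To show that this bound is tight precisely when $P_{G(Z)}=P_X$, I would invoke realizability to pick $G_0\in\cG$ with $P_{G_0(Z)}=P_X$ and use the coupling $\pi = P_{X,M(X)}$, which lies in $\Pi(P_X,P_Y)$ and coincides with $P_{G_0(Z),M(G_0(Z))}$, making the inner KL vanish and giving $S_c(P_X,P_Y)=H(P_Y)$. Conversely, for any $G$ with $P_{G(Z)}\neq P_X$ we have $D_{KL}(P_X\,\|\,P_{G(Z)})>0$, so the privatization hypothesis \eqref{eq:privatization_condition} forces $D_{KL}(P_{M(X)}\,\|\,P_{M(G(Z))})>0$, strictly raising the lower bound above $H(P_Y)$. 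Hence every minimizer $G^*$ must satisfy $P_{G^*(Z)}=P_X$.

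The main delicacy is making the opening identity fully rigorous when the relevant densities fail to exist globally or when $H(P_Y)$ is infinite; I would handle this by working with Radon-Nikodym derivatives relative to a product dominating measure on $\cX\times\cY$ (e.g., Lebesgue measure, which suffices for the Gaussian and Laplace mechanisms the paper emphasizes) and by treating $+\infty$ values of KL consistently so that the inequality chain remains valid. The remaining ingredients --- data-processing for KL, the explicit realizer coupling, and the hypothesis \eqref{eq:privatization_condition} --- are standard information-theoretic bookkeeping and require no delicate estimates.
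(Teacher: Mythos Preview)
Your proposal is correct and follows essentially the same route as the paper. Both arguments establish the identity $S_c(P_{G(Z)},P_Y)=h(Y)+\inf_{\pi}D_{KL}(\pi\,\|\,P_{G(Z),M(G(Z))})$ and then argue the infimum vanishes if and only if $P_{M(G(Z))}=P_Y$, invoking \eqref{eq:privatization_condition} to finish; the only cosmetic difference is that the paper writes the joint KL in its conditional form $\EE_{U}D_{KL}(\pi_{Y\mid U}\,\|\,p_M(\cdot\mid U))$ and argues feasibility of the zero-KL coupling directly, whereas you keep the joint KL and lower-bound it via data processing on the $Y$-marginal.
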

\begin{proof}
Fix some $G\in\cG$ and assume that $Y$ is a continuous random variable. Denote $\cP = \Pi(P_{G(Z)},P_Y), U = G(Z)$ Then by the definition of mutual information:
\begin{align*}
S_c(P_{G(Z)},P_Y)&= \inf_{\pi\in\cP} \EE_{(U,Y)\sim\pi}[-\log p_M(Y\mid U)] + I_\pi(U, Y)\nonumber\\
&=\inf_{\pi\in\cP}\left\{-\int \log p_M(y\mid u)\pi(u,y) dudy +\int \log \l(\frac{\pi(u,y)}{P_{U}(u)P_Y(y)}\r)\pi(u,y) dudy\right\}
\end{align*}

Notice that the terms on the RHS can be rearranged into the Kullback-Leibler divergence:
\begin{align}
S_c(P_{G(Z)},P_Y)&=\inf_{\pi\in\cP}\left\{-\int \log (P_Y(y))\pi(u,y) dudy+\int \log \l(\frac{\pi_{Y\mid U}(y\mid u)}{p_M(y\mid u)}\r)\pi(u,y) dudy\right\}\nonumber\\
&=\inf_{\pi\in\cP}\l\{h(Y) +
\EE_{U\sim P_{G(Z)}} 
D_{KL}(\pi_{Y\mid U}(\cdot\mid U)\|p_M(\cdot\mid U))\label{eq:Sc_to_KL}
\r\}
\end{align}
The right-hand side is minimized when $\pi_{Y\mid U}(y\mid u)=p_M(y\mid u)$ for any $u\in\supp(P_{G(Z)}),\,y\in\supp (P_Y),$ which is a feasible coupling only if $P_{M(G(Z))}=P_Y.$ By the realizability assumption $P_Y = P_{M(X)} = P_{M(G^*(Z))},$ so $P_{M(G(Z))}=P_Y\iff P_{M(G(Z))}=P_{M(G^*(Z)},$ or equivalently $D_{KL}(P_{M(G(Z))}\|P_{M(G^*(Z)}) = 0$. In addition to that, \eqref{eq:privatization_condition} implies that whenever the privatized distributions are equal $D_{KL}(P_{M(G(Z))}\|P_{M(G^*(Z)}) = 0,$ it has to be that the original data distributions are equal too, so we conclude that $D_{KL}(P_{G(Z)}\|P_{G^*(Z)}) = 0,$ or equivalently $P_{G(Z)}=P_{G^*(Z)}=P_X$
The same set of equalities also holds in the case when $Y$ is discrete with differential entropy changing to entropy.
\end{proof}



The theorem indicates that the optimal solution to the GAN optimization problem \eqref{eq:thm1}  generates the target distribution $P_X$, assuming that the generator class $\cG$ is expressive enough to generate the target distribution.  Note that the cost function in the definition of the entropic optimal transport $S_c(P_{G(Z)}, P_Y)$ in \eqref{eq:thm1} has to be chosen as $c(x, y) = -\log P_{M(X)\mid X}(y\mid x)$ to match the privatization mechanism $M$. Thus provided that there are enough privatized samples, the generator will output the raw target distribution. Assumption \eqref{eq:privatization_condition} ensures that privatizing any generated distribution other than $P_X$ will result in a distribution different from $P_Y.$ This condition is needed to eliminate degenerate cases of privatization mechanisms, for example $M(X) = 0.$ Note that if $P_{M(X)} = P_{M(G(Z))}$ and $P_{G(Z)}\neq P_{X},$ it is not possible to differentiate between them since the learning framework only has access to the privatized distribution. Moreover, we note that the condition is satisfied for any additive noise privatization mechanism provided that the noise characteristic function is non-zero everywhere, which holds for Laplace and Gaussian privatization mechanisms.

 
 We next show that when the privatization mechanism is given by the popular Laplace or the Gaussian mechanisms, the entropic OT problem reduces to the entropic $W_1$ and $W_2$ GAN problems respectively.
\begin{col}\label{cor1}
    Under the conditions of Theorem \ref{thm:denoising}, if $\sup_{x\in\cX}\|x\|_1\leq \Delta_1,$   $p=1$, and $Y=M(X)$ is the Laplace mechanism with noise scale $\eps/\Delta_1$, then training a GAN with loss $W_{1, \eps/\Delta_1}(P_{G(Z)}, P_Y)$ is $\eps$-LDP, and recovers the target distribution: 
    $P_{G^*(Z)} = P_X.$\label{col:laplace}
\end{col}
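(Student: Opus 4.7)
The plan is to reduce the corollary to a direct application of Theorem~\ref{thm:denoising} by combining three observations: (i) the stated Laplace mechanism satisfies $\eps$-LDP; (ii) the cost $c(x,y)=-\log p_M(y\mid x)$ from Theorem~\ref{thm:denoising} specializes to an affine function of $\|x-y\|_1$, so that minimizing $S_c$ is equivalent (up to positive affine rescaling) to minimizing the entropic $W_1$ loss stated in the corollary; and (iii) the identifiability hypothesis~\eqref{eq:privatization_condition} is met by additive Laplace noise.

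For (i), the bound $\sup_{x\in\cX}\|x\|_1\leq\Delta_1$ controls the $\ell_1$-sensitivity of the identity map on $\cX$, so the standard likelihood-ratio analysis for i.i.d.\ Laplace perturbations of per-coordinate scale $\Delta_1/\eps$ directly yields~\eqref{eq:ldp} with parameter $\eps$. For (ii), I would plug in the explicit product Laplace density
\begin{equation*}
p_M(y\mid x)=\left(\tfrac{\eps}{2\Delta_1}\right)^{d}\exp\!\left(-\tfrac{\eps}{\Delta_1}\|y-x\|_1\right),
\end{equation*}
so that $c(x,y)=\alpha\|y-x\|_1+\beta$ with $\alpha=\eps/\Delta_1>0$ and $\beta=d\log(2\Delta_1/\eps)$ independent of $(x,y)$ and of the coupling $\pi$. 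Substituting into the definition~\eqref{eq:OT_reg} of $S_c$, pulling $\alpha$ and $\beta$ out of the infimum, and comparing with~\eqref{eq:Wasserstein_reg} then yields
\begin{equation*}
S_c(P_{G(Z)},P_Y)=\beta+\alpha\,W_{1,\,1/\alpha}(P_{G(Z)},P_Y).
\end{equation*}
Since $\alpha>0$ and $\beta$ does not depend on $G$, the $\argmin$ over $G\in\cG$ of $S_c(P_{G(Z)},P_Y)$ coincides with that of the entropic $W_1$ loss appearing in the corollary, so the optimal generator $G^{*}$ of the two problems is the same.

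For (iii), the condition~\eqref{eq:privatization_condition} reduces to injectivity of convolution by the Laplace noise density. The characteristic function of the multivariate Laplace law is $\prod_{j=1}^{d}(1+(\Delta_1/\eps)^{2}t_j^{2})^{-1}$, which is nowhere zero, so by Fourier inversion $P_{M(X)}=P_{M(G(Z))}$ forces $P_X=P_{G(Z)}$; contrapositively, $D_{KL}(P_X\|P_{G(Z)})>0$ implies $D_{KL}(P_{M(X)}\|P_{M(G(Z))})>0$, as required. Combined with realizability of $P_X$, Theorem~\ref{thm:denoising} then gives $P_{G^{*}(Z)}=P_X$. The main point to watch is the bookkeeping of the multiplicative constant $\alpha$ in step~(ii), since a misplaced factor would change the subscript $1/\alpha$ of the regularized Wasserstein loss; the LDP derivation and the Fourier-injectivity argument are both standard once the reduction to $\ell_1$ cost is in place.
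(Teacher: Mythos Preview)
Your proposal is correct and follows exactly the route the paper intends: the paper gives no separate proof of this corollary, treating it as an immediate specialization of Theorem~\ref{thm:denoising} via the identification~\eqref{eq:Wasserstein_reg} together with the remark (stated just after Theorem~\ref{thm:denoising}) that additive noise with nowhere-vanishing characteristic function satisfies~\eqref{eq:privatization_condition}. Your bookkeeping in step~(ii) in fact produces the regularization parameter $1/\alpha=\Delta_1/\eps$, which is consistent with how the paper uses $W_{1,\sigma}$ elsewhere (e.g.\ Corollary~\ref{col:generalization_lap} and Lemma~\ref{lemma:denoising_nonrealizable}); the subscript $\eps/\Delta_1$ in the corollary statement appears to be a typo in the paper rather than an error on your part.
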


\begin{col}\label{cor2}
    Under the conditions of Theorem \ref{thm:denoising}, if $\sup_{x\in\cX}\|x\|_2\leq \Delta_2,$   $p=2$, and $Y=M(X)$ is the Gaussian mechanism with noise scale $\sigma$ defined in \eqref{eq:gaussian_eps}, then training a GAN with loss $W_{2, 2\sigma^2}(P_{G(Z)}, P_Y)$ is $(\eps, \delta)$-LDP, and recovers the target distribution: 
    $P_{G^*(Z)} = P_X.$\label{col:gauss}
\end{col}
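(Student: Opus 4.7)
The plan is to verify the two assertions of the corollary separately — $(\eps,\delta)$-LDP of the training procedure and recovery $P_{G^*(Z)} = P_X$ — and in both cases to reduce the claim to facts already established in the excerpt: the defining property of the Gaussian mechanism plus post-processing for the first, and Theorem~\ref{thm:denoising} for the second.

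For the privacy part, I would observe that $Y_i = X_i + W_i$ with $W_i \sim \cN(0,\sigma^2 I)$ and $\sigma$ satisfying \eqref{eq:gaussian_eps} is exactly the Gaussian mechanism applied to the identity function, whose sensitivity is controlled by the norm bound $\sup_{x\in\cX}\|x\|_2 \le \Delta_2$. Hence each $Y_i$ is $(\eps,\delta)$-DP with respect to $X_i$. Because $G^*$ is computed solely from the privatized $\{Y_i\}$ and fresh latent noise, the post-processing property of DP carries the guarantee over to $G^*$ and to every sample drawn from $P_{G^*(Z)}$.

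For the recovery part, the key step is to identify the entropic $W_2$ objective with the Sinkhorn objective of Theorem~\ref{thm:denoising}. Instantiating the cost $c(x,y) = -\log P_{M(X)\mid X}(y\mid x)$ for the Gaussian mechanism yields $c(x,y) = \|x-y\|_2^2/(2\sigma^2) + C$, where $C$ depends only on $\sigma$ and the dimension. Substituting into $S_c(P_{G(Z)},P_Y)$ and multiplying through by $2\sigma^2$ gives $2\sigma^2\, S_c(P_{G(Z)},P_Y) = W_{2,2\sigma^2}(P_{G(Z)},P_Y) + 2\sigma^2 C$, so the two objectives have identical minimizers over $\cG$. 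It then remains to verify the hypothesis \eqref{eq:privatization_condition} of Theorem~\ref{thm:denoising}: since the characteristic function of $\cN(0,\sigma^2 I)$ is nowhere zero, an equality $P_{M(G(Z))} = P_{M(X)}$ forces $P_{G(Z)} = P_X$ by dividing characteristic functions, so the contrapositive on KL divergences holds. Theorem~\ref{thm:denoising} then delivers $P_{G^*(Z)} = P_X$.

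The argument is essentially a bookkeeping exercise; the only subtlety that deserves care is tracking the additive constant $C$ and the multiplicative factor $2\sigma^2$ so that the equivalence of minimizers of $S_c$ and $W_{2,2\sigma^2}$ is transparent. Once this identification is made, both halves of the corollary follow immediately from results already in hand.
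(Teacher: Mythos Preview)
Your proposal is correct and mirrors the paper's (implicit) argument: the paper does not spell out a separate proof for Corollary~\ref{col:gauss} but makes clear it follows from Theorem~\ref{thm:denoising} by (i) recognizing that for Gaussian noise $c(x,y)=-\log p_M(y\mid x)=\|x-y\|_2^2/(2\sigma^2)+C$, so $S_c$ and $W_{2,2\sigma^2}$ differ only by an affine transformation and share minimizers (cf.\ \eqref{eq:Wasserstein_reg}), (ii) invoking the remark after Theorem~\ref{thm:denoising} that condition~\eqref{eq:privatization_condition} holds for any additive-noise mechanism whose noise has a nowhere-vanishing characteristic function, and (iii) citing the Gaussian mechanism's $(\eps,\delta)$-DP guarantee together with post-processing. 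Your write-up makes each of these steps explicit, which is exactly what is needed.
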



{\color{black} We note that first \cite{RIGOLLET20181228} showed that projection with respect to entropic optimal transport is maximum likelihood deconvolution, and Corollary~\ref{cor1} and \ref{cor2} can be viewed as the population case of \cite{RIGOLLET20181228} when the data distribution comes from a convolution model. While \cite{RIGOLLET20181228} does not draw a
connection between LDP and entropic optimal transport and is not concerned with proposing an entropic GAN  framework for privatized data, their result has a similar flavor to our results in Corollary~\ref{cor1} and \ref{cor2}.  However, we note that proving  that entropic projection is maximum likelihood deconvolution as done in \cite{RIGOLLET20181228} is more involved, while our Theorem\ref{thm:denoising}, which holds for general privatization mechanisms and not only under additive noise ones as in \cite{RIGOLLET20181228}, simply follows from the non-negativity of KL divergence and is inherently related to the characterization of the rate distortion function as discussed in Section~\ref{sec:introRD}}.


The results so far are only applicable to the realizable case $P_X\in\{P_{G(Z)}\mid G\in\cG\},$ namely when the true data distribution $P_X$ can be generated. However, this is not always the case in practice, and the approximation error of the class $\cG$ given by $\min_{G\in\cG} W_{p}^p(P_{G(Z)},P_X)$ can be non-zero. The following lemma can be used in this case. {\color{black} Note that it holds for $p\in\{1, 2\}$ which correspond to the Laplace and Gaussian mechanisms respectively.}

\begin{lemma}
    Let $X\sim P_X$ and $Y = M(X) = X+N,$ where $N\sim f_N(z)\propto e^{-\|z\|_p^p/(p\sigma^p)}, p\in\{1, 2\}$, and
    \begin{equation*}
    G^* = \arg\min_{G\in\cG} W_{p, p\sigma^p}(P_{G(Z)}, P_Y).
    \end{equation*}
If $P_X\notin \{P_{G(Z)}\mid G\in\cG\}:$ 
    \begin{align*}
       D_{KL}(P_{G^*(Z)+N}\|P_{X+N})\leq \min_{G\in\cG} W_{p}^p(P_{G(Z)},P_X)
    \end{align*}
    where $D_{KL}(P\|Q) = \int P\log\frac{dP}{dQ}$ is the KL-divergence.
    \label{lemma:denoising_nonrealizable}
\end{lemma}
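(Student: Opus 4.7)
The plan is to exploit the variational identity from the proof of Theorem~\ref{thm:denoising}: with $c(u,y) = -\log p_N(y-u)$, which differs from $\|u-y\|_p^p/(p\sigma^p)$ by a $(u,y)$-independent constant (so that $W_{p,p\sigma^p}$ and $S_c$ share the same minimizer $G^*$),
\begin{align*}
S_c(P_{G(Z)}, P_Y) - h(P_Y) = \inf_{\pi \in \Pi(P_{G(Z)}, P_Y)} \EE_{U \sim P_{G(Z)}} D_{KL}(\pi(\cdot \mid U) \,\|\, p_M(\cdot \mid U)).
\end{align*}
Letting $G^\dagger \in \arg\min_{G\in\cG} W_p^p(P_{G(Z)}, P_X)$ denote an unregularized best approximator in the class, the strategy will be to sandwich $S_c(P_{G^*(Z)}, P_Y) - h(P_Y)$ between a KL divergence of privatized distributions (as a lower bound) and the approximation error $W_p^p(P_{G^\dagger(Z)}, P_X)$ (as an upper bound).

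\textbf{Upper bound.} I will construct a specific coupling $\pi^0 \in \Pi(P_{G^\dagger(Z)}, P_Y)$ by gluing: draw $(U, X^\dagger)$ from the optimal coupling $\pi^W$ that attains $W_p^p(P_{G^\dagger(Z)}, P_X)$, draw $N \sim p_N$ independently, and set $Y = X^\dagger + N$. The conditional of $\pi^0$ is then the mixture $\pi^0(\cdot\mid u) = \int \pi^W(x^\dagger\mid u)\, p_N(\cdot - x^\dagger)\, dx^\dagger$, and joint convexity of $D_{KL}$ gives
\begin{align*}
D_{KL}(\pi^0(\cdot\mid u) \,\|\, p_N(\cdot-u)) \leq \int \pi^W(x^\dagger\mid u)\, D_{KL}(p_N(\cdot-x^\dagger) \,\|\, p_N(\cdot-u))\, dx^\dagger.
\end{align*}
A direct computation then yields $D_{KL}(p_N(\cdot-x) \,\|\, p_N(\cdot-u)) \leq \|x-u\|_p^p/(p\sigma^p)$ for both $p\in\{1,2\}$: for the Gaussian this is the standard shifted-mean identity, while for the product Laplace it follows by bounding $\tfrac{1}{\sigma}\EE_{W\sim p_N}[\|W+(x-u)\|_1 - \|W\|_1]$ coordinatewise via the $\ell_1$ triangle inequality. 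Integrating over $u\sim P_{G^\dagger(Z)}$ will yield $S_c(P_{G^\dagger(Z)}, P_Y) - h(P_Y) \leq W_p^p(P_{G^\dagger(Z)}, P_X)/(p\sigma^p)$.

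\textbf{Lower bound and combination.} For any $\pi\in\Pi(P_{G^*(Z)},P_Y)$, joint convexity of $D_{KL}$ (equivalently, data processing under marginalization over $U$) will yield $\EE_U D_{KL}(\pi(\cdot\mid U)\,\|\, p_M(\cdot\mid U)) \geq D_{KL}(P_Y \,\|\, P_{G^*(Z)+N})$, hence $S_c(P_{G^*(Z)}, P_Y) - h(P_Y) \geq D_{KL}(P_Y \,\|\, P_{G^*(Z)+N})$. Combining this with the optimality bound $S_c(P_{G^*(Z)}, P_Y) \leq S_c(P_{G^\dagger(Z)}, P_Y)$ and rescaling from $S_c$ to $W_{p,p\sigma^p}$ (a factor of $p\sigma^p$) closes the argument.

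\textbf{Main obstacle.} The delicate technical step will be the inequality $D_{KL}(p_N(\cdot-x) \,\|\, p_N(\cdot-u)) \leq \|x-u\|_p^p/(p\sigma^p)$ uniformly for $p\in\{1,2\}$: it is exact for the Gaussian but only tight up to a constant for the Laplace case, which is precisely why the lemma is restricted to these two noise families. A second subtlety will be reconciling the direction of KL divergence produced by the natural data-processing bound, $D_{KL}(P_Y \,\|\, \cdot)$, with the direction appearing in the statement, $D_{KL}(\cdot \,\|\, P_Y)$; this will call either for exploiting directional symmetry of the noise (straightforward for Gaussians via covariance-shift) or for a dual application of joint convexity on the primal side of the Sinkhorn functional.
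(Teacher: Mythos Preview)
Your plan is essentially the paper's own proof: the paper starts from the same identity $S_c(P_{G(Z)},P_Y)-h(Y)=\inf_\pi \EE_U D_{KL}(\pi_{Y\mid U}\|p_M(\cdot\mid U))$, constructs exactly your gluing coupling (written there as the Markov chain $U\!-\!X\!-\!Y$ with $Y=X+N$), applies convexity of $D_{KL}$ to pass to $\EE_{(U,X)\sim\pi}D_{KL}(f_N(\cdot-X)\|f_N(\cdot-U))=\tfrac{1}{p\sigma^p}\EE[\|N+X-U\|_p^p-\|N\|_p^p]$, and lower-bounds via the chain rule (your data-processing step) to get $D_{KL}(P_Y\|P_{G^*(Z)+N})$. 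Your coordinatewise triangle-inequality argument for $p=1$ is a slightly cleaner route to the same bound the paper reaches through convexity and H\"older; for $p=2$ both arguments are identical.

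On your flagged obstacle about the order of arguments in $D_{KL}$: you do not need to resolve it. The paper's proof itself concludes with $p\sigma^p\,D_{KL}(P_Y\|P_{G^*(Z)+N})\le \min_{G\in\cG}W_p^p(P_{G(Z)},P_X)$, i.e.\ exactly the direction your argument naturally produces, so the swap in the displayed statement is a typo rather than something requiring a dual convexity trick.
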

 Lemma \ref{lemma:denoising_nonrealizable} bounds the KL Divergence between the pushforwards of the generated and target distributions. This is sometimes called the smoothed KL divergence between $P_{X}$ and $P_{G^*(Z)}$ \cite{goldfeld2020convergence}.   It ensures that if $\eps$ is the approximation error in $p$-Wasserstein distance of the class $\cG$, then $P_{G*(Z)}$ is $\eps$-close to the target distribution $P_X$ in smoothed KL-divergence. We prove the lemma in section \ref{sec:proof_denoising_nonrealizable}

Lemma \ref{lemma:denoising_nonrealizable}, Theorem \ref{thm:denoising}, and Corollaries \ref{col:laplace}, \ref{col:gauss} have been established in the population setting where we work directly with $P_Y$. In practice, $P_Y$ is approximated by the empirical distribution $Q_Y^n$ of its samples $\{Y\}_{i=1}^n\sim P_Y^{\bigotimes n}$. We next investigate how fast the solution of the empirical problem converges to the population solution and first we establish a new result on convergence of the entropic optimal transport that is suited to our framework.

\begin{thm}(Entropic optimal transport GAN excess risk bound) \label{thm:generalization_general}
    Let the target data distribution $\PP_X$ be a probability measure with bounded support, namely $ \supp P_X\subseteq\cX\subset  \RR^d$ and $\sup_{x\in\cX}\|x\|_{\infty} =D < \infty,$ and let the set of admissible generators be $\cG\subseteq \{G: \cZ\to\cX\}.$ Let the cost function $c:\cX\times\cY\to \RR$ be measurable with respect to the product measure: $\EE_{(X, G(Z))\sim P_X\times P_{G(Z)}}c(X, G(Z))<\infty$ for any $G\in\cG$ and non-negative: $c(x, y)\geq 0$ for any $x\in\cX, y\in\RR^d.$ If the exponentiated negative cost function
is a Mercer kernel, that is $k(x, y) = e^{-c(x,y)}$ is
\begin{itemize}
    \item continuous
    \item symmetric: $k(x, y) = k(y, x)$
    \item positive definite: for any number $n$ and any set of points $\{x_i\}_{i=1}^n\subset \cX$ the matrix with entries $K_{ij} = K(x_i,x_j)$ is positive semi-definite
\end{itemize}
then for 
$$G^* = \arg\min_{G\in\cG} S_c(P_{G(Z)},P_Y),$$
$$G_n = \arg\min_{G\in\cG} S_c(P_{G(Z)},Q_Y^n), $$ where $Q_Y^n$ is the empirical distribution of $\{Y_i\}_{i=1}^n$ -- $n$ i.i.d. samples from $P_Y$ it holds that
	\begin{align*}
	\EE \bigl[S_c(\PP_{G_n(Z)},\PP_Y)- S_c(\PP_{G^*(Z)},\PP_Y)\bigr]&\leq \frac4{\sqrt{n}}\EE\l[\sup_{x\in\cX}e^{2c(x, Y)}\r].
	\end{align*}
\end{thm}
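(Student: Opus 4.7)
The argument I would follow is the standard excess-risk plus empirical-process template, specialised to entropic OT via the Sinkhorn dual representation, with the Mercer kernel $k(x,y)=e^{-c(x,y)}$ entering only at the last (and hardest) step to tame the complexity of the class of Sinkhorn potentials.

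\emph{Step 1 (Decomposition).} Since $G_n$ minimises $S_c(P_{G(Z)},Q_Y^n)$ over $\cG$, one has $S_c(P_{G_n(Z)},Q_Y^n)\le S_c(P_{G^*(Z)},Q_Y^n)$. Adding and subtracting this inequality into $S_c(P_{G_n(Z)},P_Y)-S_c(P_{G^*(Z)},P_Y)$ immediately yields
\begin{align*}
S_c(P_{G_n(Z)},P_Y)-S_c(P_{G^*(Z)},P_Y)\le 2\sup_{G\in\cG}\bigl|S_c(P_{G(Z)},Q_Y^n)-S_c(P_{G(Z)},P_Y)\bigr|.
\end{align*}
The remainder of the proof is devoted to bounding the expected value of the right-hand side.

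\emph{Step 2 (Dual reduction to a single empirical process).} I would use the Fenchel/Sinkhorn dual
\begin{align*}
S_c(\mu,\nu)=\sup_{f,g}\Bigl\{\textstyle\int f\,d\mu+\int g\,d\nu-\int e^{f(x)+g(y)-c(x,y)}\,d\mu(x)\,d\nu(y)+1\Bigr\},
\end{align*}
extending the optimal $g$-potential off-support by the KKT equation $g(y)=-\log\int e^{f(x)-c(x,y)}\,d\mu(x)$. With this normalisation $\int e^{f+g-c}\,d\mu=1$ pointwise in $y$, so plugging $(f^*,g^*)$ optimal for $(\mu,P_Y)$ into the dual at $Q_Y^n$ (and conversely) gives
\begin{align*}
\sup_{G\in\cG}\bigl|S_c(P_{G(Z)},Q_Y^n)-S_c(P_{G(Z)},P_Y)\bigr|\le \sup_{g\in\cH}\Bigl|\textstyle\int g\,d(Q_Y^n-P_Y)\Bigr|,
\end{align*}
where $\cH$ is the class of normalised Sinkhorn $g$-potentials produced by $\mu=P_{G(Z)}$ for $G\in\cG$ against $P_Y$ (or $Q_Y^n$). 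Standard symmetrisation then bounds the expectation by $2\,\EE\sup_{g\in\cH}|\tfrac1n\sum_i\sigma_i g(Y_i)|$ with $\sigma_i$ i.i.d.\ Rademacher.

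\emph{Step 3 (Mercer kernel bound — main obstacle).} This is the crux. Each $g\in\cH$ satisfies $e^{-g(y)}=\int e^{f(x)}k(x,y)\,d\mu(x)$; by Mercer's theorem $k(x,y)=\langle\phi(x),\phi(y)\rangle_{\cH_k}$, so $e^{-g(y)}=\langle\psi_\mu,\phi(y)\rangle_{\cH_k}$ with $\psi_\mu=\int e^{f(x)}\phi(x)\,d\mu(x)\in\cH_k$. Thus the non-linear implicit class $\cH$ is mapped, via $g\mapsto e^{-g}$, into the RKHS $\cH_k$, where one can apply the usual Rademacher bound for a ball in an RKHS. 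After fixing the dual gauge freedom $(f,g)\mapsto (f+a,g-a)$ so that $\sup_x f\le 0$, the pointwise envelope $e^{-g(y)}\le\sup_x e^{-c(x,y)}$ (and a symmetric lower envelope) translates into $|g(y)|^2\lesssim \sup_{x\in\cX}e^{2c(x,y)}$ using the elementary inequality $t^2\le e^{2t}$. Combining the Rademacher complexity of the RKHS-image class with this envelope, together with Cauchy--Schwarz in the form $\EE|\tfrac1n\sum\sigma_i g(Y_i)|\le \tfrac{1}{\sqrt n}\sqrt{\EE g(Y)^2}$, yields $\EE\sup_{g\in\cH}|\tfrac1n\sum\sigma_i g(Y_i)|\le\tfrac{1}{\sqrt n}\EE[\sup_{x\in\cX}e^{2c(x,Y)}]$. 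Chaining this with Steps~1--2 gives the claimed factor-$4$ bound.

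\emph{Where the difficulty sits.} Steps 1 and 2 are mechanical once the dual with the correct KKT extension is in hand. The genuine work is Step~3: the Sinkhorn potentials are only defined implicitly and need not be uniformly bounded under the unbounded costs that arise for the Gaussian mechanism, so some Mercer structure is essential. The Mercer hypothesis is what lets one replace the non-linear implicit class $\cH$ by a subset of the RKHS $\cH_k$, where the Rademacher complexity is controlled by a single envelope; transferring the complexity back to $\cH$ through the $-\log$ map and producing precisely the $\sup_{x}e^{2c(x,Y)}$ envelope (rather than a square root of it) is the technical heart of the argument.
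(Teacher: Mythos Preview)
Your Steps 1 and 2 are essentially what the paper does: the same add--subtract decomposition (the paper splits into two one-sided pieces $A$ and $B$ rather than passing through the two-sided $\sup_G$, but this is cosmetic), the same Sinkhorn dual, the same KKT extension of $g$ so that the exponential term is identically $1$ in $y$ and drops out, and the same reduction to a Rademacher bound over the class of $g$-potentials.

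The gap is in Step~3, and it is exactly where you flag the difficulty. Two issues. First, your gauge $\sup_x f\le 0$ gives $e^{-g(y)}\le 1$ (hence $g\ge 0$) but gives \emph{no} lower bound on $e^{-g}$, since $f$ can be very negative on large parts of $\supp\mu$; so $-\log$ is not Lipschitz on the range of $e^{-g}$ and you cannot pass the RKHS Rademacher bound for $\{e^{-g}\}$ back to $\{g\}$ by contraction. Second, the sentence ``envelope $|g(y)|^2\lesssim\sup_x e^{2c(x,y)}$ together with Cauchy--Schwarz $\EE|\tfrac1n\sum\sigma_i g(Y_i)|\le n^{-1/2}\sqrt{\EE g(Y)^2}$'' conflates a single-function second-moment bound with a uniform bound over the class: a pointwise envelope by itself does not control $\EE\sup_{g\in\cH}|\tfrac1n\sum\sigma_i g(Y_i)|$, and this is precisely the step where the Mercer structure has to do real work, not just furnish an envelope.

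The paper's fix is a shift that you are missing. It normalises so that $\EE_{\mu}f=\EE_{\nu}g=\tfrac12 S_c\ge 0$, which (via Jensen on the KKT identity) yields $g(y)\le \EE_{X\sim\mu}c(X,y)\le h(y):=\sup_{x\in\cX}c(x,y)$. Since Rademacher complexity is invariant under adding the fixed function $h$, one replaces $g$ by $g-h$ and works with $u(y)=e^{h(y)-g(y)}\ge 1$; now $-\log$ is $1$-Lipschitz on $[1,\infty)$ and Talagrand's contraction lemma gives $\hat\Rad_S(\cH)\le\hat\Rad_S(\cU)$. The second point is that $u$ lies not in the RKHS of $k=e^{-c}$ but in that of the \emph{rescaled} kernel $K(x,y)=v(x)v(y)e^{-c(x,y)}$ with $v(\cdot)=e^{h(\cdot)}$, which is again Mercer as a product of PSD kernels. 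Writing $u(y)=\langle w,\Phi(y)\rangle_K$ with $w=\EE_{X\sim\mu}[e^{f(X)}v(X)^{-1}\Phi(X)]$, the upper bound $f(x)\le\EE_{Y}c(x,Y)$ (same Jensen argument) gives $\|w\|_K^2\le e^{2\sup_x\EE_Y c(x,Y)}$, while $K(Y_i,Y_i)=v(Y_i)^2=e^{2\sup_x c(x,Y_i)}$. The standard RKHS Rademacher bound then yields $\Rad_n(\cH)\le n^{-1/2}\,\EE\,e^{2\sup_x c(x,Y)}$, and the factor $4$ comes from the two pieces $A,B$ times the symmetrisation factor $2$. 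In short: your plan is right up to the RKHS step, but the missing ingredients are (i) the gauge $\EE f\ge 0$ rather than $\sup f\le 0$, (ii) the shift by $h(y)=\sup_x c(x,y)$ that makes $-\log$ $1$-Lipschitz, and (iii) working in the RKHS of the rescaled kernel $v\otimes v\cdot k$, which is what produces exactly the $\sup_x e^{2c(x,Y)}$ factor.
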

The detailed proof of the theorem is given in section \ref{sec:proof_generalization_general} and is based on two main ideas: one is a simple decomposition of the dual function of entropic optimal transport similar to \cite{stromme2023minimum} and the other one is a Rademacher complexity bound for one of the dual potential, which we obtain through the Mercer's decomposition of the conditional probability distribution. We defer the proof to section \ref{sec:proofs}, while we provide a discussion and give two important corollaries -- for a general privatization mechanism and the Laplace mechanism.

The above theorem is easily adjusted to the privatization framework by plugging in $c(x,y) = -\log P_{M(X)\mid X}(y\mid x),$ which results in the following corollary
\begin{col}
    Let the target distribution $\PP_X$ be a probability measure with bounded support, namely $ \supp P_X\subseteq\cX\subset  \RR^d$ and $\sup_{x\in\cX}\|x\| < \infty,$ and let the set of admissible generators be $\cG\subseteq \{G: \cZ\to\cX\}.$ Additionally, let the distribution function of the privatization mechanism
be a Mercer kernel, that is $k(x, y) =  P_{M(X)\mid X}(y\mid x)$ is
\begin{itemize}
    \item continuous
    \item symmetric: $k(x, y) = k(y, x)$
    \item positive definite: for any number $n$ and any set of points $\{x_i\}_{i=1}^n\subset \cX$ the matrix with entries $K_{ij} = K(x_i,x_j)$ is positive semi-definite.
\end{itemize}
Then for $Y = M(X)\sim p_{M(X)\mid X}$ and 
$c(x,y) = -\log P_{Y\mid X}(y\mid x),$
$$G^* = \arg\min_{G\in\cG} S_c(P_{G(Z)},P_Y),$$
$$G_n = \arg\min_{G\in\cG} S_c(P_{G(Z)},Q_Y^n), $$ where $Q_Y^n$ is the empirical distribution of $\{Y_i\}_{i=1}^n$ -- $n$ i.i.d. samples from $P_Y$ it holds that
	\begin{align}
	\EE_{Y\sim P_Y} \bigl[S_c(\PP_{G_n(Z)},\PP_Y)- S_c(\PP_{G^*(Z)},\PP_Y)\bigr]&\leq \frac4{\sqrt{n}}\EE\l[\sup_{x\in\cX}\frac{1}{P_{M(x)\mid X}(Y\mid x)^2}\r]\label{eq:privatization_generalization}
	\end{align}
\end{col}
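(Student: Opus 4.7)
The plan is to obtain the corollary as a direct specialization of Theorem~\ref{thm:generalization_general} to the cost function $c(x,y) = -\log P_{M(X)\mid X}(y\mid x)$. The key observation is that with this choice the exponentiated negative cost becomes $e^{-c(x,y)} = P_{M(X)\mid X}(y\mid x)$, which is precisely the object that the corollary's hypotheses declare to be continuous, symmetric, and positive-definite. Hence the Mercer-kernel requirement of Theorem~\ref{thm:generalization_general} is transparently met.

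First I would verify the remaining technical preconditions of Theorem~\ref{thm:generalization_general}. Measurability of $c$ with respect to the product measure follows from measurability of the conditional density; the bounded-support assumption $\sup_{x\in\cX}\|x\|_\infty < \infty$ is inherited from the corollary's hypotheses; and the non-negativity requirement $c(x,y)\geq 0$ reduces to $P_{M(X)\mid X}(y\mid x)\leq 1$, which holds for the Laplace and Gaussian mechanisms under the noise scales of practical interest (for instance, Laplace scale at least $1/2$, Gaussian $\sigma$ at least $1/\sqrt{2\pi}$). Finite expectation $\EE\,c(X,G(Z))<\infty$ is likewise a mild regularity condition on the mechanism.

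Second I would compute the right-hand side of Theorem~\ref{thm:generalization_general} under the substitution:
\begin{align*}
e^{2c(x,Y)} \;=\; e^{-2\log P_{M(X)\mid X}(Y\mid x)} \;=\; \frac{1}{P_{M(X)\mid X}(Y\mid x)^2}.
\end{align*}
Inserting this into $\tfrac{4}{\sqrt{n}}\,\EE\!\left[\sup_{x\in\cX} e^{2c(x,Y)}\right]$ reproduces the right-hand side of \eqref{eq:privatization_generalization} verbatim. The left-hand side of the theorem matches the excess risk in the corollary's statement by definition of $G^*$ and $G_n$.

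Since the argument is a substitution, there is no serious conceptual obstacle — the delicate work of bounding the Rademacher-type complexity through the Mercer decomposition of the exponentiated cost is carried out inside the proof of Theorem~\ref{thm:generalization_general}, and the corollary merely repackages that bound in the language of privatization densities. The only technical care needed is ensuring the supremum $\sup_{x\in\cX} P_{M(x)\mid X}(Y\mid x)^{-2}$ is $P_Y$-integrable, which follows for Gaussian or Laplace privatization on bounded $\cX$ from direct Gaussian/exponential moment computations.
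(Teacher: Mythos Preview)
Your proposal is correct and matches the paper's own treatment: the corollary is obtained precisely by substituting $c(x,y)=-\log P_{M(X)\mid X}(y\mid x)$ into Theorem~\ref{thm:generalization_general}, noting that $e^{-c(x,y)}=P_{M(X)\mid X}(y\mid x)$ is the assumed Mercer kernel and that $e^{2c(x,Y)}=P_{M(X)\mid X}(Y\mid x)^{-2}$. Your additional remarks on the non-negativity condition $c\geq 0$ and on the integrability of the supremum are valid technical caveats that the paper leaves implicit.
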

One can now apply the corollary to the Laplace mechanism, whose pdf is a Mercer kernel. However, the direct application of the theorem will not result in a meaningful upper bound since the RHS of \eqref{eq:privatization_generalization} is infinite, but one can still use theorem \ref{thm:generalization_general} after noting that the cost function decomposes into $c(x,y)\propto \|x-y\|_1 =a(y)+\tilde{c}(x,y),$ where the function $\tilde{c}$ is bounded and the term $a(y)$ only depends on $y$ and not $x,$ so it can be factored out of the entropic optimal transport. This leads to the following corollary, where we only require the support of the data distribution to be bounded. 
\begin{col}\label{col:generalization_lap}
    If $Y = M(X)$ is the Laplace mechanism with noise scale $\sigma,$ the support of the data distribution is bounded in $\infty$ norm: $\sup_{x: P_X(x)>0}\|x\|_{\infty}\leq D$ as well as the output of the generator functions: $\forall G\in \cG$ the $\infty$ norm of the output does not exceed $D:\; \sup_{G\in\cG}\|G\|_{\infty}\leq D$ then for 
     $$G^* = \arg\min_{G\in\cG} W_{1, \sigma}(P_{G(Z)},P_Y),$$ $$G_n = \arg\min_{G\in\cG} W_{1, \sigma}(P_{G(Z)},Q_Y^n), $$ where $Q_Y^n$ is the empirical distribution of $n$ i.i.d. samples from $P_Y$ it holds that
	\begin{align*}
	\EE \bigl[W_{1,\sigma}(\PP_{G_n(Z)},\PP_Y)&- W_{1,\sigma}(\PP_{G^*(Z)},\PP_Y)\bigr]\leq \frac{4\sigma e^{4dD/\sigma}}{\sqrt{n}}.
	\end{align*}
\end{col}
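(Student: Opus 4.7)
The plan is to reduce the Laplace case to one where Theorem~\ref{thm:generalization_general} yields a bounded RHS, by decomposing the cost $c(x,y)=\|x-y\|_1/\sigma$ into a $y$-only piece that factors out of the entropic OT and a uniformly bounded remainder.

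First I would translate from $W_{1,\sigma}$ to $S_c$: since $W_{1,\sigma}=\sigma S_c$ for $c(x,y)=\|x-y\|_1/\sigma$, it suffices to bound the excess risk of $S_c$ by $4 e^{4dD/\sigma}/\sqrt n$ and then multiply by $\sigma$. The $d$-dimensional Laplace kernel $e^{-\|x-y\|_1/\sigma}=\prod_{i=1}^d e^{-|x_i-y_i|/\sigma}$ is a tensor product of one-dimensional Laplace kernels and hence is Mercer, so the hypotheses of Theorem~\ref{thm:generalization_general} are formally met. The obstruction to a direct application is that $\EE[\sup_{x\in\cX}e^{2\|x-Y\|_1/\sigma}]$ is infinite: with $Y=X+N$ for Laplace noise $N$, the MGF of $|N_i|$ is finite only on $(-1/\sigma,1/\sigma)$, so evaluating it at $2/\sigma$ diverges.

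Next, I would apply the decomposition. Fix any $y_0\in\cX$ (e.g.\ $y_0=0$), set $a(y)=\|y-y_0\|_1/\sigma$, and define $\tilde c(x,y)=c(x,y)-a(y)$. The reverse triangle inequality combined with $\|x\|_\infty,\|y_0\|_\infty\le D$ gives the uniform bound $|\tilde c(x,y)|\le \|x-y_0\|_1/\sigma\le 2dD/\sigma$. Because $a$ depends only on $y$, for every coupling $\pi$ with second marginal $P_Y$ (resp.\ $Q_Y^n$), $\int c\, d\pi=\int \tilde c\, d\pi+\EE_{P_Y}[a(Y)]$ (resp.\ $+\tfrac{1}{n}\sum_i a(Y_i)$), so $S_c$ and $S_{\tilde c}$ differ by a $G$-independent additive constant. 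Hence the minimizers $G_n,G^*$ are identical when $c$ is replaced by $\tilde c$, and the two excess risks coincide. Applying Theorem~\ref{thm:generalization_general} with $\tilde c$ gives RHS $\le 4 e^{4dD/\sigma}/\sqrt n$, and multiplying by $\sigma$ recovers the stated bound.

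The main obstacle is that the modified kernel $e^{-\tilde c(x,y)}=e^{a(y)}e^{-\|x-y\|_1/\sigma}$ is no longer symmetric in $(x,y)$ and therefore fails the literal Mercer hypothesis of Theorem~\ref{thm:generalization_general}. To address this I would revisit the proof of that theorem and verify that the Mercer condition enters only in bounding the Rademacher complexity of one of the dual potentials via its RKHS representation, and that when the cost is uniformly bounded (as $\tilde c$ is) this step can be replaced by a direct $L^\infty$-Rademacher bound of the same order, with $\sup_x e^{2\tilde c(x,Y)}$ playing the role of $\sup_x e^{2c(x,Y)}$. This preserves all numerical constants and delivers the stated inequality.
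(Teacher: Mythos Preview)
Your decomposition $c=\tilde c+a(y)$ with $a(y)=\|y-y_0\|_1/\sigma$ is fine, and you correctly identify that $e^{-\tilde c}$ is no longer a Mercer kernel. The gap is in the proposed fix: a uniform bound $|\tilde c|\le 2dD/\sigma$ does \emph{not} by itself yield a Rademacher complexity of order $n^{-1/2}$ for the class of dual potentials. Boundedness of the individual functions only gives the trivial bound $\hat\Rad_S(\cH)\le\sup_g\|g\|_\infty$, with no $1/\sqrt n$ factor. The $1/\sqrt n$ rate in Theorem~\ref{thm:generalization_general} comes precisely from the RKHS representation $u(y)=\langle w,\Phi(y)\rangle$ with $\|w\|_{\mathbb H}$ bounded; remove the kernel structure and you have no control over the complexity of $\{g_G:G\in\cG\}$ without imposing extra assumptions on $\cG$, which the corollary explicitly avoids. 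Moreover, if you try to retain the RKHS of the original Laplace kernel (which \emph{is} Mercer), you find that the shifted potential $\tilde g=g-a$ has the same associated $u(y)=e^{h(y)-g(y)}$ as the unshifted problem, and the bound then involves $K(Y_i,Y_i)=e^{2\sup_x\|x-Y_i\|_1/\sigma}$, whose expectation is infinite --- exactly the obstruction you started with.

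The paper's decomposition is different and avoids this problem. It sets $b(y)$ to be the coordinatewise clipping of $y$ onto $[-D,D]^d$ and uses the exact identity $|x_i-y_i|=|x_i-b(y)_i|+|b(y)_i-y_i|$ (valid because $|x_i|\le D$), so that $c(x,y)=\tilde c(x,y)+\|y-b(y)\|_1/\sigma$ with $\tilde c(x,y)=c(x,b(y))$. The point is not merely that $\tilde c$ is bounded, but that it \emph{factors through} the map $b:\RR^d\to\cX$. On $\cX\times\cX$ one has $\tilde c=c$, so the Mercer property is preserved there; and because the optimal $\tilde g(y)$ depends on $y$ only through $b(y)\in\cX$, the RKHS argument of Theorem~\ref{thm:generalization_general} runs with all kernel evaluations confined to $\cX$, where $\sup_x e^{2\tilde c(x,Y)}=\sup_{x\in\cX}e^{2\|x-b(Y)\|_1/\sigma}\le e^{4dD/\sigma}$ deterministically. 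Your subtraction of $\|y-y_0\|_1$ achieves boundedness but not this factorization, which is the missing idea.
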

\begin{proof}
    For the Laplace mechanism $c(x, y) = d\log(2\sigma)+\|x - y\|_1/\sigma.$ Let $(b(y))_i = \begin{cases}
        y_i&\text{if } |y_i|\leq D\\
        D&\text{if } y_i> D\\
        -D&\text{if } y_i< -D\\
    \end{cases},$
    namely $b(y)$ clips $y$ to the interval $[-D,D]$ coordinate-wise. Then $|x_i - y_i| = |x_i - b(y_i)| + |b(y_i) - y_i|,$ denote $\tilde c(x,y) = c(x, b(y)).$ Since 
    $$c(x, y) = \tilde{c}(x, y)+\|y-b(y)\|_1,$$
    where the term $\|b(y) - y\|_1$ does not depend on $x$ and, therefore, the coupling one can write 
    $$S_c(P_{G(Z)},P_Y) = S_{\tilde c}(P_{G(Z)},P_Y) + \EE_{Y\sim P_Y} \|Y - b(Y)\|_1,$$
    which leads to the following exess risk bound
    \begin{align*}
        \EE \bigl[S_c(\PP_{G_n(Z)},\PP_Y)- S_c(\PP_{G^*(Z)},\PP_Y)\bigr]&=
        \EE \bigl[S_{\tilde c}(\PP_{G_n(Z)},\PP_Y)- S_{\tilde c}(\PP_{G^*(Z)},\PP_Y)\bigr]
    \end{align*}
    Theorem \ref{thm:generalization_general} can now be applied to $S_{\tilde{c}}$ to result in 
    \begin{align*}
        \EE \bigl[S_c(\PP_{G_n(Z)},\PP_Y)- S_c(\PP_{G^*(Z)},\PP_Y)\bigr]&\leq \frac4{\sqrt{n}}\EE\l[\sup_{x\in\cX}e^{2 c(x, b(y))}\r]
        \leq \frac4{\sqrt{n}}\sup_{x\in\cX, y:\|y\|_{\infty}\leq D}e^{2\|x-y\|_1/\sigma}\leq \frac4{\sqrt{n}}e^{4dD/\sigma}
    \end{align*}
    Expressing the Wasserstein distance in terms of optimal transport using \eqref{eq:Wasserstein_reg} leads to 
    \begin{align*}
        \EE \bigl[W_{1,\sigma}(\PP_{G_n(Z)},\PP_Y)- W_{1,\sigma}(\PP_{G^*(Z)},\PP_Y)\bigr]&\leq  \frac{4\sigma}{\sqrt{n}}e^{4dD/\sigma}
    \end{align*}
\end{proof}
Note that to achieve $\epsilon$-differential privacy one needs to choose $\sigma\geq \epsilon/\sup_{x: P_X(x)>0}\|x\|_{1}$ and if, for example, the data is supported on $\cX=\{x\in\RR^d\mid \|x\|_{\infty}\leq D\}$ then to ensure $\epsilon$-LDP on needs to choose $\sigma=\sup_{x\in\cX}\|x\|_1/\epsilon = dD/\epsilon,$ which leads to 
	\begin{align*}
	\EE \bigl[W_{1,\sigma}(\PP_{G_n(Z)},\PP_Y)&- W_{1,\sigma}(\PP_{G^*(Z)},\PP_Y)\bigr]\leq \frac{4dD e^{4\epsilon}}{\epsilon\sqrt{n}}.
	\end{align*}
Note that in this case the curse of dimensionality is not only removed, but the excess risk scales linearly with the dimension. 
 
We next state the generalization result for the Gaussian mechanism which is proven in section \ref{sec:proof_generalization_gauss}. To formally state the sample complexity results, let us first recall some definitions. A distribution $\PP_X$ supported on a $d$-dimensional set $\cX$ is $\sigma^2$  sub-gaussian for  $\sigma\geq0$ if $\EE\exp\l(\|X\|^2/(2d\sigma^2)\r)\leq 2.$
Let $\sigma^2(X) \!\!=\! \min\{\sigma\!\!\geq0\!\l\vert \EE\exp(\|X\|^2/(2d\sigma^2))\leq 2\r.\}$
denote the sub-gaussian parameter of the distribution of $X.$ A set of generators $\cG$ is said to be star-shaped with a center at $0$ if a line segment between $0$ and $G\in\cG$ also lies in $\cG,$ i.e. 
\begin{align}
G\in\cG\; \Rightarrow \alpha G\in\cG, \forall\alpha\in[0,1]. \label{eq:assump_lin}
\end{align}
Note that these conditions are not very restricting. For example, the set of all linear generators, the set of linear functions with a bounded norm or a fixed dimension, the set of all L-Lipschitz functions or neural networks with a relu ($f(x) = \max(0,x)$) activation function at the last layer all satisfy \eqref{eq:assump_lin}.

\begin{thm} (Excess Risk of the Gaussian Mechanism) 
\label{thm:generalization_gauss}
Let
 $\PP_{Z}$ and $\PP_X$ be sub-gaussian, the support of  $P_X$ be $d$-dimensional, and the generator set $\cG$ consist of $L$-Lipschitz  functions, namely $\|G(Z_1) - G(Z_2)\|\leq L\|Z_1-Z_2\|$ for any $Z_1,Z_2\in\cZ.$ Assume additionally that $\cG$ satisfies \eqref{eq:assump_lin}. If $Y = M(X) = X + N$ is the Gaussian mechanism with noise scale $\sigma$ then for 
 $$G^* = \arg\min_{G\in\cG} W_{2, 2\sigma^2}(P_{G(Z)},P_Y),$$
 $$G_n = \arg\min_{G\in\cG} W_{2, 2\sigma^2}(P_{G(Z)},Q_Y^n), $$ where $Q_Y^n$ is the empirical distribution of $n$ i.i.d. samples from $P_Y$ it holds that
	\begin{align*}
	\EE \bigl[W_{2,2\sigma^2}(\PP_{G_n(Z)},\PP_Y)- W_{2,2\sigma^2}(\PP_{G^*(Z)},\PP_Y)\bigr]
    &\leq C_d\sigma^2 n^{-1/2}\bigl(1+(\tau^2(1+\sigma(X)/\sigma)^2)^{\lceil 5d/4\rceil+3}\bigr),
	\end{align*}
	where $\tau = \max\{L\sigma(Z)/\sigma(X),1\}$ and $C_d$ is a dimension dependent constant.\label{thm:generalization}
 \end{thm}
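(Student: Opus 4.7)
The plan is to follow a Rademacher-complexity route analogous to Theorem~\ref{thm:generalization_general}, but adapted to the unbounded sub-gaussian setting and specialized to the Gaussian kernel. First, using the optimality of $G_n$ for the empirical problem, the excess risk admits the usual two-sample bound
\begin{align*}
\EE\bigl[S_c(P_{G_n(Z)},P_Y) - S_c(P_{G^*(Z)},P_Y)\bigr] \leq 2\,\EE\sup_{G\in\cG}\bigl|S_c(P_{G(Z)},P_Y) - S_c(P_{G(Z)},Q_Y^n)\bigr|,
\end{align*}
with $c(x,y)=\|x-y\|^2/(2\sigma^2)$. By the Sinkhorn dual formulation, the inner difference reduces to an empirical-process deviation in the optimal dual potentials $(\phi_G^*,\psi_G^*)$ evaluated on $P_Y$ versus $Q_Y^n$, plus an analogous deviation of the Gibbs-exponential term $e^{\phi_G^*(x)+\psi_G^*(y)-c(x,y)}$. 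This parallels the decomposition used in \cite{mena2019statistical,reshetova2021understanding,stromme2023minimum} and reduces the problem to a Rademacher complexity indexed by $G\in\cG$.

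Next, I would characterize the class in which the optimal dual potentials $\{\psi_G^*\}_{G\in\cG}$ reside. Since $e^{-c(x,y)}$ is the Gaussian kernel of bandwidth $\sigma$---a Mercer kernel whose eigenfunctions are tensor products of Hermite polynomials with exponentially decaying eigenvalues---the Schr\"odinger fixed-point relation $e^{-\psi_G^*(y)} = \EE_{X\sim P_{G(Z)}}[e^{\phi_G^*(X)-c(X,y)}]$ implies that $\psi_G^*$ (up to an additive constant) lies in the reproducing kernel Hilbert space of this Gaussian kernel. The RKHS norm can be controlled through the sub-gaussian moments of $P_{G(Z)}$ and $P_Y$: since $Z$ is sub-gaussian with parameter $\sigma(Z)$ and $G$ is $L$-Lipschitz, $P_{G(Z)}$ is sub-gaussian with parameter at most $L\sigma(Z)$ (the star-shape assumption~\eqref{eq:assump_lin} is used here to keep $G(0)$ anchored and to allow shrinkage when needed), and $P_Y = P_X*\cN(0,\sigma^2 I)$ is sub-gaussian with parameter $\sqrt{\sigma(X)^2+\sigma^2}$. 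These estimates feed into the ratio $\tau=\max\{L\sigma(Z)/\sigma(X),1\}$ and the signal-to-noise factor $(1+\sigma(X)/\sigma)$ appearing in the final bound.

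To convert the RKHS-norm control into an explicit rate, I would bound the Rademacher complexity of the corresponding RKHS ball on $n$ sub-gaussian samples via the Mercer decomposition. Truncating the Hermite-polynomial expansion at a total degree proportional to $d$ splits the sum into a low-frequency part---whose Rademacher complexity is bounded by moments of Hermite polynomials under the sub-gaussian laws of $P_{G(Z)}$ and $P_Y$---and a high-frequency tail rendered negligible by the exponential eigenvalue decay. Optimizing the truncation level produces the exponent $\lceil 5d/4\rceil+3$ on $\tau^2(1+\sigma(X)/\sigma)^2$, the dimension-dependent prefactor $C_d$, and the parametric $n^{-1/2}$ rate. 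Finally, the rescaling identity $W_{p,\lambda} = \lambda S_{c/\lambda}$ converts the $S_c$ bound into the stated $W_{2,2\sigma^2}$ bound, producing the overall $\sigma^2$ prefactor.

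The hard part will be the third step: simultaneously (i) handling the richness of the generator class $\cG$ through its Lipschitz and star-shape structure, and (ii) carefully counting the contribution of Hermite tensor products up to an $\cO(d)$-truncation so that the RKHS-ball Rademacher complexity under sub-gaussian measures yields the precise polynomial-in-$\tau(1+\sigma(X)/\sigma)$ prefactor with the clean exponent $\lceil 5d/4\rceil+3$. This bookkeeping, together with the need to replace the crude worst-case $\sup_x e^{2c(x,Y)}$ argument used in the proof of Theorem~\ref{thm:generalization_general} (which blows up on unbounded support) by sub-gaussian tail estimates as already done in Corollary~\ref{col:generalization_lap} for the Laplace case via clipping, is where the bulk of the effort lies.
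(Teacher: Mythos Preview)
Your proposal takes a very different route from the paper. The paper's proof is essentially a two-line reduction: it invokes \cite[Theorem~6]{reshetova2021understanding} as a black box (restated here as Theorem~\ref{thm:generalization_lip}), which already gives the bound $C_d\lambda n^{-1/2}\bigl(1+(2\tau_0^2/\lambda)^{\lceil 5d/4\rceil+3}\bigr)$ with $\tau_0^2=\max\{L^2\sigma^2(Z),\sigma^2(Y)\}$ for any sub-gaussian target $P_Y$. It then specializes $\lambda=2\sigma^2$, uses $\sigma(Y)\leq\sigma(X)+\sigma$ for $Y=X+N$, and rewrites $\max\{L\sigma(Z),\sigma(X)+\sigma\}\leq\tau(\sigma(X)+\sigma)$ with $\tau=\max\{L\sigma(Z)/\sigma(X),1\}$ to obtain the stated form. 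No Rademacher machinery is redone here.

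What you are outlining is, in effect, a from-scratch proof of the cited Theorem~6 of \cite{reshetova2021understanding} (building on \cite{mena2019statistical}). That is a legitimate and more self-contained route, and your high-level plan is in the right spirit, but two technical points deserve caution. First, the Schr\"odinger relation places $e^{-\psi_G^*}$, not $\psi_G^*$ itself, in the Gaussian RKHS; one then passes back to $\psi_G^*$ via Talagrand's contraction after a suitable shift, exactly as in the proof of Theorem~\ref{thm:generalization_general}. Second, the exponent $\lceil 5d/4\rceil+3$ in \cite{mena2019statistical,reshetova2021understanding} arises from controlling derivatives of the dual potentials and polynomial approximation in a Sobolev-type norm, not from a direct Hermite eigen-truncation of the Gaussian kernel; getting the same constants via your Hermite bookkeeping would require reproducing that analysis rather closely. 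In short, your approach is not wrong, but it is substantially more laborious than the paper's, whose only contribution at this step is the parameter substitution.
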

The theorems show that the excess risk diminishes at the parametric rate (of order $1/\sqrt{n}$), which breaks the curse of dimensionality (convergence of order $n^{-\Omega(1/d)}$), often attributed to GANs.  We also observe that the excess risk is approximately linear in $\sigma^2$, the privatization noise scale, beyond a certain threshold ($\sigma^2>\sigma(X)^2)$). This implies that convergence for larger $\sigma^2$, corresponding to higher privacy, can be achieved by increasing the number of samples $n$.
 
 The above results show that the value of 
 the loss function under the empirical solution $G_n$ converges to the value of the loss function under the population solution $G^*$. However, this result does not directly relate $P_{G_n(Z)}$ to $P_X$. Next, we use Theorem~\ref{thm:generalization_gauss} and Corollary~\ref{col:KL_bound} to  upper bound the
 smoothed KL-divergence between $P_X$ and $P_{G_n(Z)}$.
\begin{col}\label{col:KL_bound}
If the target distribution can be generated, that is $P_X\in \{P_{G(Z)}\mid G\in\cG\},$ then
\begin{itemize}
    \item under the conditions of Corollary \ref{col:generalization_lap} one has
    \begin{align}
        \EE \bigl[D_{KL}(P_Y\|P_{G^{n}(Z)+N})\bigr]
        &\leq  \frac{4}{\sqrt{n}}e^{4dD/\sigma},\nonumber
    \end{align}
    where $N\sim f_N$ is the privatization noise of the Laplace mechanism, $f_N(x)\propto e^{-\|x\|_1/\sigma}$
    \item under the conditions of Theorem \ref{thm:generalization_gauss}  
        \begin{align}
            \EE \bigl[D_{KL}(P_Y\|P_{G^{n}(Z)+N}\bigr]
            &\leq C_d n^{-1/2}\bigl(1+(\tau^2(1+\sigma(X)/\sigma)^2)^{\lceil 5d/4\rceil+3}\bigr) \nonumber
        \end{align}
\end{itemize}
\end{col}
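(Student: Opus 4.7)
The plan is to reduce the KL bound to the excess-risk bounds already in hand by reusing the decomposition~\eqref{eq:Sc_to_KL} from the proof of Theorem~\ref{thm:denoising}. Recall that for the matched cost $c(x,y)=-\log p_M(y\mid x)$ that decomposition reads
\begin{align*}
S_c(P_{G(Z)},P_Y) = h(Y) + \inf_{\pi\in\Pi(P_{G(Z)},P_Y)}\EE_{U\sim P_{G(Z)}}D_{KL}(\pi_{Y\mid U}(\cdot\mid U)\,\|\,p_M(\cdot\mid U)).
\end{align*}
Under realizability, $P_{M(G^*(Z))}=P_{M(X)}=P_Y$, so the conditional $\pi_{Y\mid U}=p_M(\cdot\mid U)$ yields a feasible coupling with zero conditional KL, whence $S_c(P_{G^*(Z)},P_Y)=h(Y)$ and the excess risk collapses to exactly the infimum above evaluated at $G_n$.

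The second step is to pass from this infimum over couplings to the desired $D_{KL}(P_Y\,\|\,P_{G_n(Z)+N})$ via the chain rule / data-processing inequality for KL divergence. For any feasible $\pi\in\Pi(P_{G_n(Z)},P_Y)$, since the $U$-marginals of $\pi$ and $P_{G_n(Z)}\otimes p_M(\cdot\mid U)$ coincide, the inner expectation equals $D_{KL}(\pi\,\|\,P_{G_n(Z)}\otimes p_M)$, which dominates $D_{KL}(\pi_Y\,\|\,P_{M(G_n(Z))})$ by data processing. Using $\pi_Y=P_Y$ (feasibility) together with $P_{M(G_n(Z))}=P_{G_n(Z)+N}$ (additive noise), and then taking the infimum over $\pi$, I obtain
\begin{align*}
D_{KL}(P_Y\,\|\,P_{G_n(Z)+N})\;\le\;S_c(P_{G_n(Z)},P_Y)-S_c(P_{G^*(Z)},P_Y).
\end{align*}

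The last step is bookkeeping with the regularization scale. For both mechanisms the cost takes the form $c(x,y)=\text{const}(\sigma,d)+\|x-y\|_p^p/(p\sigma^p)$ with $p\in\{1,2\}$, so the constant cancels in the difference, leaving
\begin{align*}
S_c(P_{G_n(Z)},P_Y)-S_c(P_{G^*(Z)},P_Y)=\frac{1}{p\sigma^p}\Bigl(W_{p,p\sigma^p}(P_{G_n(Z)},P_Y)-W_{p,p\sigma^p}(P_{G^*(Z)},P_Y)\Bigr).
\end{align*}
Taking expectations and inserting Corollary~\ref{col:generalization_lap} ($p=1$, $p\sigma^p=\sigma$) yields the Laplace bound $4e^{4dD/\sigma}/\sqrt{n}$; inserting Theorem~\ref{thm:generalization_gauss} ($p=2$, $p\sigma^p=2\sigma^2$) yields the Gaussian bound after absorbing the resulting $1/2$ into $C_d$. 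I do not anticipate a substantive obstacle: the only genuinely new ingredient is recognizing that realizability forces $S_c(P_{G^*(Z)},P_Y)=h(Y)$, so the excess risk equals a conditional-KL infimum; everything else is a standard chain-rule argument and scale bookkeeping.
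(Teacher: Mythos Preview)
Your proposal is correct and follows essentially the same route as the paper. The paper isolates the inequality $D_{KL}(P_Y\|P_{G(Z)+N})\le \tfrac{1}{p\sigma^p}\bigl(W_{p,p\sigma^p}(P_{G(Z)},P_Y)-W_{p,p\sigma^p}(P_X,P_Y)\bigr)$ as a separate lemma (Lemma~\ref{lemma:1}), proved via exactly the ingredients you use---the decomposition~\eqref{eq:Sc_to_KL}, the KL chain rule, and data processing---and then observes that under realizability $P_{G^*(Z)}=P_X$, so the right-hand side becomes the excess risk to which Corollary~\ref{col:generalization_lap} and Theorem~\ref{thm:generalization_gauss} apply.
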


Note that the parametric convergence of the smoothed KL-divergence results in the convergence of the smoothed Wasserstein distance \cite{goldfeld2020convergence}, 
which is, in turn, a distance metrizing weak convergence similar to $W_p.$

We note that known results on the sample complexity of entropic optimal transport 
are either only applicable to $c(x,y) \propto \|x-y\|_2^2$ \cite{mena2019statistical} or require the cost function to be $\infty$-differentiable \cite{genevay2019sample,luise2020generalization}, both of which assumptions do not apply to the Laplace setting. \cite{stromme2023minimum} proves sample complexity bounds for entropic optimal transport but with bounded cost, and most importantly, extending their result to the GAN setting where one of the distributions depends on the sample would require restricting the set of the generators to have a small complexity (VC-dimension or Rademacher complexity for example), which is not common in practice and is hard to compute. Our result, on the other hand, does not depend on the complexity of the set of generators, provided that their output has a bounded norm. \textcolor{black}{We achieve this by bounding the Rademacher complexity of the set of one of the dual potentials. Similar to \cite{genevay2019sample}, we invoke the bound involving the RKHS norms, but instead of using the smoothness of the dual potentials and the RKHS of the Sobolev space (which requires at least $\lceil d/2\rceil$ continuous differentiability), we rely on the fact that in the case of Laplace distribution the privatization mechanism's pdf is a Mercer kernel, which allows us to use its RKHS norms without requiring smoothness. This technique can also be used to provide sample complexity bounds for entropic 1-Wasserstein distance.} Moreover, the properties of $\ell_1$ norm allow us to eliminate the dependence on the tails of the privatized data distribution, which is only sub-exponential and does not concentrate as well as sub-gaussian distributions. The proof of the theorem as well as all other proofs are deferred to section \ref{sec:proofs}.

\section{Experimental Results}
We first describe the approach we used to privatize the data and train the GAN, and then present the experimental results.\\
{\bf Data Privatization}
We conduct experiments for both the Laplace and Gaussian data privatization mechanisms. 
We set $f(x) = \text{vec}(x),$ where $\text{vec}(x)$ -- is the vectorization of $x$, specifically if $x\in\RR^d$ is a vector then $f(x) = x$ and if it is a matrix $x\in \RR^{d_1\times d_2}$  then $f(x)\in\RR^{d}, d=d_1d_2$ in the definitions of the Laplace and Gaussian privatization mechanisms. 

For the Laplace mechanism and LDP budget $\epsilon$ we set the $\ell_1$ sensitivity to be $\Delta = \sup_{x,x'\in\cX}\|f(x)-f(x')\|_1$ and the noise scale $\sigma = \Delta/\epsilon,$ so $Y = f(X) + Z,$ where $Z_i\sim \text{Laplace}(0, \sigma)$ i.i.d. for each coordinate $i\in\{1,\ldots,d\}.$ Similarly, for the Gaussian mechanism and LDP budget $\epsilon$ we set the $\ell_2$ sensitivity to be $\Delta = \sup_{x\in\cX}\|f(x)-f(x')\|_2$ and the noise scale $\sigma$ thet satisfies \eqref{eq:gaussian_eps} so $Y = f(X) + Z,$ where $Z_i\sim \cN(0, \sigma^2)$ i.i.d. for each coordinate $i\in\{1,\ldots,d\}.$ 

{\bf GAN training}
For training the Sinkhorn GAN we follow the work of \cite{genevay2018learning} by using Sinkhorn-Knopp algorithm \cite{flamary2021pot} to approximate the optimal transport plan $\pi$ in \eqref{eq:Wasserstein_reg_obj} from the mini-batches of size $b$ both for the generated and privatized training data. The algorithm is stated here for completeness, where $\theta$ stands for the parameter of the Generator, i.e. $\cG = \{G_{\theta}:\cZ\to\cX\mid \theta\in \Theta\}$.
\begin{algorithm}[htbp]
    \newcommand{\eqdef}{:=}
	    \caption{\label{alg:EWGAN}Training GAN with $W_{p, p\sigma^p}$}
	 \begin{algorithmic}
	 	\Require $\theta_0$, $\cD=\{y_i\}_{i=1}^n$ (the privatized training data), $b$ (batch size), $L$ (number of Sinkhorn iterations), $\alpha$ (learning rate), $c(\cdot,\cdot)$ (the cost function)
		\Ensure $\theta$
	\State $\theta  \leftarrow \theta_0$
	\For{$t = 1,2,\dots$}
        \State Sample $\{y_i\}_{i=1}^b$ i.i.d. from the dataset $\cD$, $Q_Y^b \eqdef \frac1b\sum_{i=1}^b\delta_{y_i}$
        \State Sample $\{z_i\}_{i=1}^{b} \overset{\text{i.i.d}}{\sim} P_\cZ$, $Q_G^b \eqdef \frac1b\sum_{i=1}^b\delta_{G_\theta(z_i)}$
        \State Calculate the optimal transport plan for $S_c(Q_G^b,Q_Y^b)$ with $L$ Sinkhorn-Knopp steps\\
        $\pi \approx \underset{\pi\in\Pi(Q_G^b,Q_Y^b)}{\arg\min}\mathbb{E}_{({U},Y)\sim\pi}[c({U},Y)] + I_{\pi}({U)}, Y)$
        \State $C_{ij} \leftarrow c(y_i,G_\theta(z_j))$ for $i, j=1,\ldots, b$
		\State $g_t \leftarrow  \nabla_\theta\left\langle \pi, C\right\rangle$
		\State $\theta \leftarrow \theta - \alpha g_t$.
	\EndFor 
	\end{algorithmic}
	\end{algorithm}
   {\bf Dataset and architecture} We train our models on synthetic data as well as MNIST data \cite{lecun1998mnist}, consisting of 60000 grayscale images of handwritten digits. We do not use the labels to mimic a fully unsupervised training scenario. The generator model for MNIST is DCGAN from \cite{radford2015unsupervised} with latent space dimension $100.$ All the losses were used in the primal formulations \eqref{eq:Wasserstein},\eqref{eq:Wasserstein_reg} with optimization over the coupling matrix.

{
\begin{remark}
    Note that since the DP noise is added to the training data, even if the training algorithm is an iterative process, the final privacy guarantee does not depend on (1) the number of rounds and (2) the specific privacy accountant or composition theorem used.
\end{remark}
}
   
\subsection{Synthetic Data}\label{sec:synthetic_data}
We first test our method on synthetic data. In this experiment, we sample data uniformly from a two-dimensional manifold shaped as a half-circle of radius 1 and we assume the support $\cX$ is known to be the half circle, so that the $\ell_1$ sensitivity is $2$ and the $\ell_2$ sensitivity is $\sqrt{2}$. The $400000$ sampled points are privatized with Laplace or Gaussian noise and then the Entropy-regularized Wasserstein GAN for the corresponding $p$ is trained on the privatized data using algorithm \ref{alg:EWGAN}. We used 2-dimensional latent noise uniform in $[-1,1]^2$ and a small Neural Network with 2 hidden layers and 256 neurons on each hidden layer. We trained it with batch gradient descent using RMSprop optimizer with a learning rate of $10^{-4}.$ The entropy-regularized Wasserstein distance was calculated with geomloss library \cite{peyre2019computational} for the full dataset  in each iteration ($b=n$ in algorithm \ref{alg:EWGAN}) with scaling parameter set to $0.99$. Figure \ref{fig:synthetic_halfcircle} shows the learned manifolds with data privatized with Laplace mechanism $\eps = 5$ and entropic $1$-Wasserstein loss \eqref{eq:Wasserstein_reg_obj} (top) and with data privatized with the Gaussian mechanism $\eps = 5, \delta=10^{-4}$ and entropic $2$-Wasserstein loss \eqref{eq:Wasserstein_reg_obj} (bottom). We note that for both the Laplace and Gaussian mechanisms entropic regularization allows to recover the original domain of the data (columns (a) and (b)), even-though noise in both cases appears to be large enough to completely obfuscate the data domain (column (b)). Without regularization (column (c)), the model generates the privatized distribution and fails to recover the original domain. 
In figures \ref{fig:synthetic_ellipsis_rectangle} we provide the results for ellipsis and rectangular-shaped manifolds, which show similar behavior.
\begin{figure}[htbp]
\newcommand{\w}{0.28}
\newcommand{\f}{1.2}
\newcommand{\lt}{\!\!\!\!\!\!\!\!\!}
\begin{subfigure}[t]{\w\linewidth}
\vspace{-3ex}
\lt\includegraphics[width=\f\linewidth]{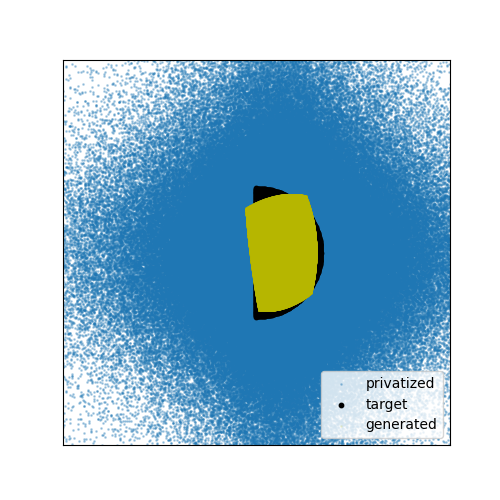}
\end{subfigure}\hfill
\begin{subfigure}[t]{\w\linewidth}
\vspace{-3ex}
\lt\includegraphics[width=\f\linewidth]{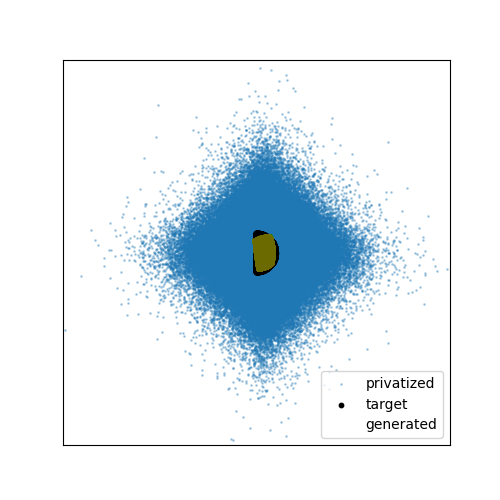}
\end{subfigure}\hfill
\begin{subfigure}[t]{\w\linewidth}
\vspace{-3ex}
\lt\includegraphics[width=\f\linewidth]{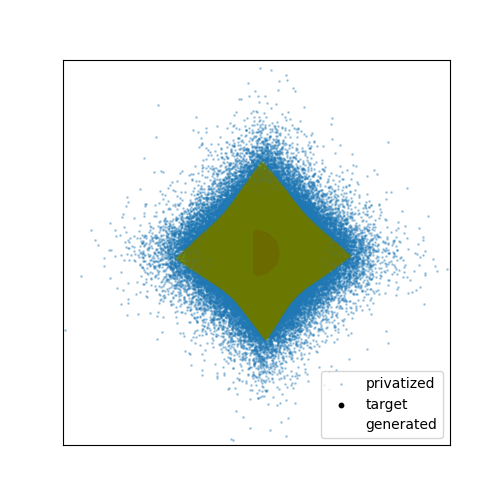}
\end{subfigure}\\[-4.2ex]
\begin{subfigure}[t]{\w\linewidth}
\lt\includegraphics[width=\f\linewidth]{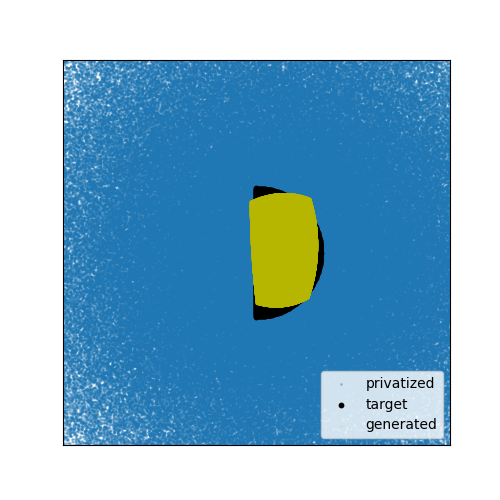}\\[-6ex]
\caption{\footnotesize{entropic~$p$-WGAN~(eq.~\eqref{eq:Wasserstein_reg_obj})}}\label{subfig:entropic}
\end{subfigure}\hfill
\begin{subfigure}[t]{\w\linewidth}
\lt\includegraphics[width=\f\linewidth]{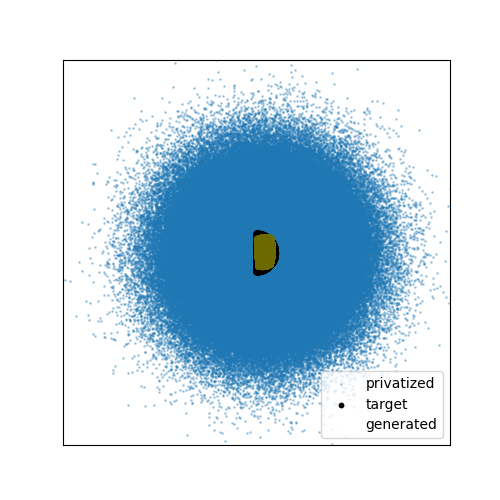}\\[-6ex]    
\caption{\footnotesize{entropic~$p$-WGAN~(eq.~\eqref{eq:Wasserstein_reg_obj}),~enlarged}}
\label{subfig:enlarged}
\end{subfigure}\hfill
\begin{subfigure}[t]{\w\linewidth}
\lt\includegraphics[width=\f\linewidth]{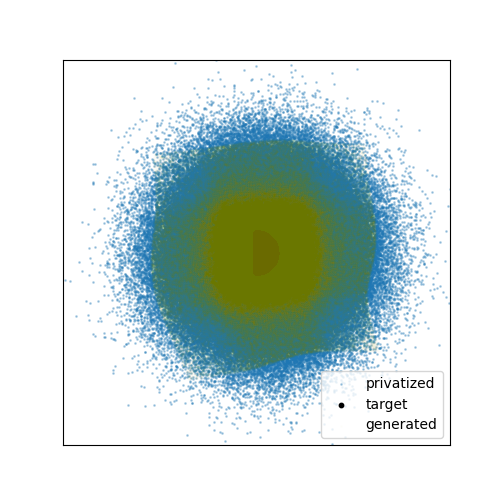}\\[-6ex]
\caption{\footnotesize{unregularized $p$-WGAN~(eq.~\eqref{eq:GAN})}}\label{subfig:unreg_halfcirc}
\end{subfigure}
\caption{Learning data from the half-circle-shaped manifold privatized with Laplace mechanism $\eps = 5$ (top) and Gaussian mechanism $\eps = 5, \delta=10^{-4}$ (bottom). Columns (a) and (b) show the manifold learned with entropic $p$-Wasserstein GAN \eqref{eq:Wasserstein_reg_obj}, and column (c) shows the manifold learned with unregularized $p$-Wasserstein GAN \eqref{eq:GAN_general}. Note $p=1$ for Laplace mechanism and $p=2$ for Gaussian mechanism.}
\label{fig:synthetic_halfcircle}
\end{figure}

We note that the $\eps=5$ local differential privacy guarantee obtained with the Laplace mechanism can be translated to a central privacy guarantee by leveraging  privacy amplification by shuffling. 
Since the distribution of the output of algorithm \ref{alg:EWGAN} does not depend on the order of the samples in the privatized data (due to the random sampling) and because the local privatization mechanism only depends on the data at the client and no auxiliary input, one can assume that the data is shuffled before privatization, which allows to apply \cite[Theorem 3.2]{feldman2023stronger}, resulting in a $(\eps_c=0.353, \delta_c=10^{-6})$ central differential privacy guarantee for the Laplace mechanism. 
\begin{figure}[htbp]
\newcommand{\w}{0.28}
\newcommand{\f}{1.2}
\newcommand{\lt}{\!\!\!\!\!\!\!\!\!}
\begin{subfigure}[t]{\w\linewidth}
\vspace{-3ex}
\lt\includegraphics[width=\f\linewidth]{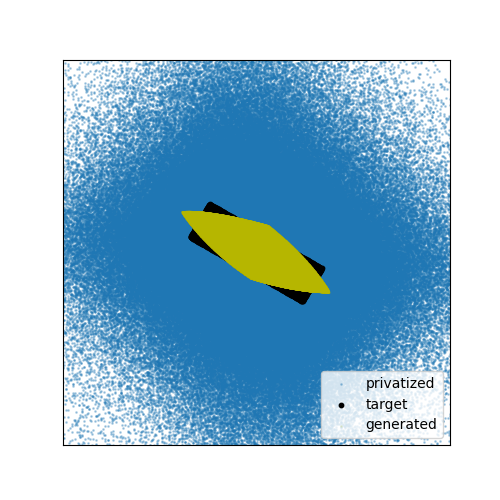}
\end{subfigure}\hfill
\begin{subfigure}[t]{\w\linewidth}
\vspace{-3ex}
\lt\includegraphics[width=\f\linewidth]{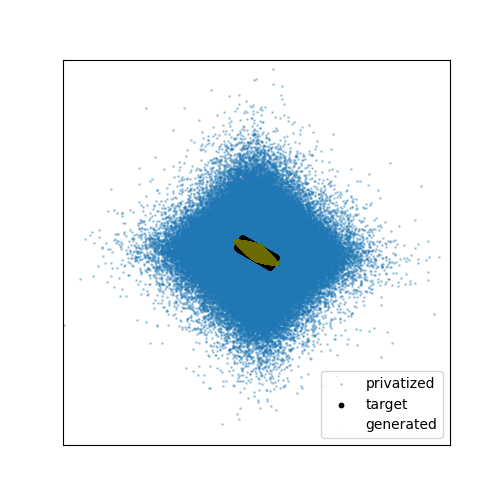}
\end{subfigure}\hfill
\begin{subfigure}[t]{\w\linewidth}
\vspace{-3ex}
\lt\includegraphics[width=\f\linewidth]{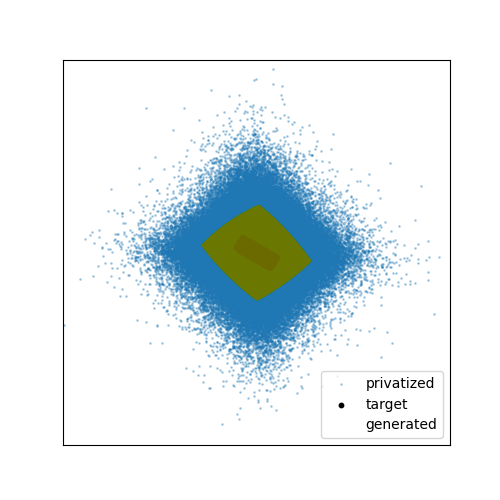}
\end{subfigure}\\[-4.2ex]
\begin{subfigure}[t]{\w\linewidth}
\lt\includegraphics[width=\f\linewidth]{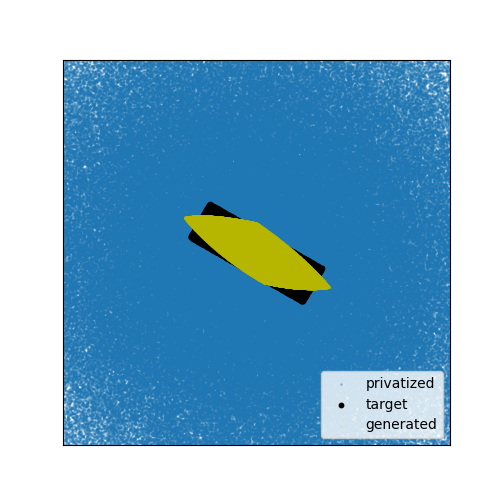}
\end{subfigure}\hfill
\begin{subfigure}[t]{\w\linewidth}
\lt\includegraphics[width=\f\linewidth]{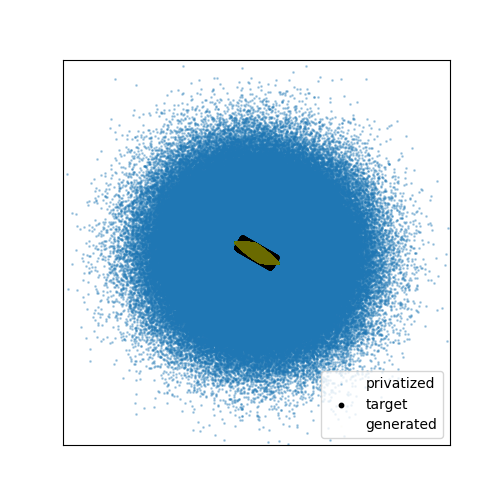}
\end{subfigure}\hfill
\begin{subfigure}[t]{\w\linewidth}
\lt\includegraphics[width=\f\linewidth]{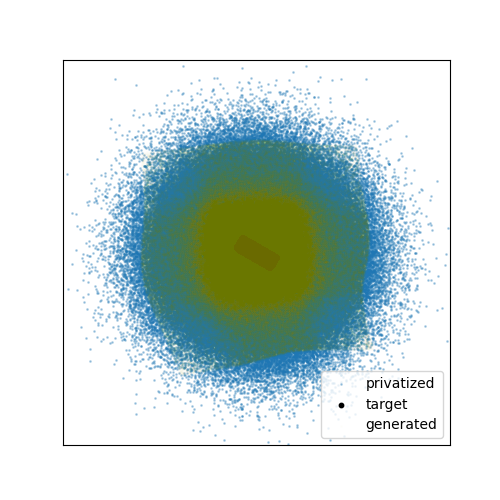}
\end{subfigure}\\[-4.2ex]
\begin{subfigure}[t]{\w\linewidth}
\lt\includegraphics[width=\f\linewidth]{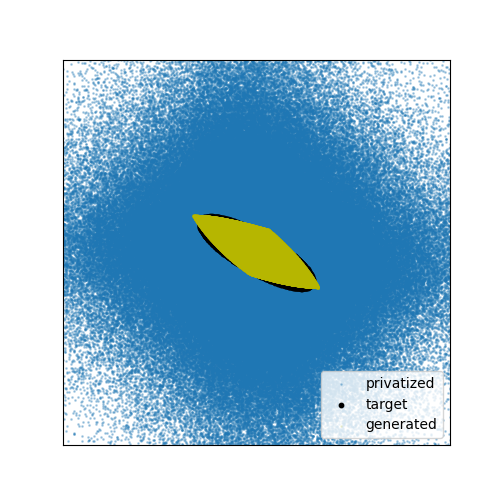}
\end{subfigure}\hfill
\begin{subfigure}[t]{\w\linewidth}
\lt\includegraphics[width=\f\linewidth]{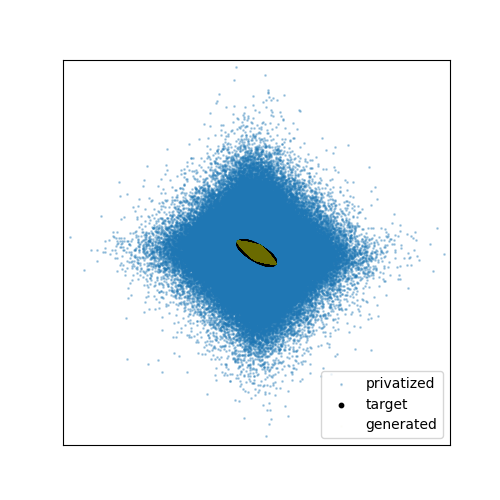}
\end{subfigure}\hfill
\begin{subfigure}[t]{\w\linewidth}
\lt\includegraphics[width=\f\linewidth]{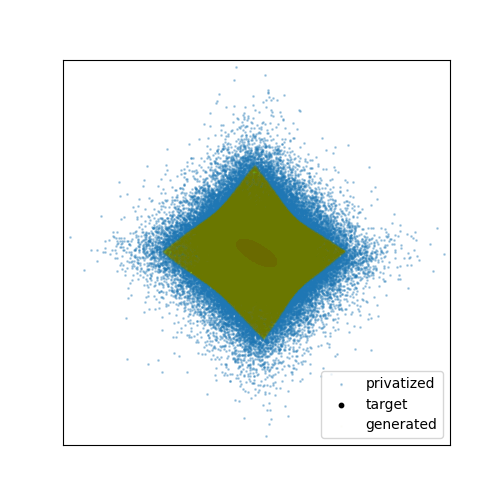}
\end{subfigure}\\[-4.2ex]
\begin{subfigure}[t]{\w\linewidth}
\lt\includegraphics[width=\f\linewidth]{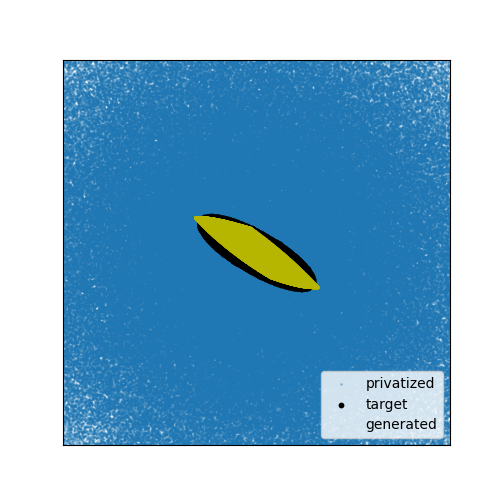}\\[-6ex]\caption{\footnotesize{entropic~$p$-WGAN~(eq.~\eqref{eq:Wasserstein_reg_obj})}}
\end{subfigure}\hfill
\begin{subfigure}[t]{\w\linewidth}
\lt\includegraphics[width=\f\linewidth]{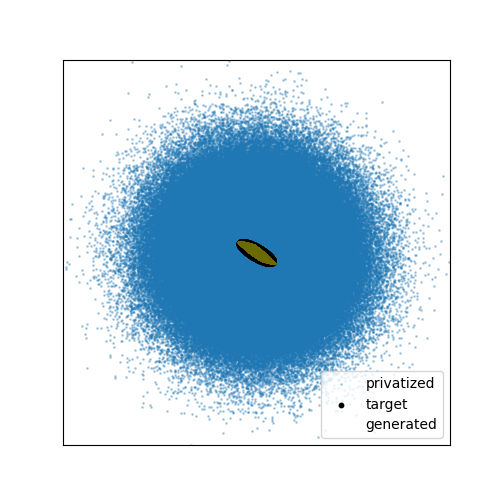}\\[-6ex]    
\caption{\footnotesize{entropic~$p$-WGAN~(eq.~\eqref{eq:Wasserstein_reg_obj}),~enlarged}}
\end{subfigure}\hfill
\begin{subfigure}[t]{\w\linewidth}
\lt\includegraphics[width=\f\linewidth]{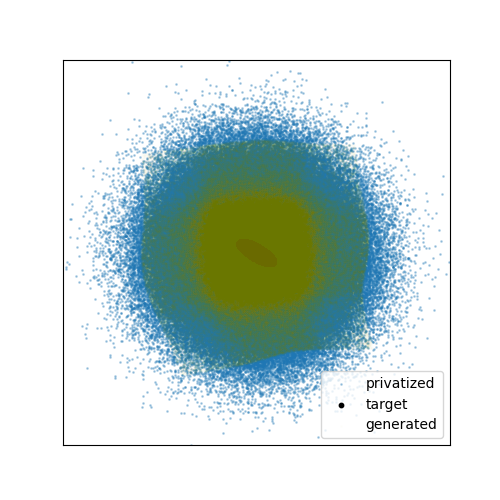}\\[-6ex]\caption{\footnotesize{unregularized $p$-WGAN~(eq.~\eqref{eq:GAN})}}\label{subfig:unreg_ellips}
\end{subfigure}
\caption{Learning data from the rectangular (top 2 rows) and ellipsis-shaped (bottom 2 rows) manifolds privatized with Laplace mechanism $\eps = 5$ (rows 1,3) and Gaussian mechanism $\eps = 5, \delta=10^{-4}$ (rows 2,4). Columns (a) and (b) show the manifold learned with entropic $p$-Wasserstein GAN \eqref{eq:Wasserstein_reg_obj}, and column (c) shows the manifold learned with unregularized $p$-Wasserstein GAN \eqref{eq:GAN_general}. Note $p=1$ for Laplace mechanism and $p=2$ for Gaussian mechanism.}
\label{fig:synthetic_ellipsis_rectangle}
\end{figure}

\subsection{MNIST: Comparison with denoising} \label{sec:MNISTvsdenoiding}
We next provide our experimental results with MNIST \cite{lecun1998mnist} and DCGAN \cite{radford2015unsupervised} generator. 
The pixel values of the images were rescaled to $[-1,1]$ leading to $\Delta = 2\times 28^2$  $\ell_1$ sensitivity and $\Delta=56$ $\ell_2$ sensitivity.

We use $100$-dimensional Uniform $[0,1]$ noise at the input to the generator ($\cP_z = \text{Unif}[0,1]^{100}$). In Figure \ref{fig:wavelet}, we show two raw samples from the MNIST dataset (column (a)) and the corresponding privatized images (column (b)). In column (c), we denoise the privatized images in the second column with wavelet transform \cite{mallat1999wavelet}; the results indicate that the wavelet transform can not be used to recover the images. Here, the wavelet transform parameters for denoising (the wavelet basis, the level and reconstruction thresholds) were optimized to minimize the average distance between the reconstructed and original image under the particular noise instance, thus providing better results than one would expect in a fully privatized setting. 
In  column (d), we used the noise2void \cite{krull2019noise2void} image denoising mechanism with parameters as  given in the paper and trained it on the whole dataset of privatized images, and showed that it also failed to reconstruct the images. These experiments suggest that the privatization noise is large enough to preclude reconstruction of the original images.
Column (e) shows samples generated by GANs trained with $p$-Wasserstein loss (without entropic regularization) that fails to learn from privatized data.  
Finally, in (f) we provide samples obtained  by our method stated in algorithm \ref{alg:EWGAN} in Figure \ref{fig:wavelet}. Note that while the generated images by the entropic GAN are not perfect for the chosen privacy levels, the results do suggest that the model has learned to generate new images of digits. For training the entropic $p$-WGAN we used  $400$ Sinkhorn-Knopp iterations and the Adam optimizer with learning rate $10^{-4}$ for $100$ epochs. 

\begin{figure}[htbp]
\newcommand{\w}{0.15}
\begin{subfigure}[t]{\w\linewidth}
\includegraphics[width=\linewidth]{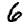}
\end{subfigure}\hfill
\begin{subfigure}[t]{\w\linewidth}
\includegraphics[width=\linewidth]{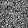}
\end{subfigure}\hfill
\begin{subfigure}[t]{\w\linewidth}
\includegraphics[width=\linewidth]{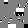}
\end{subfigure}\hfill
\begin{subfigure}[t]{\w\linewidth}
\includegraphics[width=\linewidth]{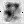}
\end{subfigure}\hfill
\begin{subfigure}[t]{\w\linewidth}
\includegraphics[height=\linewidth]{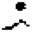}
\end{subfigure}\hfill
\begin{subfigure}[t]{\w\linewidth}
\includegraphics[height=\linewidth]{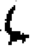}
\end{subfigure}%

\begin{subfigure}[t]{\w\linewidth}
\includegraphics[width=\linewidth]{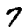}
\subcaption{}
\label{fig:wavelet_raw}
\end{subfigure}\hfill
\begin{subfigure}[t]{\w\linewidth}
\includegraphics[width=\linewidth]{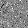}
\subcaption{}
\label{fig:wavelet_privatized}
\end{subfigure}\hfill
\begin{subfigure}[t]{\w\linewidth}
\includegraphics[width=\linewidth]{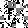}
\subcaption{}
\end{subfigure}\hfill
\begin{subfigure}[t]{\w\linewidth}
\includegraphics[width=\linewidth]{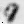}
\subcaption{}
\label{fig:wavelet_n2v}
\end{subfigure}\hfill
\begin{subfigure}[t]{\w\linewidth}
\includegraphics[height=\linewidth]{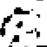}
\subcaption{}
\label{fig:wavelet_WGAN}
\end{subfigure}\hfill
\begin{subfigure}[t]{\w\linewidth}
\includegraphics[height=\linewidth]{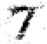}
\subcaption{}
\label{fig:wavelet_EWGAN}
\end{subfigure}
\caption{Learning with LDP privatized images: (a) raw images from MNIST dataset, (b) image samples privatized with Gaussian $(\eps, \delta) = (35,10^{-4})$ (top) and  Laplace mechanism $\eps=196$ (bottom), (c) images from the second column denoised with the wavelet transform and (d) the Noise2Void \cite{krull2019noise2void}, (e) random samples from the output distribution of the unregularized $p$-Wasserstein GAN \eqref{eq:GANemp}, and (f) generated samples from the entropy-regularized $p$-Wasserstein GAN trained on the privatized data  (picked to represent the same digit as in column (a))}\label{fig:wavelet}
\end{figure}

In Figures \ref{tab:wavelet_lap} and \ref{tab:wavelet_gauss} we provide additional 200 uniformly sampled images from the Entropic $p$-WGAN trained on MNIST data privatized with the corresponding mechanism together with the privacy parameter. The setting of the experiment is the same as in section~\ref{sec:emporical_convergence}.

\begin{figure}[htbp]
    \newcommand{\adj}[1]{\raisebox{0.15\textwidth}[\height][\depth]{#1}}
    \newcommand{\vsp}{-1em}
    \centering
    \begin{minipage}{\linewidth}
        \adj{\rotatebox[origin=c]{90}{$\eps=1568$}}
        \begin{subfigure}[b]{0.3\textwidth}
            \includegraphics[width=\textwidth]{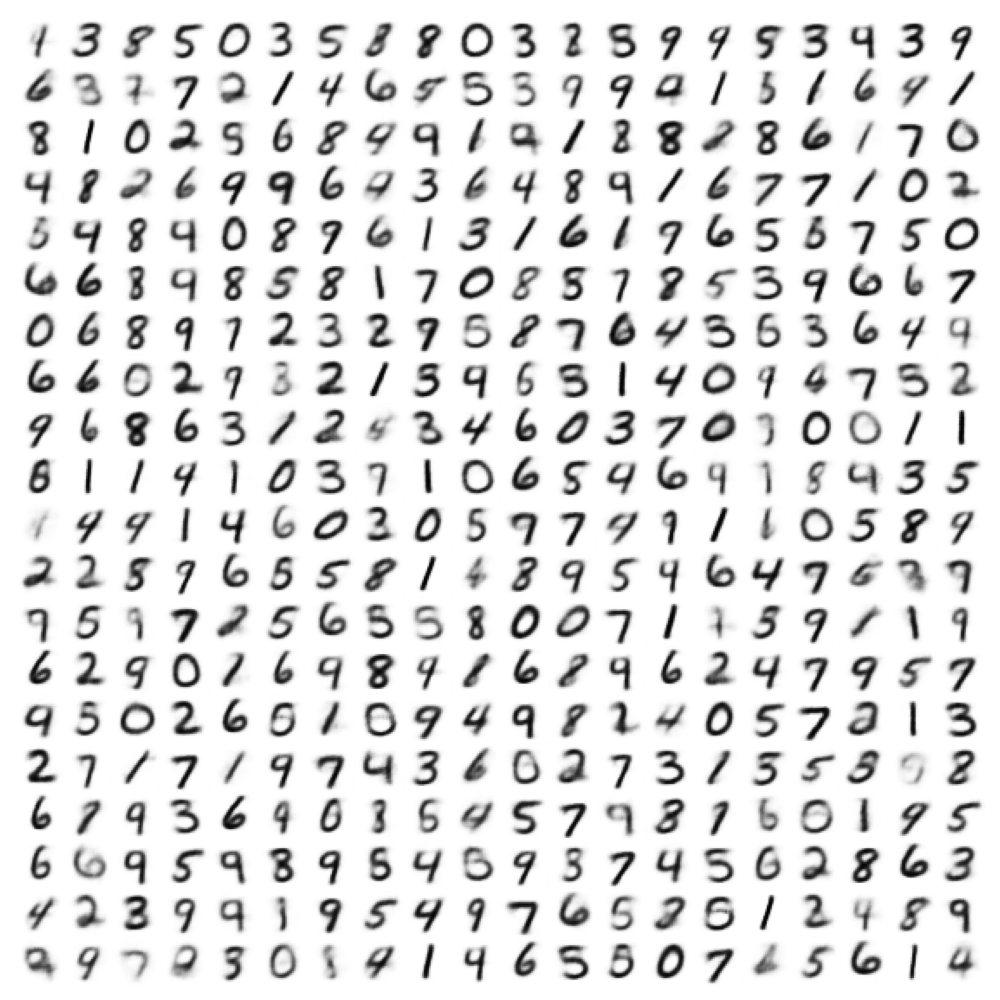}
        \end{subfigure}
        \begin{subfigure}[b]{0.3\textwidth}
            \includegraphics[width=\textwidth]{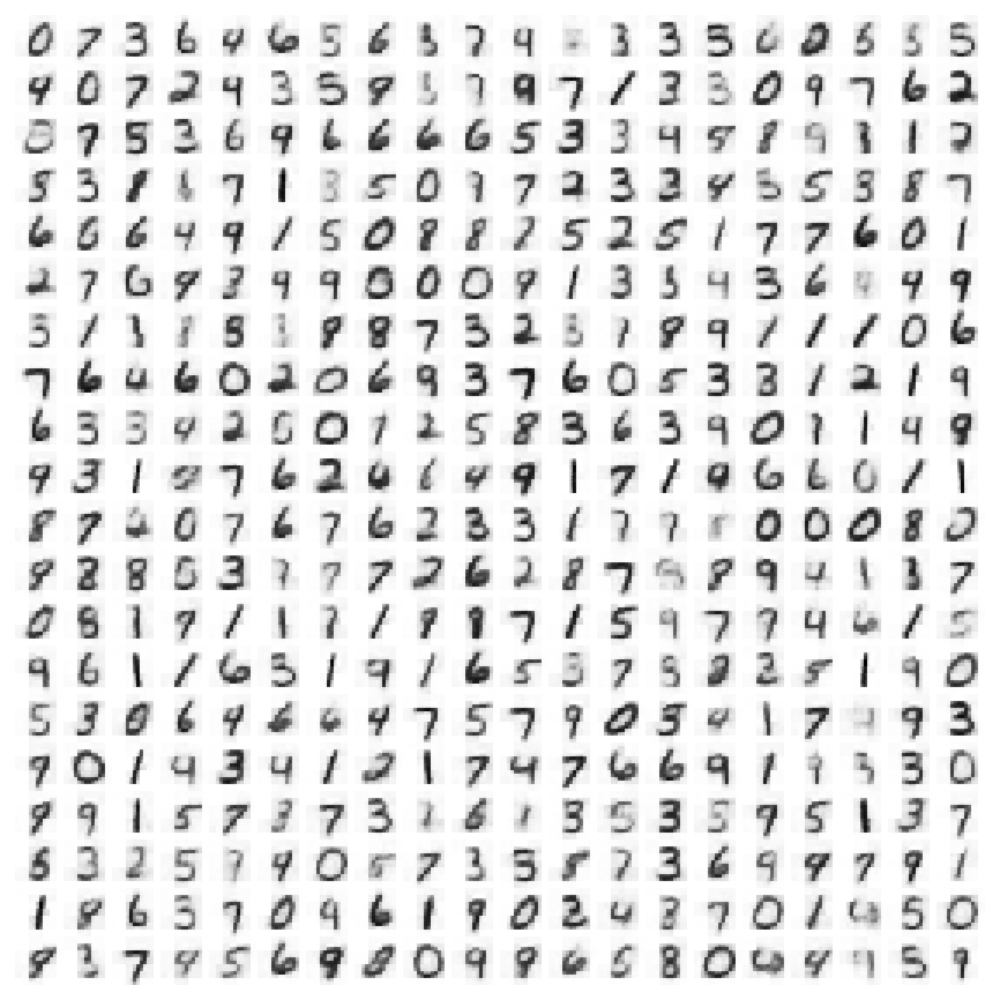}
        \end{subfigure}
        \begin{subfigure}[b]{0.3\textwidth}
            \includegraphics[width=\textwidth]{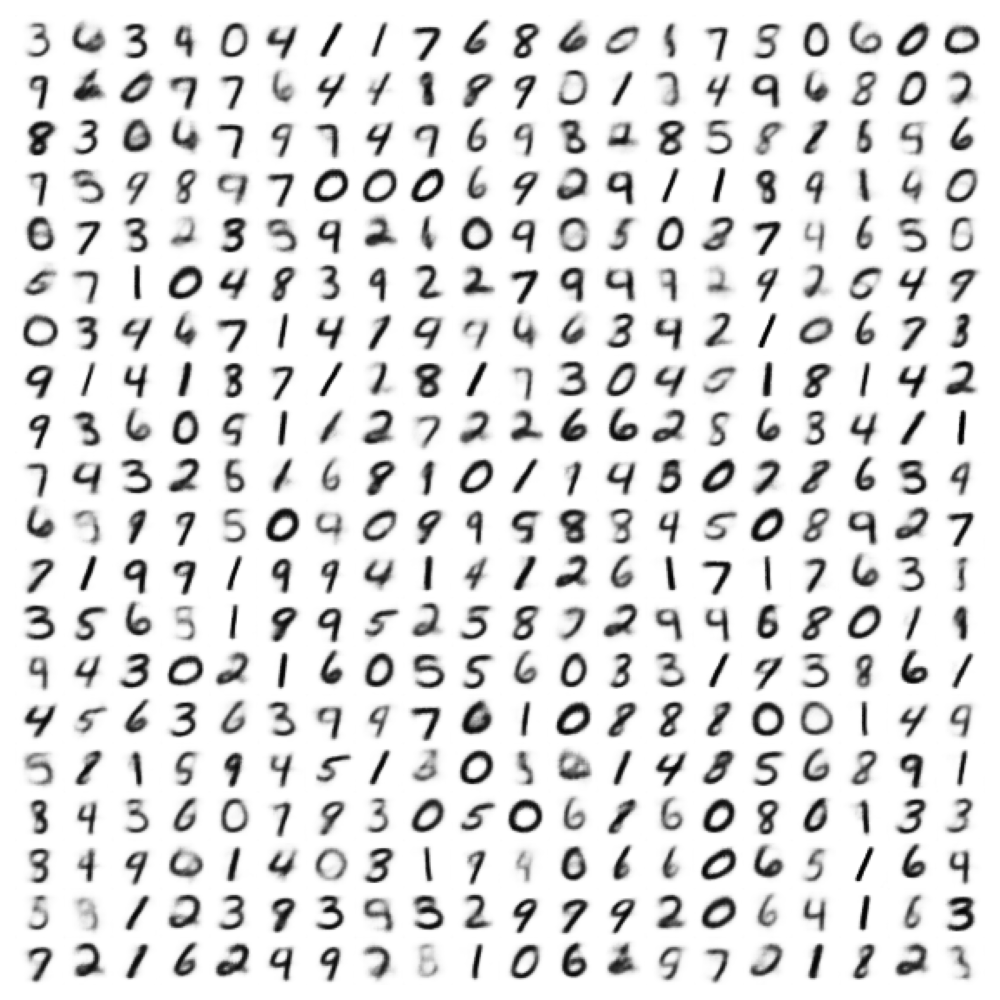}
        \end{subfigure}
    \vspace{\vsp}
    \end{minipage}
    \begin{minipage}{\linewidth}
        \adj{\rotatebox[origin=c]{90}{$\eps=196$}}
        \begin{subfigure}[b]{0.3\textwidth}
            \includegraphics[width=\textwidth]{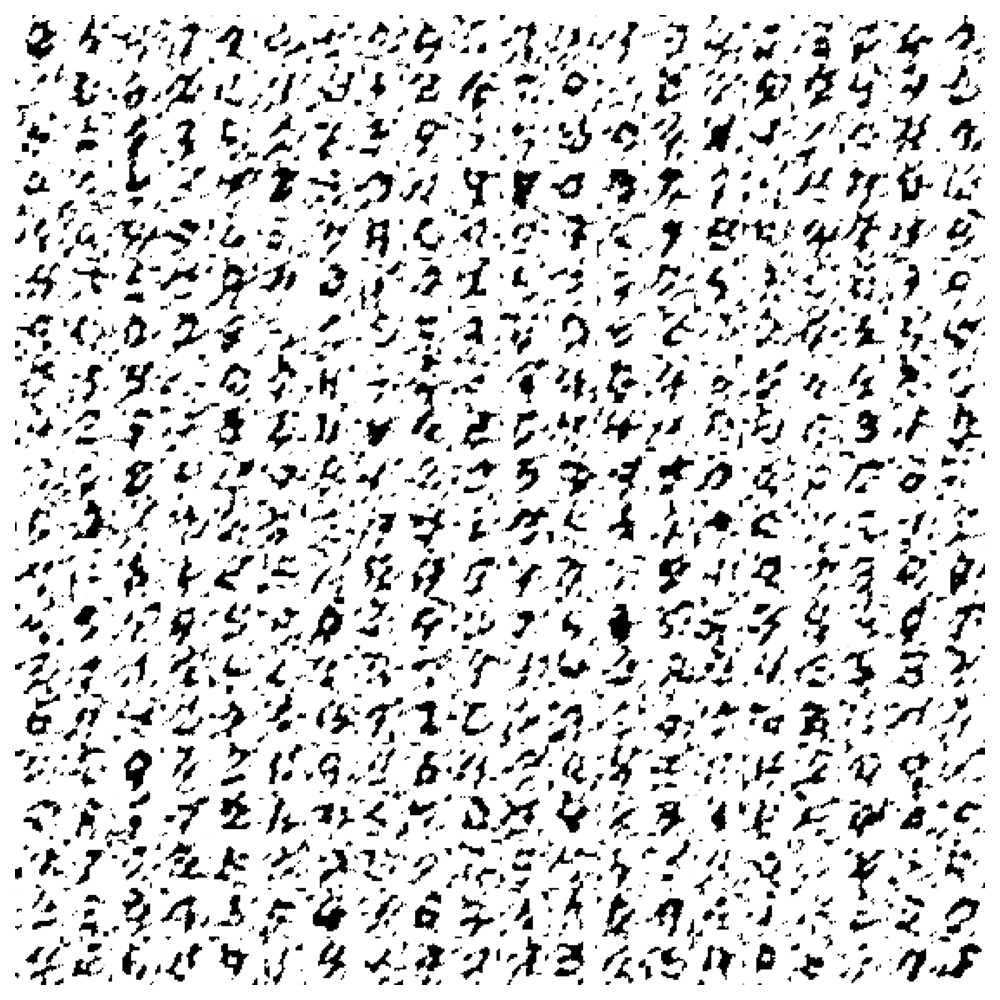}
        \end{subfigure}
        \begin{subfigure}[b]{0.3\textwidth}
            \includegraphics[width=\textwidth]{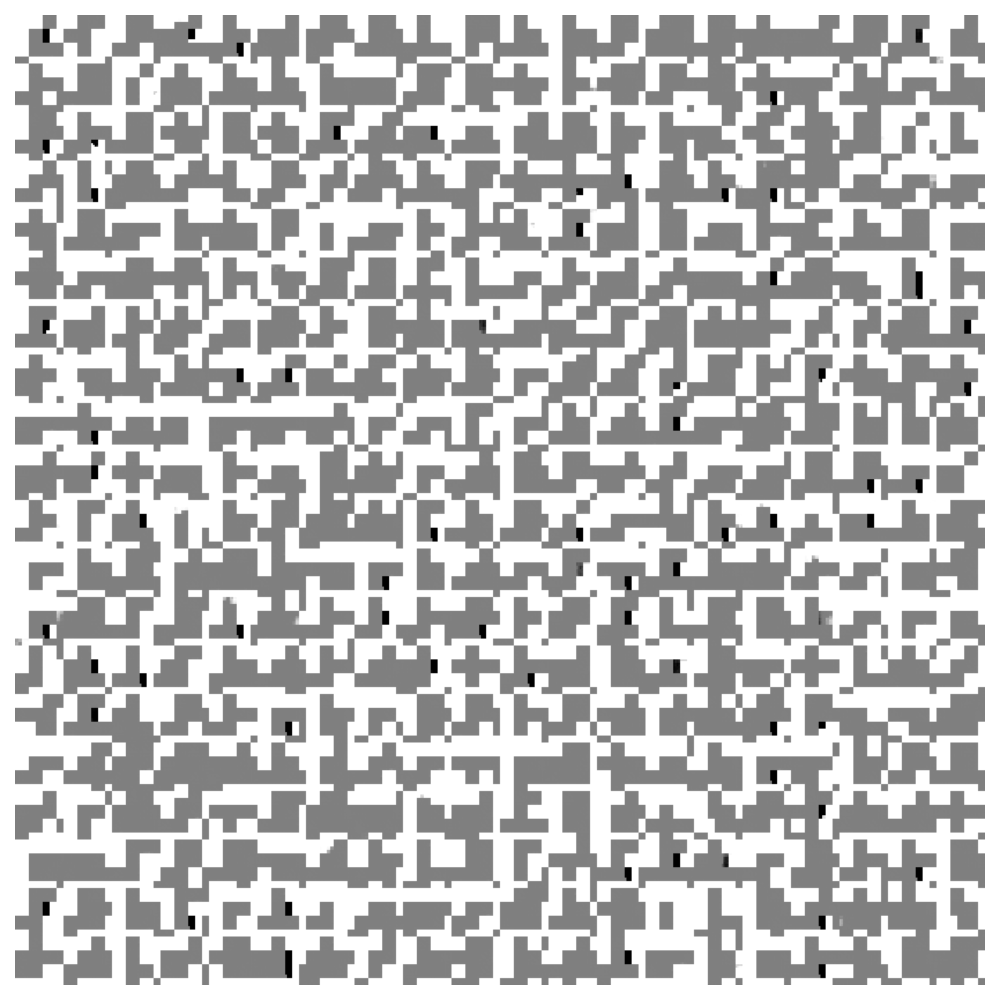}
        \end{subfigure}
        \begin{subfigure}[b]{0.3\textwidth}
            \includegraphics[width=\textwidth]{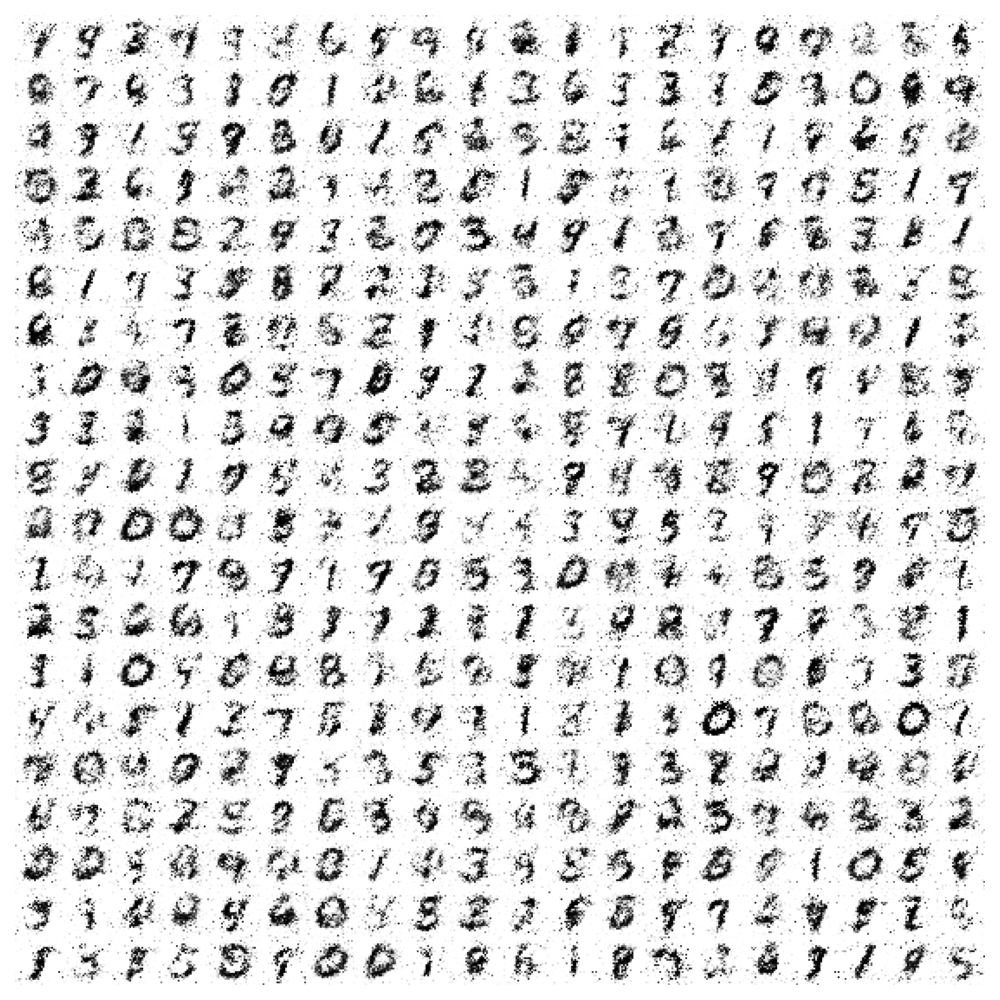}
        \end{subfigure}
    \vspace{\vsp}
    \end{minipage}
    \caption{Comparing Wavelet denoising and Entropic 1-WGAN the Laplace mechanism}\label{tab:wavelet_lap}
\end{figure}

\begin{figure}[htbp]
    \newcommand{\adj}[1]{\raisebox{0.15\textwidth}[\height][\depth]{#1}}
    \newcommand{\vsp}{-1em}
    \centering
    \begin{minipage}{\linewidth}
    \hspace{-0.5em}
        \adj{\rotatebox[origin=c]{90}{$\eps=1568,\delta=10^{-4}$}}
        \begin{subfigure}[b]{0.3\textwidth}
            \includegraphics[width=\textwidth]{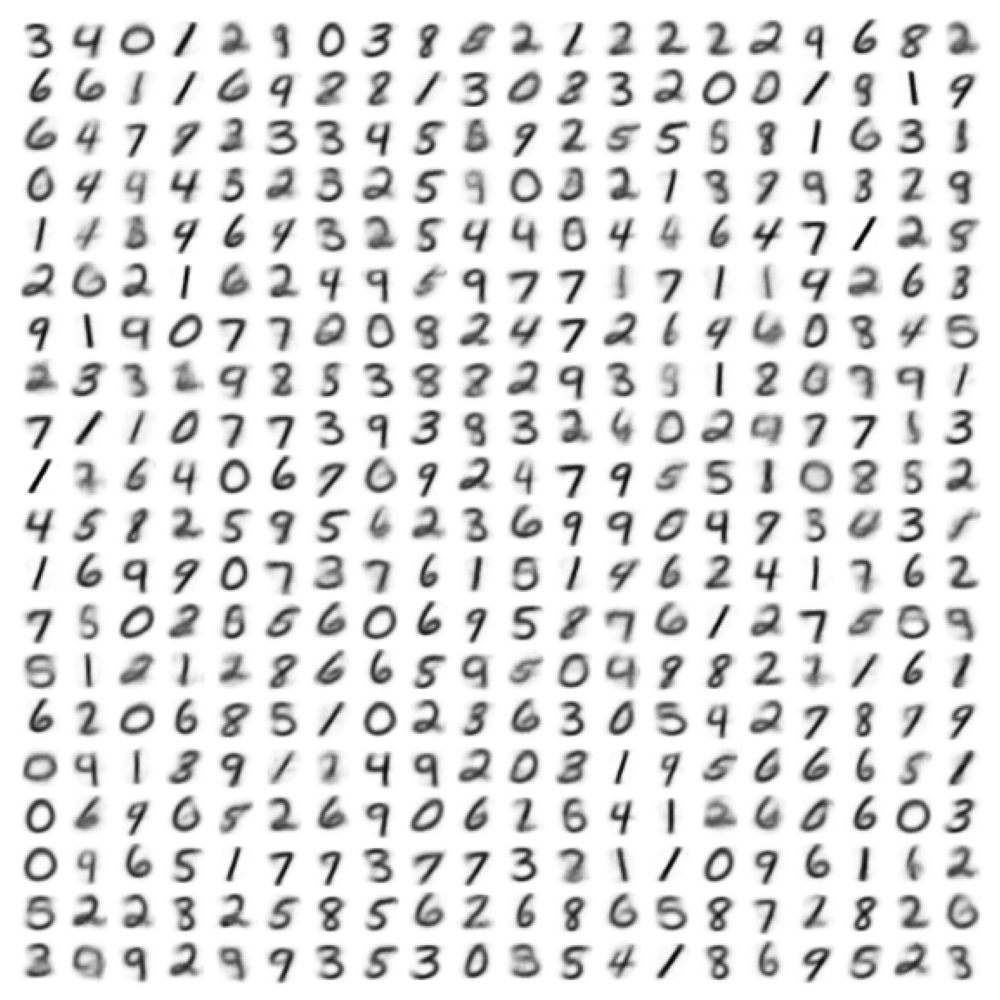}
        \end{subfigure}
        \begin{subfigure}[b]{0.3\textwidth}
            \includegraphics[width=\textwidth]{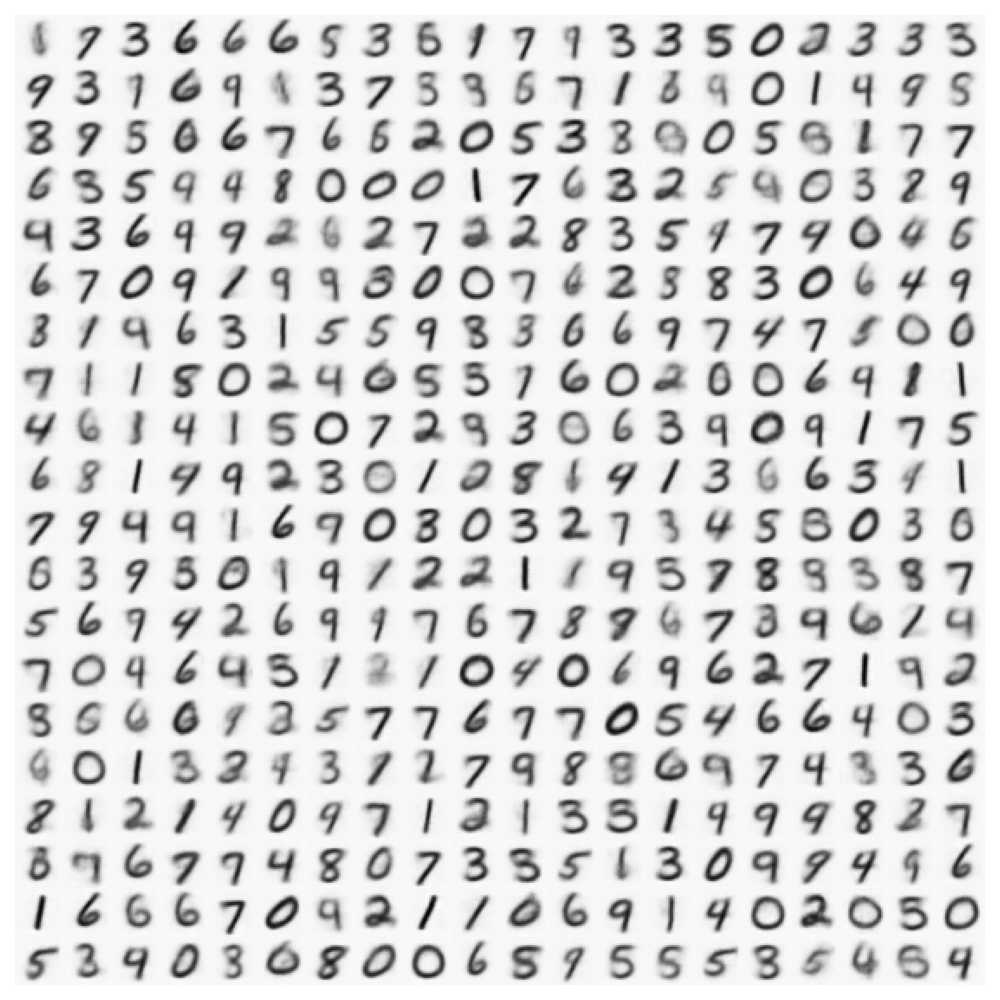}
        \end{subfigure}
        \begin{subfigure}[b]{0.3\textwidth}
            \includegraphics[width=\textwidth]{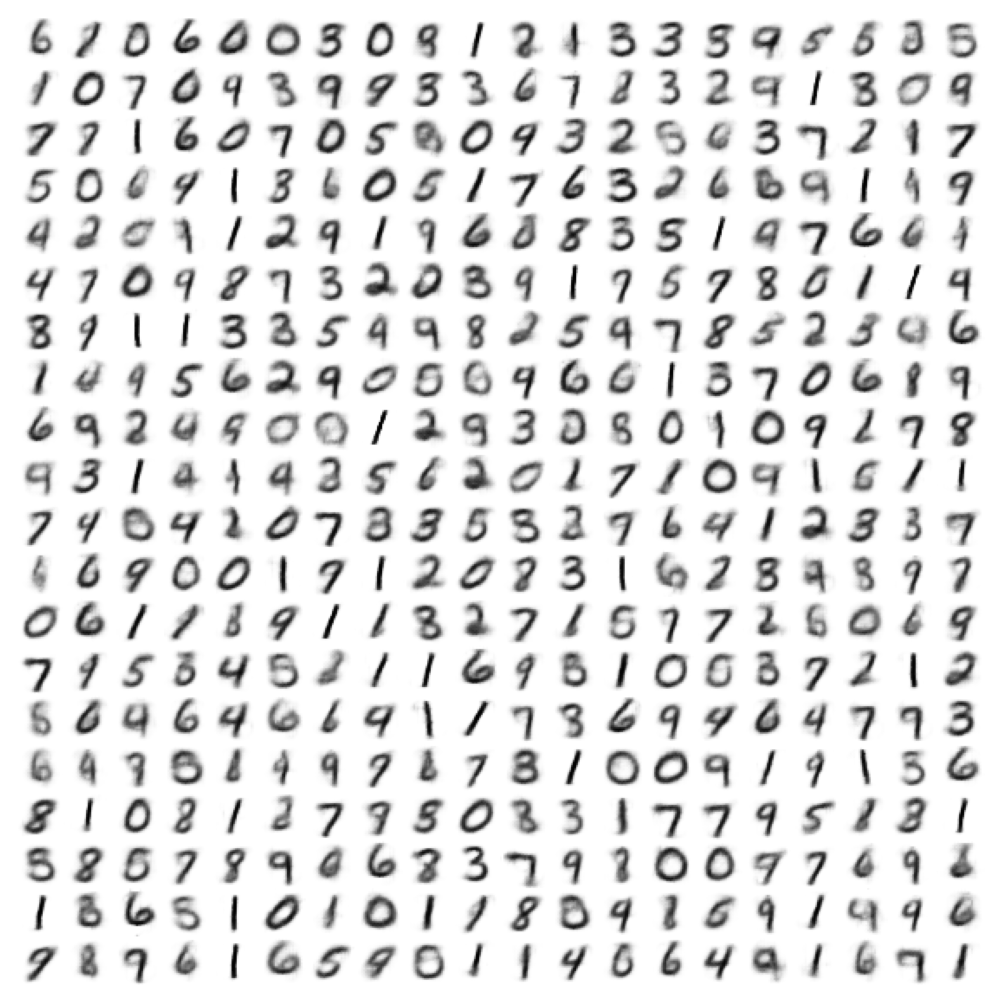}
        \end{subfigure}
    \vspace{-3.5em}
    \end{minipage}
    \begin{minipage}{\linewidth}
    \hspace{-0.5em}
        \raisebox{0.2\textwidth}[\height][\depth]{\rotatebox[origin=c]{90}{$\eps=41,\delta=10^{-4}$}}
        \begin{subfigure}[b]{0.3\textwidth}
            \includegraphics[width=\textwidth]{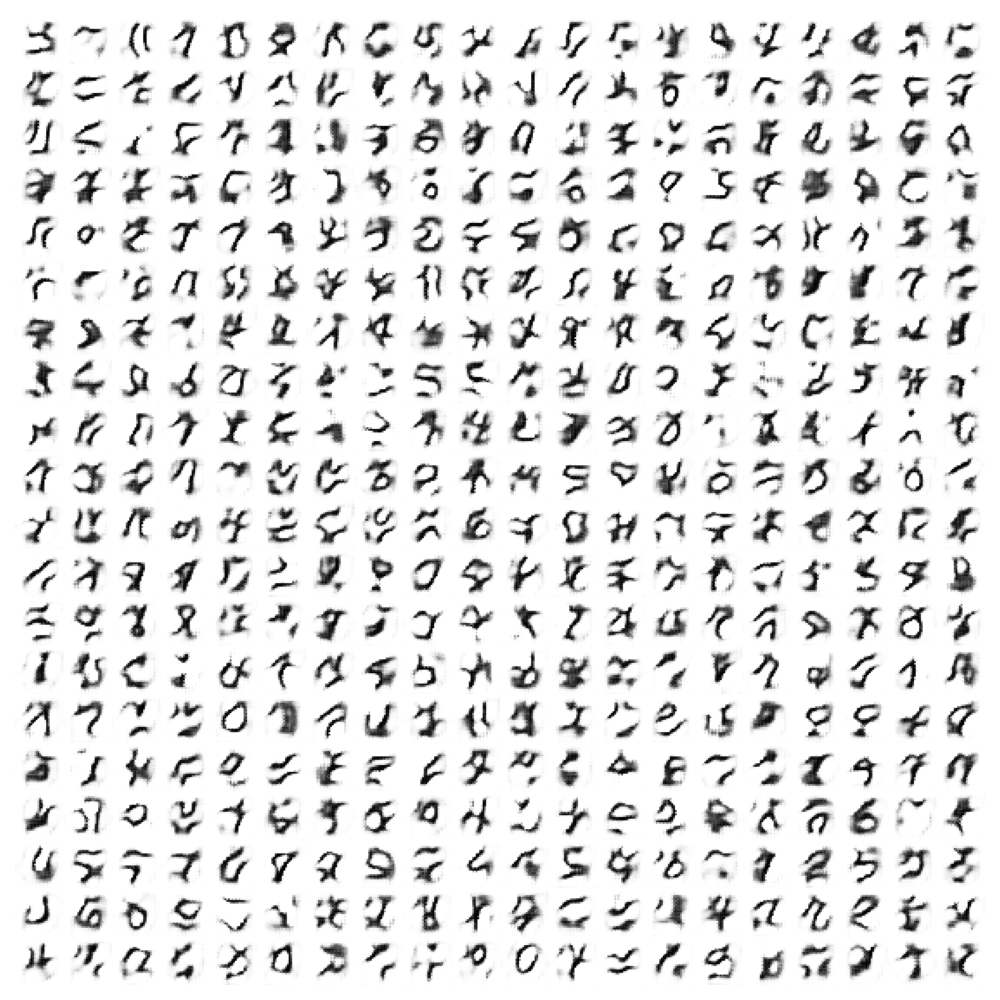}
            \subcaption*{unregularized~$p$-WGAN~\eqref{eq:GAN}}
        \end{subfigure}
        \begin{subfigure}[b]{0.3\textwidth}
            \includegraphics[width=\textwidth]{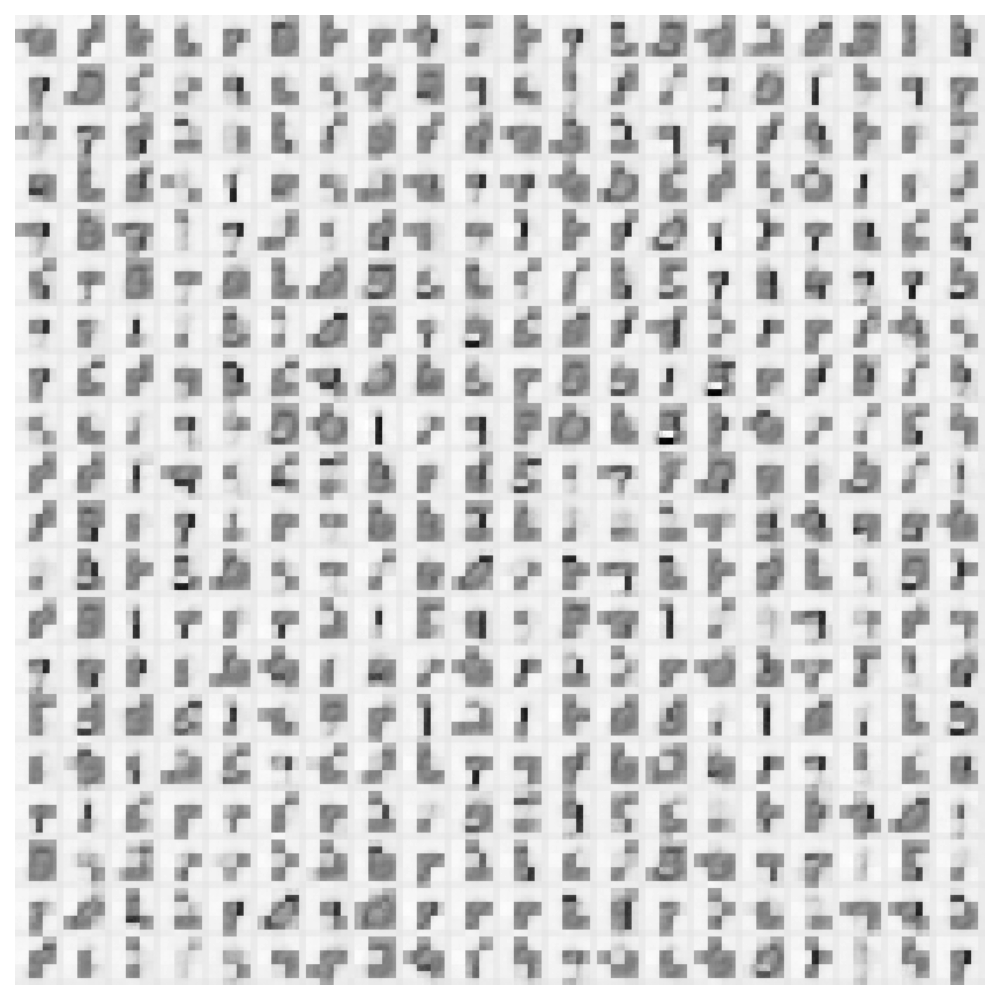}
            \subcaption*{wavelet denoising+$p$-WGAN}
        \end{subfigure}
        \begin{subfigure}[b]{0.3\textwidth}
            \includegraphics[width=\textwidth]{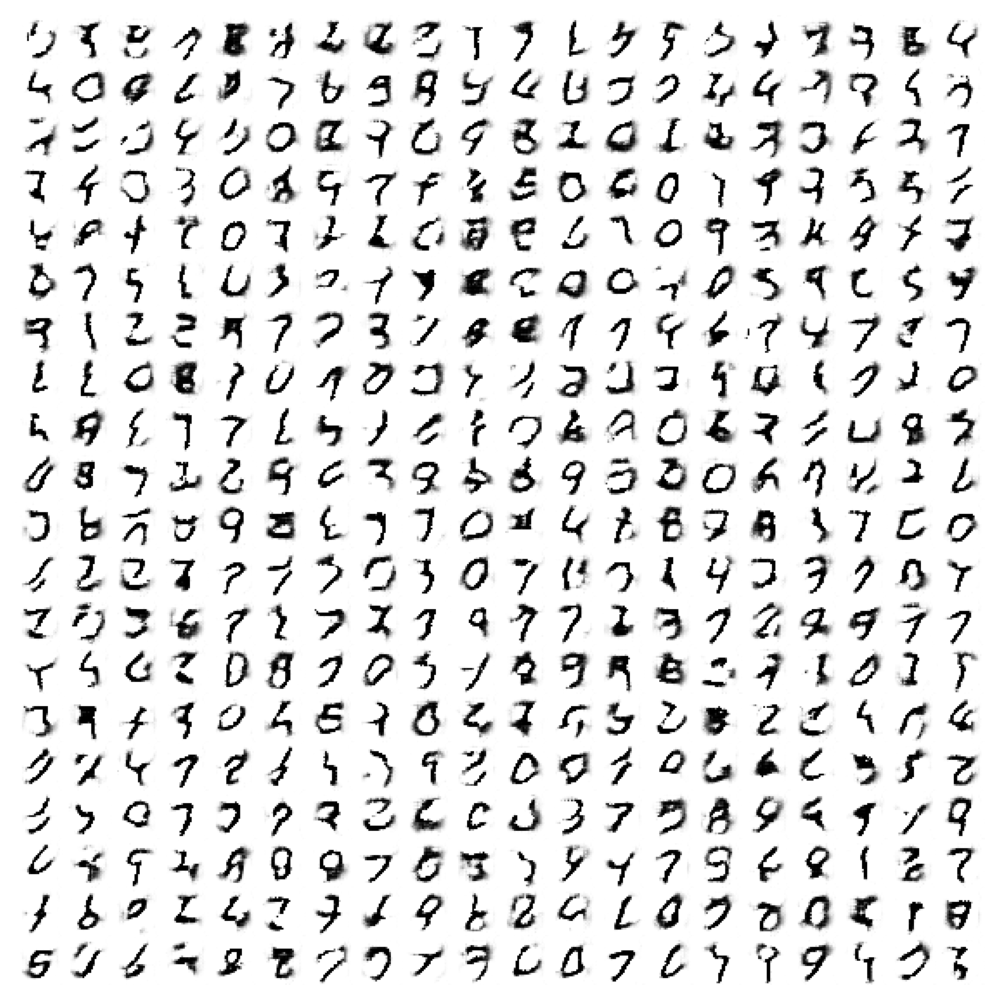}
            \subcaption*{entropic $p$-WGAN~\eqref{eq:Wasserstein_reg_obj}}
        \end{subfigure}
    \vspace{-3em}
    \end{minipage}
    \caption{Comparing Wavelet denoising and Entropic 2-WGAN for the Gaussian mechanism }\label{tab:wavelet_gauss}
\end{figure}

\subsection{MNIST: Empirical convergence}\label{sec:emporical_convergence}
We next empirically check how the performance (measured by the $2$-Wasserstein distance) depends on the privatization level. In our experiment, we set $p=2$ and train GANs on privatized MNIST samples  with 3 different loss functions: the entropy-regularized 2-Wasserstein loss  $W_{2, 2\sigma^2}(P_{G(Z)}, Q_Y^n)$ (where $Q_Y^n$ represents the empirical distribution of the privatized samples), the 2-Wasserstein distance $W_{2}^2(P_{G(Z)}, Q_Y^n)$ and the sinkhorn divergence \cite{feydy2019interpolating} (which is the debiased version of the entropy-regularized 2-Wasserstein distance). We chose a smaller generator model for this experiment -- a 2 hidden layer fully connected neural network with 500 neurons on each hidden layer and a 2-dimensional latent distribution, i.e. $P_Z = \text{Uniform}[0,1]^2, L=200, b=200.$ The images were then flattened into $28^2$-dimensional  vectors.  We report the $2$-Wasserstein distance between the generated and the target distribution $P_X$ in Figure \ref{fig:w2_vs_sigma} (left)  where we approximate the target distribution by the empirical distribution of the non-privatized data. It is not surprising that the Wasserstein distance between the generated and the target distribution is smallest for the the entropy-regularized 2-Wasserstein GAN since our theory suggest that the entropic regularization encourages the generated distribution to converge to the (true) target distribution.  The results also show that the distance grows with the noise scale $\sigma$ for all three of the metrics we considered, however, the slope of our method $W_{2, 2\sigma^2}(P_{G(Z)}, Q_Y^n)$ is the smallest. The growth is to be expected from theorem \ref{thm:generalization}, when the dataset size $n$ is kept constant, increasing the noise scale $\sigma$ (and thus privatization) degrades the performance. We also report the entropy-regularized 2-Wasserstein distance to the privatized dataset since it is minimized when $P_{G(Z)} = P_X$ and can be used as a measure of closeness of $P_{G(Z)}$ to $P_X.$ The plot indicates that the Sinkhorn divergence and the unregularized Wasserstein distance do not implicitly minimize this measure.

\begin{figure}[htbp]
\begin{minipage}[t]{.5\linewidth}
\includegraphics[width=\linewidth]{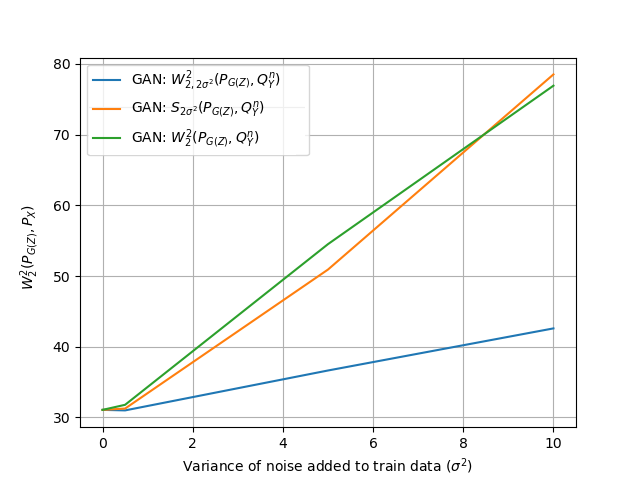}
\end{minipage}\hfill
\begin{minipage}[b]{.5\linewidth}
\includegraphics[width=\linewidth]{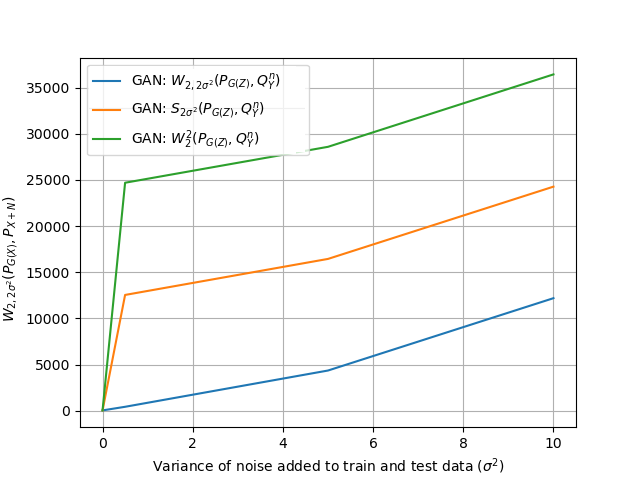}
\end{minipage}
\caption{Dependence of the $2$-Wasserstein distance(left) and the validation error (right) on the noise scale $\sigma$}\label{fig:w2_vs_sigma}.
\end{figure}

\subsection{Influence of train set size on the error}
Here we empirically check how the performance (measured by the $2$-Wasserstein distance) depends on the size of the training set . We use the setting described in section~\ref{sec:emporical_convergence} with $\sigma^2 = 4$ and train the model with the entropy-regularized 2-Wasserstein loss between the  generated distribution and the empirical distribution of the privatized samples $W_{2, 2\sigma^2}(P_{G(Z)}, Q_Y^n).$ We report the $2$-Wasserstein distance between the generated $P_{G(Z)}$ and the target distribution $P_X$ on Figure~\ref{fig:w2_val}, where we approximate the target distribution by the empirical distribution of the non-privatized data that was used for training (curve labeled "train") and that was left out for validation (curve labeled "test"). To compute $W_{2, 2\sigma^2}(P_{G(Z)}, P_Y)$ we use mini-batches of size $200.$ The distance is decreasing on the left plot for both train and test curves, which is expected by theorem~\ref{thm:generalization} to be proportional to $1/\sqrt{n}.$ The closeness of the train and test curves also shows no signs of overfitting, which is most probably happening due to privatization. 
\begin{figure}[htbp]
\centering
\includegraphics[width=0.5\linewidth]{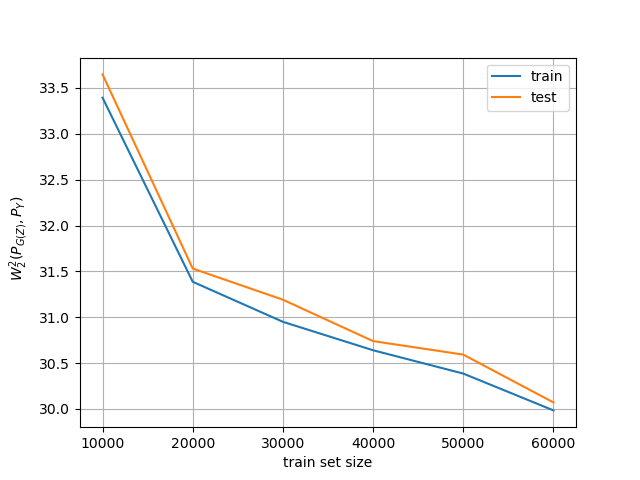}
\caption{Dependence of $W_{2, 2\sigma^2}(P_{G(Z)}, P_Y)$ on the dataset size $n$ 
}\label{fig:w2_val}
\end{figure}


   \subsection{MNIST: Higher Privacy samples}\label{sec:higher_privacy}
   In this section, we further investigate the performance of entropic $p$-Wasserstein GAN on locally privatized data. We set the number of sinkhorn steps $L=400$ and the batch size to be $b=400$ and we performed optimization with Adam optimizer \cite{kingma2014adam} and learning rate varied in $\{0.005, 10^{-4}, 5\times 10^{-5}\}.$ We optimized for 150 epochs. For $p=1$ we first took the discrete cosine transform of the images and clipped the coefficients below 0.8 quantile to preserve more information, and then to control the sensitivity, we projected each training image onto an $\ell_1$ ball with radius $140$ (the parameters were chosen based on 1 held-out image in a way that it would not visually distort the image beyond recognition). We also applied DCT transform to the generator output before plugging it into the loss function.
   Similarly, for $p=2$ we projected each training image onto an $\ell_2$ ball with radius $20$, but we did not apply any transforms (since $\ell_2$-norm does not change under multiplication by an orthonormal matrix).
   
\begin{figure}[htbp]
\begin{minipage}[t]{.45\linewidth}
\includegraphics[width=\linewidth]{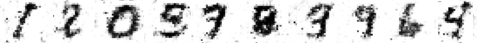}
\end{minipage}\hfill
\begin{minipage}[b]{.45\linewidth}
\includegraphics[width=\linewidth]{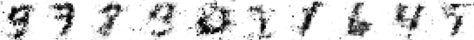}
\end{minipage}
\caption{Entropic 1-WGAN on MNIST trained on data privatized with the Laplace mechanism achieving $\eps$-LDP $\eps = 35$ (left) and $\eps=25$ (right)}\label{fig:laplace_results}
\end{figure}

\begin{figure}[htbp]
\begin{minipage}[t]{0.45\linewidth}
\includegraphics[width=\linewidth]{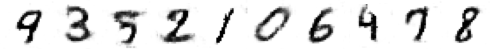}
\end{minipage}\hfill
\begin{minipage}[b]{0.45\linewidth}
\includegraphics[width=\linewidth]{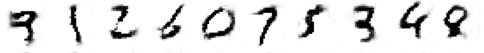}
\end{minipage}
\caption{Entropic 2-WGAN on MNIST trained on data privatized with the Gaussian mechanism achieving $(\eps,\delta)$-LDP with $\delta=10^{-4}$ and $\eps = 30$ (left) and $\eps=25$ (right)}\label{fig:gauss_results}
\end{figure}
The results indicate the effectiveness of our model at higher privacy regimes. However,  smaller $\epsilon$ values still produced a lot of noise in the generated samples or eroded the images significantly. This can be potentially mitigated by increasing the number of samples as suggested in Theorem \ref{thm:generalization}; however the relatively small size of MNIST limits the privacy levels that can be achieved. Note that the privacy parameters chosen for the images are rather large, however, our method still performs reasonably despite the exponential blowup in the excess risk. We believe that the dependence of the excess risk on the privacy budget $\epsilon$ can thus be improved and is a direction for future work. 

We provide additional samples for different privacy levels and report 400 randomly sampled digits on figure \ref{fig:higher_privacy} for the Laplace and Gaussian mechanisms.

\begin{figure}[htbp]
\newcommand{\w}{0.24}
\begin{subfigure}[t]{\linewidth}
\begin{subfigure}[t]{\w\linewidth}
\includegraphics[width=\linewidth]{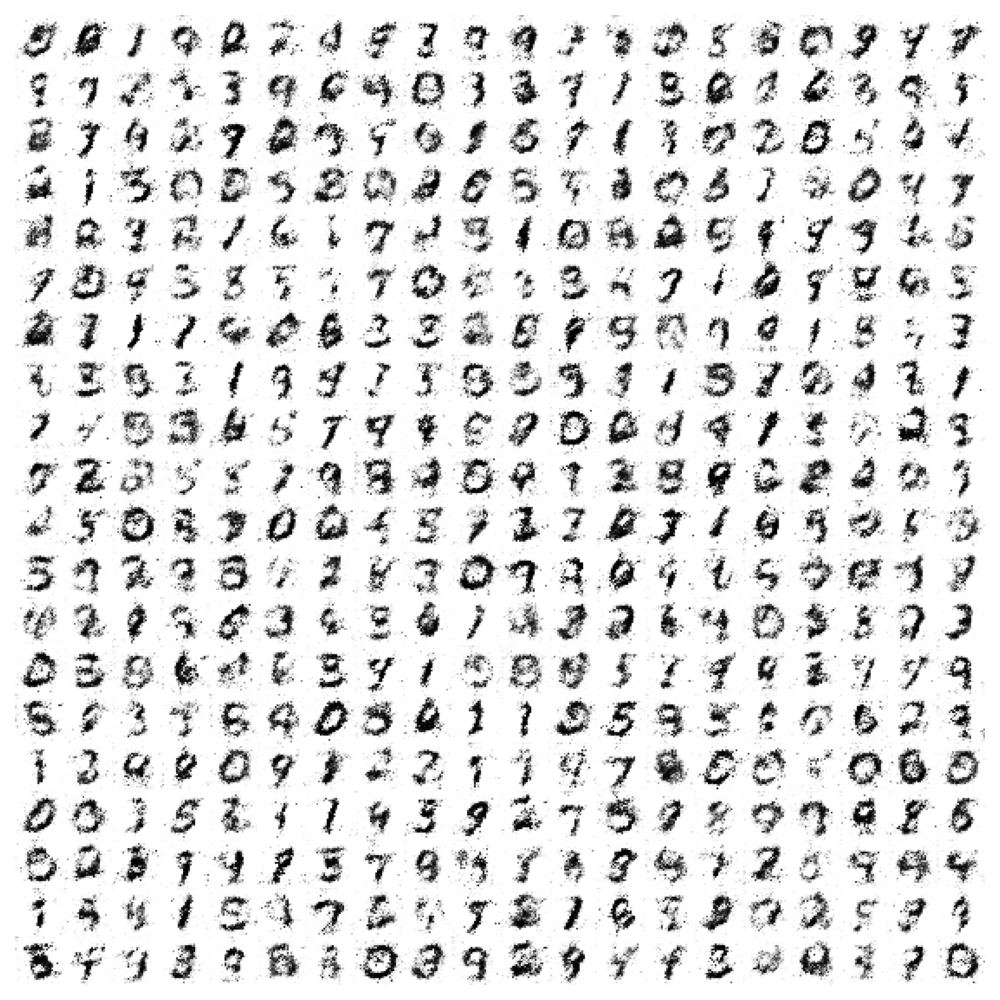}
\caption{$\eps=65.25$}
\end{subfigure}\hfill
\begin{subfigure}[t]{\w\linewidth}
\includegraphics[width=\linewidth]{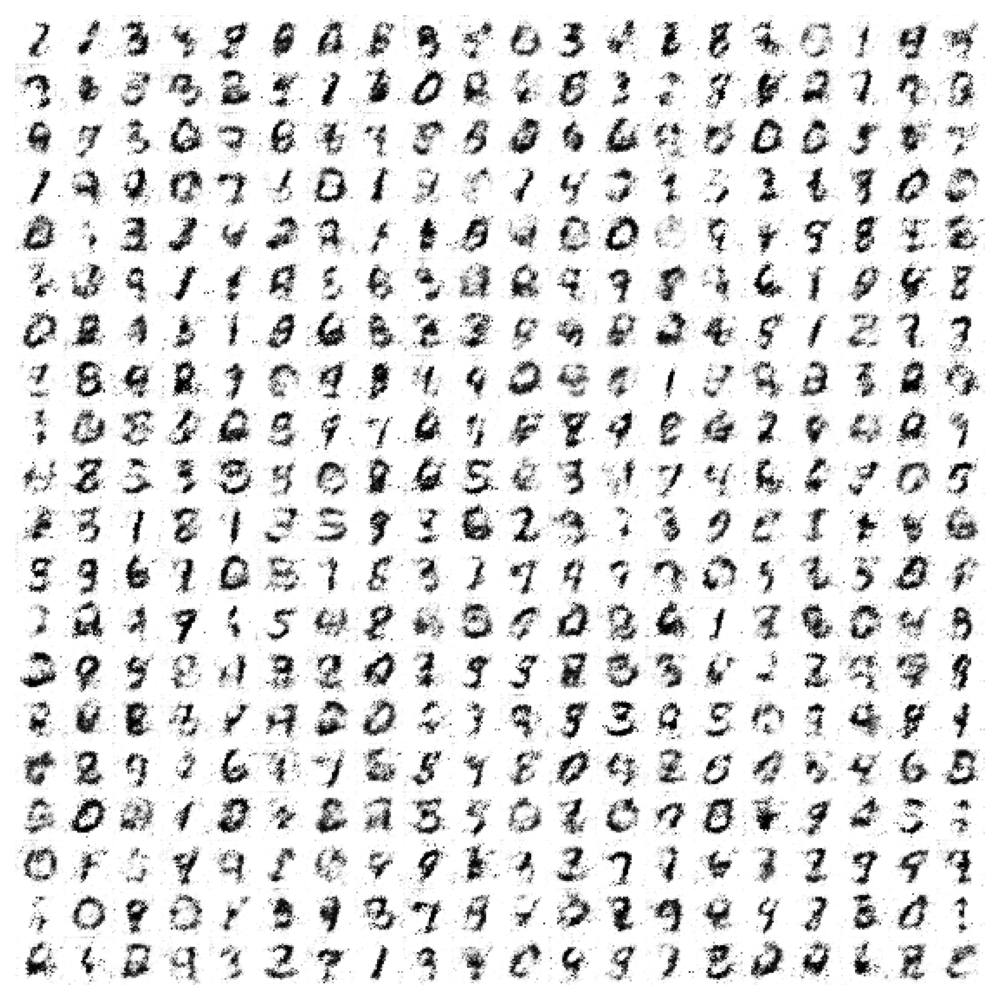}
\caption{$\eps=49$}
\end{subfigure}\hfill
\begin{subfigure}[t]{\w\linewidth}
\includegraphics[width=\linewidth]{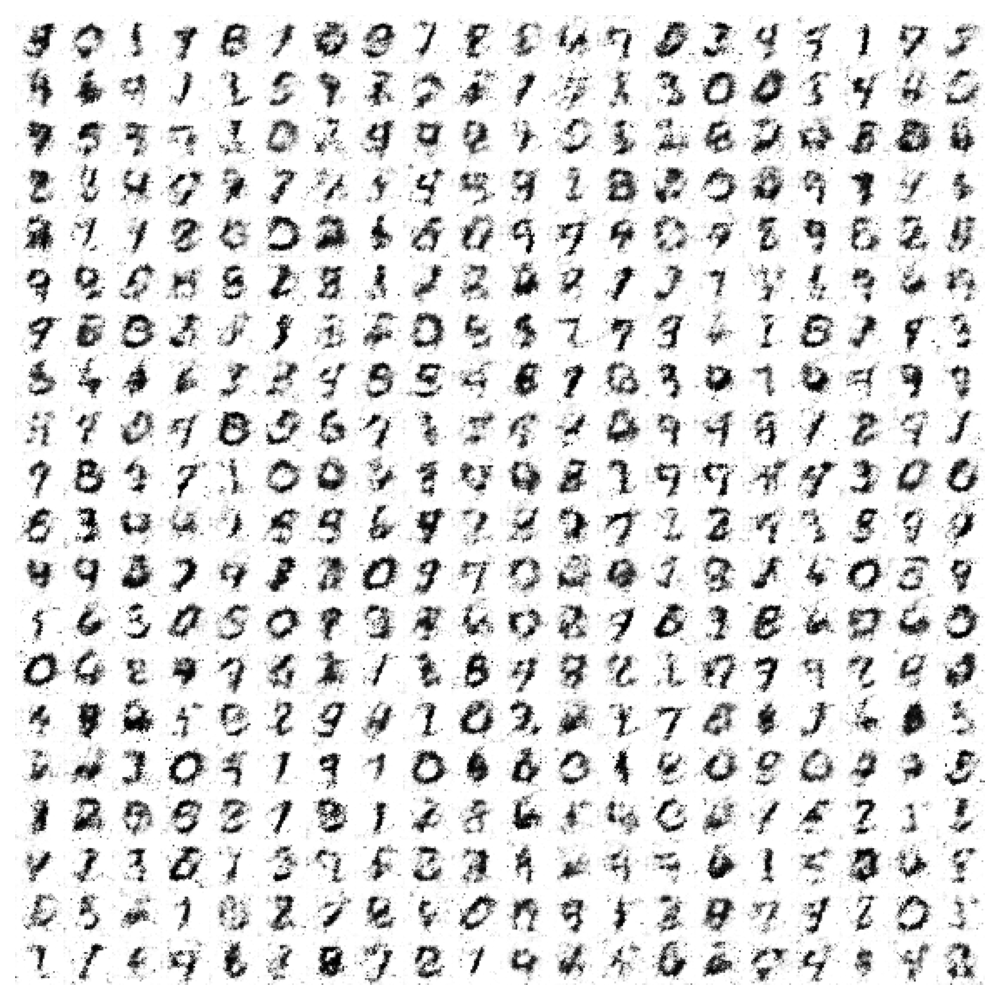}
\caption{$\eps=35$}
\end{subfigure}\hfill
\begin{subfigure}[t]{\w\linewidth}
\includegraphics[width=\linewidth]{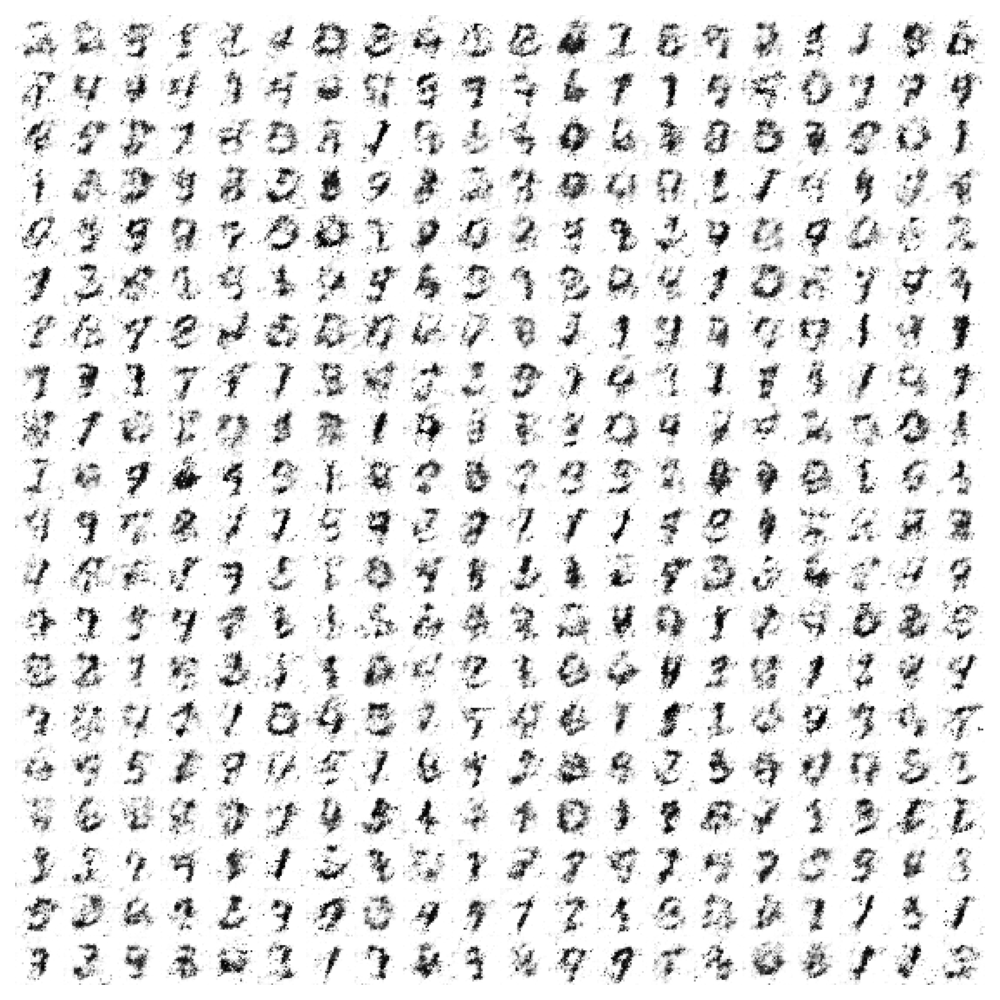}
\caption{$\eps=25$}
\end{subfigure}%
\end{subfigure}
\begin{subfigure}[t]{\linewidth}
\begin{subfigure}[t]{\w\linewidth}
\includegraphics[width=\linewidth]{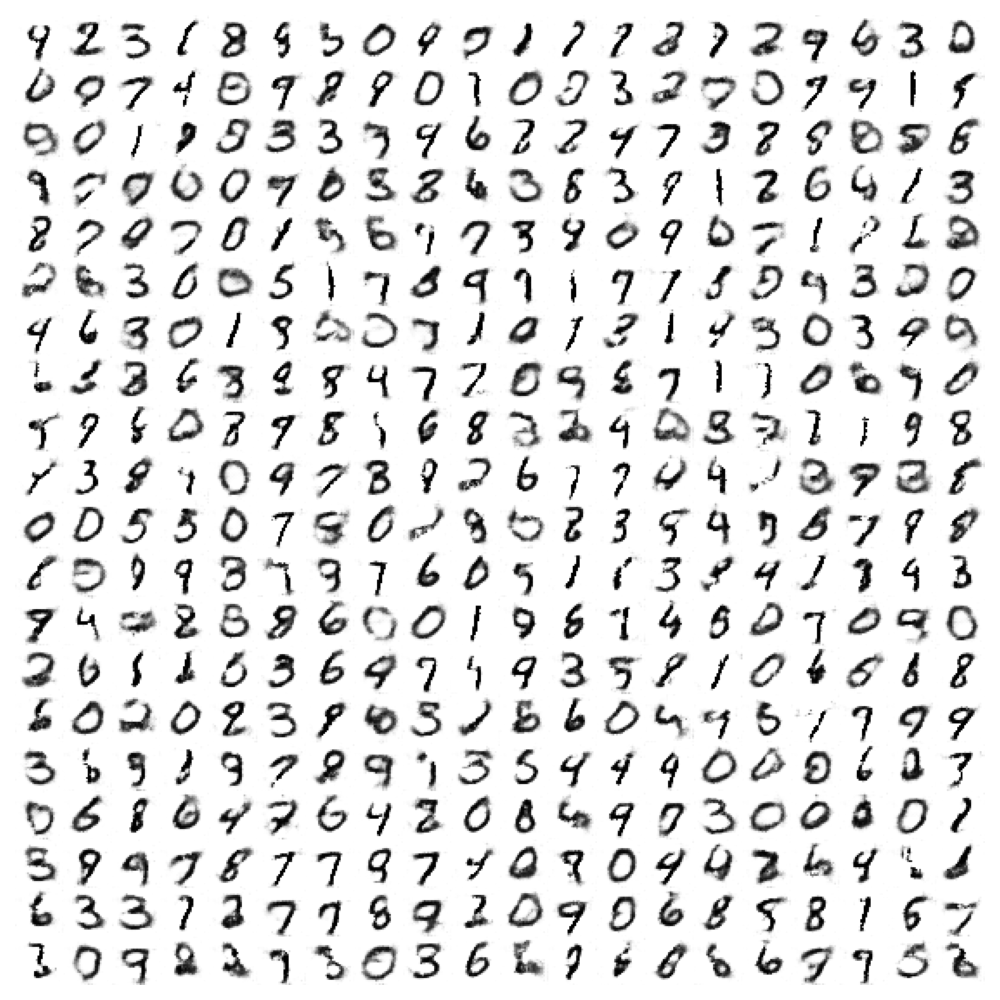}
\caption{$\eps=35$}
\end{subfigure}\hfill
\begin{subfigure}[t]{\w\linewidth}
\includegraphics[width=\linewidth]{images/sinkhorn_log_l2_eps_30.png}
\caption{$\eps=30$}
\end{subfigure}\hfill
\begin{subfigure}[t]{\w\linewidth}
\includegraphics[width=\linewidth]{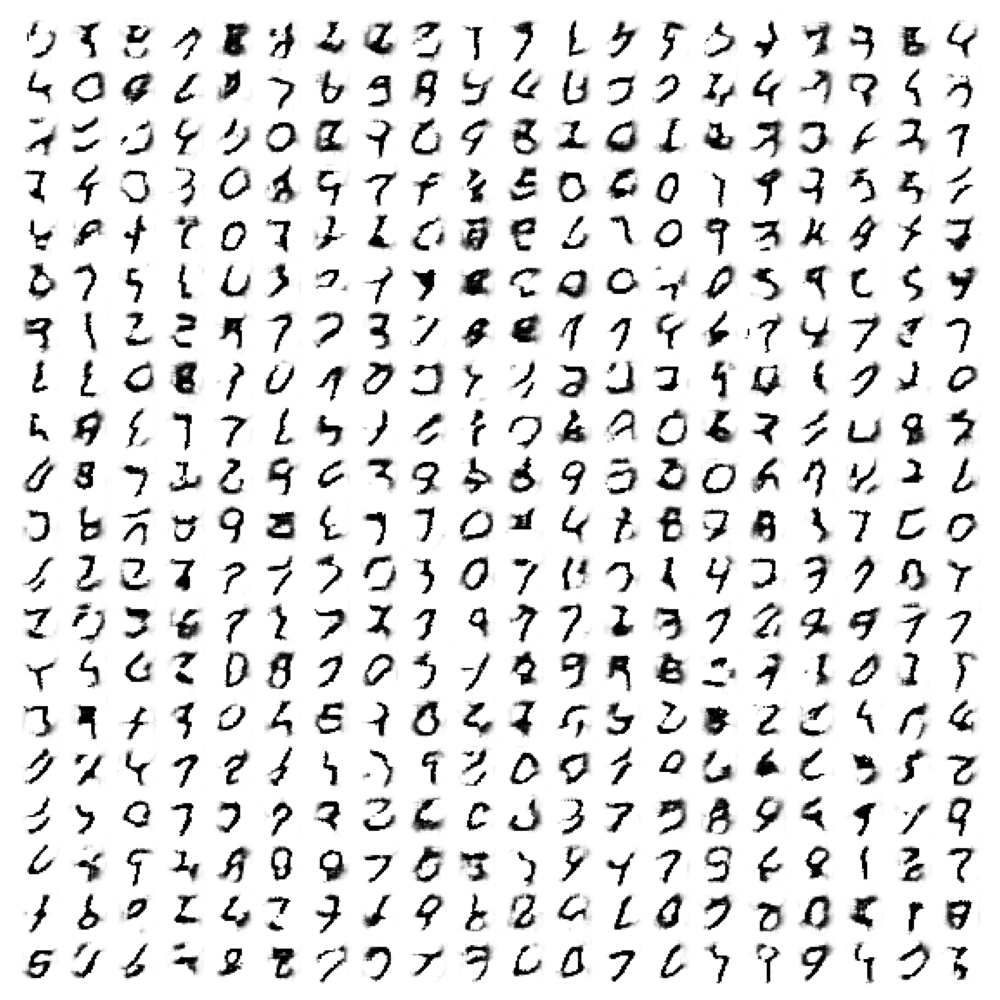}
\caption{$\eps=25$}
\end{subfigure}\hfill
\begin{subfigure}[t]{\w\linewidth}
\includegraphics[width=\linewidth]{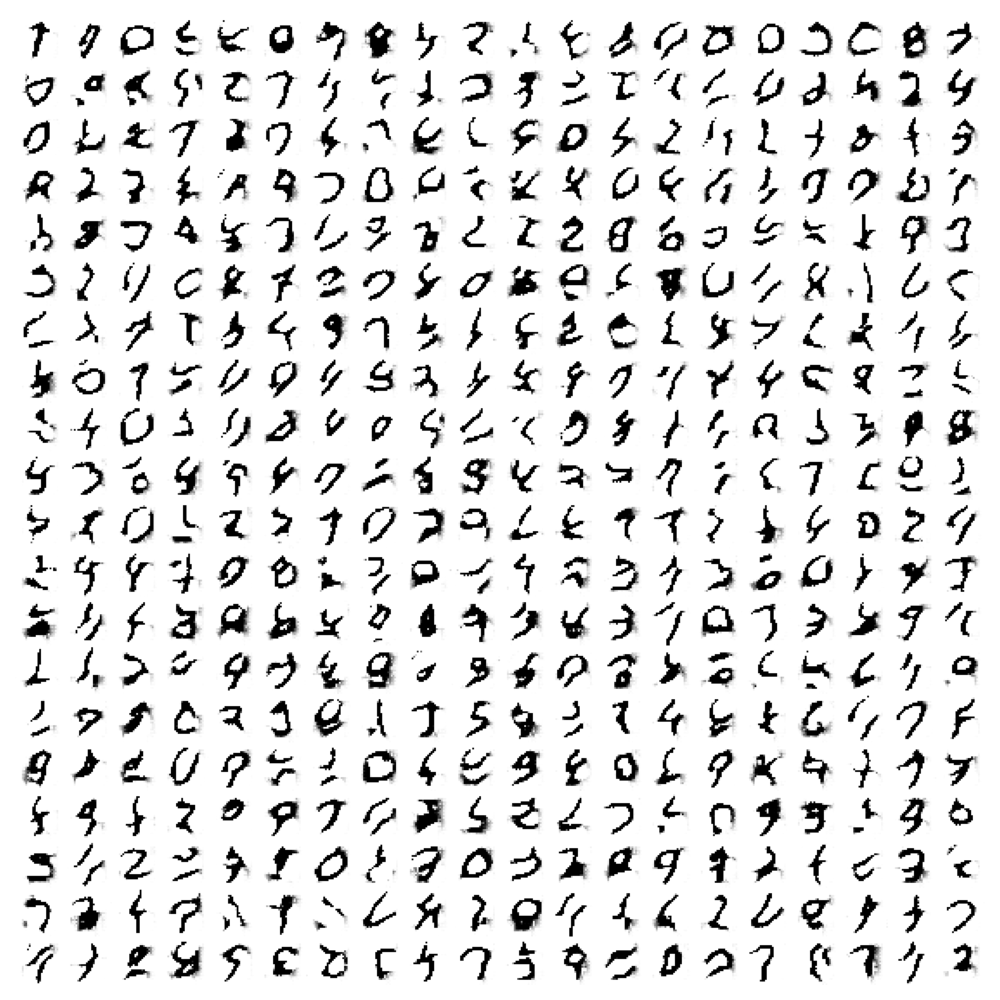}
\caption{$\eps=24$}
\end{subfigure}
\end{subfigure}
\caption{Laplace mechanism with different privacy budgets $\eps$ and clipping of the discrete cosine transform, 1-EWGAN (top) and Gaussian mechanism with different privacy budgets $\eps$ and clipping the euclidean norm of images, 2-EWGAN (bottom)}\label{fig:higher_privacy}
\end{figure}

\subsection{Comparison with Noisy Wasserstein GAN}
We next provide our experimental results with MNIST and FashionMNIST \cite{xiao2017fashion}, which is a set of $60000$ grayscale images of clothing items of size $28\times 28.$ 

Here we fix $p=1$ and compare entropic $1$-Wasserstein GAN to a noisy $1$-Wasserstein GAN applied to data privatized with the Laplace mechanism.

The noisy $1$-Wasserstein GAN is a GAN trained with the following unregularized loss function:
\begin{equation}
    W_1(P_{M(G(Z))}, Q_Y^n),\label{eq:noisy_1wass}
\end{equation} where $M(\cdot)$ is the privatization mechanism. Note that this loss function also satisfies theorem \ref{thm:denoising}, but suffers from the curse of dimensionality, namely 
$W_1(P_{M(G(Z))}, Q_Y^n) = \Omega(n^{-1/d}).$ However, Wasserstein GANs have been successful in practice \cite{arjovsky2017wasserstein,gulrajani2017improved} and use a different benchmark, so we compare their performance with the performance of the entropic $1$-Wasserstein GAN. 

We again use a DCGAN as a generator, but we removed the batch normalization layers and used a larger batch size of $6000$ as suggested by \cite{bie2023private}. As in section \ref{sec:MNISTvsdenoiding}, experiments in this section do not involve projecting onto the $\ell_1/\ell_2$ balls. The pixel values of the images were rescaled to $[-1,1]$ leading to $\Delta_1 = 2\times 28^2$  $\ell_1$ sensitivity and $\Delta_2=56$ $\ell_2$ sensitivity. For FashionMNIST we choose the noise scale $\sigma = 7,$ which results in $\epsilon = 224$-LDP, and for MNIST we choose the noise scale $\sigma = 3,$ which results in $\epsilon = 97$-LDP. We report the FID measures in and random saples in Figure \ref{fig:noisy_comparison_fashion} for FashionMNIST and Figure \ref{fig:noisy_comparison_fashion} for MNIST.

We also report Frechet Inception Distance (FID) \cite{heusel2017gans} calculated between the generated and the validation set as a quantitative measure of performance. 

\begin{figure}[htbp]
\begin{minipage}{.48\linewidth}
\includegraphics[width=\linewidth]{images/fashion_MNIST/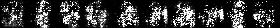}
\subcaption{1-WGAN \eqref{eq:noisy_1wass}, FID = 318}
\end{minipage}\hfill
\begin{minipage}{.48\linewidth}
\includegraphics[width=\linewidth]{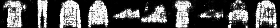}
\subcaption{entropic 1-Wasserstein GAN, \ref{eq:Wasserstein_reg_obj}, FID=208}
\end{minipage}
\caption{Fashion MNIST samples generated with 1-Wasserstein GAN and the addition of noise to the generator (a) and Entropic 1-Wasserstein GAN (b).}\label{fig:noisy_comparison_fashion}
\end{figure}

\begin{figure}[htbp]
\begin{minipage}{.48\linewidth}
\includegraphics[width=\linewidth]{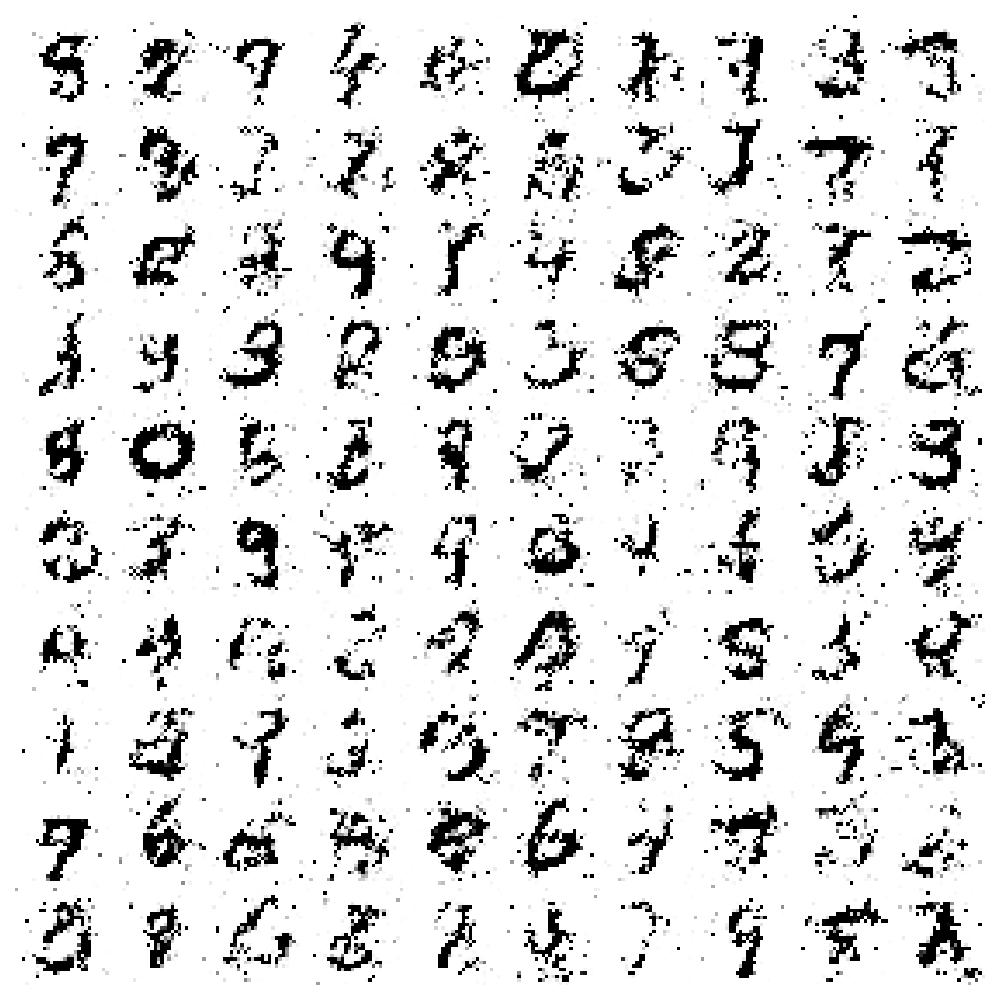}
\subcaption{1-WGAN \eqref{eq:noisy_1wass}, FID = 223}
\end{minipage}\hfill
\begin{minipage}{.48\linewidth}
\includegraphics[width=\linewidth]{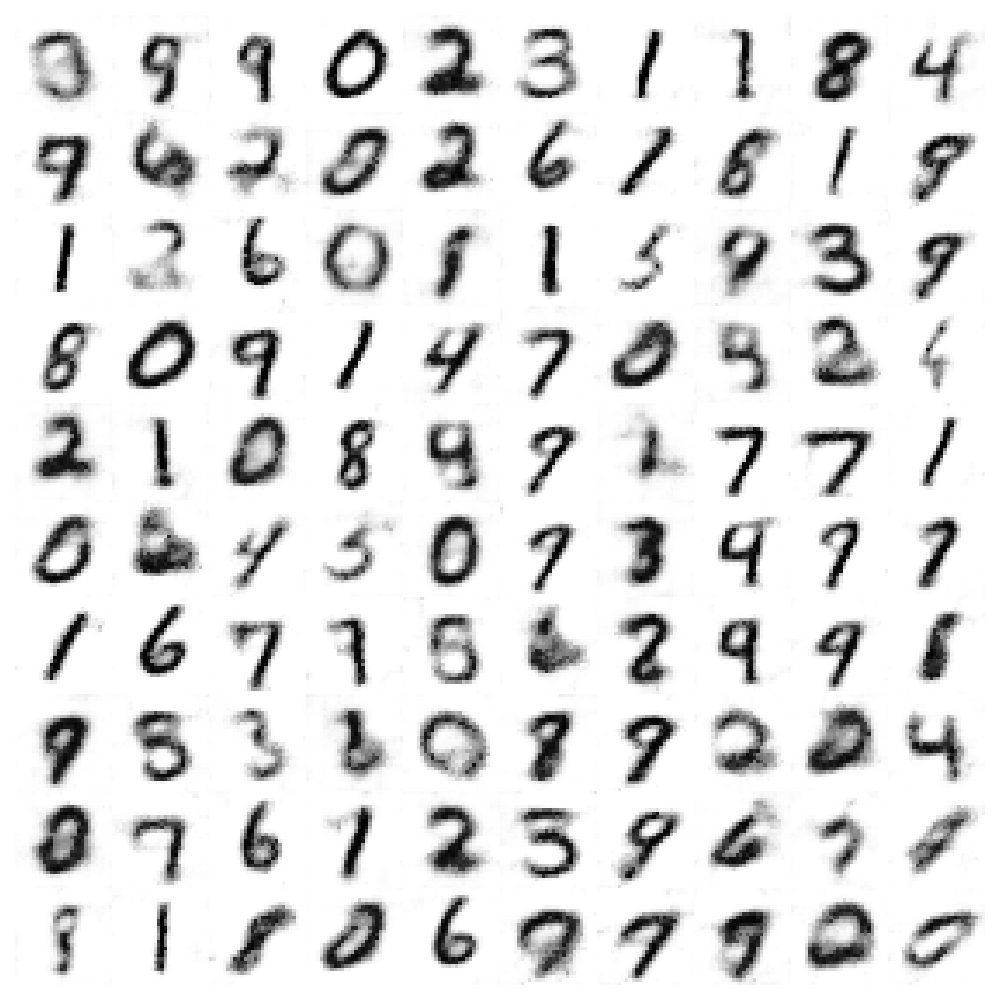}
\subcaption{entropic 1-Wasserstein GAN, \ref{eq:Wasserstein_reg_obj}, FID=28}
\end{minipage}
\caption{MNIST samples generated with 1-Wasserstein GAN and the addition of noise to the generator (a) and Entropic 1-Wasserstein GAN (b).}\label{fig:noisy_comparison}
\end{figure}

The experiments clearly show that the images generated by the entropic Wasserstein GAN still look like clothing items, while the images generated by the unregularized Wasserstein GAN with noise added to the generator look like noise. The closeness of the distributions is also validated by the closeness in FID scores.

Note that the images for the entropic 1-Wasserstein GAN are still corrupted by noise even given the very high privacy budget. This is due to the large $\ell_1$ sensitivity of the data. In the next section we address that problem with clipping the image norms before applying privatization noise.

\subsection{Fashion MNIST: Higher Privacy Samples}
To achieve a smaller privacy budget we clipped the norm of the images to have sensitivity $\Delta_1 = 700$ and $\Delta_1 = 550.$ Adding Laplace noise with $\sigma = 7$ then results in $\epsilon = 100$ and $\epsilon=78,6$ -LDP. The results are presented in figure \ref{fig:fashion_laplace}

\begin{figure}[htbp]
\begin{minipage}{.48\linewidth}
\includegraphics[width=\linewidth]{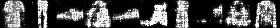}
\subcaption{$\epsilon=100$, FID = 183}
\end{minipage}\hfill
\begin{minipage}{.48\linewidth}
\includegraphics[width=\linewidth]{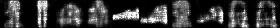}
\subcaption{$\epsilon=78,6$, FID = 214}
\end{minipage}
        \caption{Entropic 1-WGAN trained on FashionMNIST data privatized with the Laplace mechanism} \label{fig:fashion_laplace}
        \end{figure}

   Similarly, for $p=2$ we clipped the $\ell_2$ norm to have sensitivity $\Delta_2 = 40.$ Adding Gaussian noise with $\sigma = 9.17$ or $\sigma = 7.24$ results in $\epsilon = 25$ and $\epsilon = 35$ privacy for $\delta = 10^{-4},$ see figure \ref{fig:fashion_gauss}.

\begin{figure}[htbp]
\begin{minipage}{.48\linewidth}
\includegraphics[width=\linewidth]{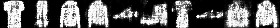}
\subcaption{$\epsilon=35,\delta=10^{-4}$, FID = 207}
\end{minipage}\hfill
\begin{minipage}{.48\linewidth}
\includegraphics[width=\linewidth]{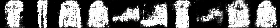}
\subcaption{$\epsilon=25$, FID = 237}
\end{minipage}
        \caption{Entropic 2-WGAN trained on FashionMNIST data privatized with the Gaussian mechanism} \label{fig:fashion_gauss}
        \end{figure}
The results indicate the effectiveness of our model at higher privacy regimes. However,  smaller $\epsilon$ values still produced a lot of noise in the generated samples or eroded the images significantly. This can be potentially mitigated by increasing the number of samples as suggested in Theorem \ref{thm:generalization}; however the relatively small size of MNIST limits the privacy levels that can be achieved. 

\section{Detailed Proofs}\label{sec:proofs}
In this section we prove the theorems presented in section \ref{sec:mainres}.

\subsection{Proof of Lemma \ref{lemma:denoising_nonrealizable}}\label{sec:proof_denoising_nonrealizable}
\begin{proof}   We first prove the following lemma, which is used in this proof and the proof of Corollary~\ref{col:KL_bound} (Corollary~\ref{col:KL_bound} is a direct consequence of the lemma and excess risk results -- theorem \ref{thm:generalization_gauss} and Corollary \ref{col:generalization_lap}).

\begin{lemma}\label{lemma:1}
 Let $X\sim P_X$ and $Y = M(X) = X + N,$ where $N = (N_1,\ldots,N_d)\sim f_N$ independent of $X$ 
    and $f_N(x)\propto e^{-\|x\|^p/(p\sigma^p)}$ Then
    \begin{align}
D_{KL}&(P_Y\|P_{G(Z)+N})\leq\frac1{p\sigma^p}\l(W_{2, 2\sigma^2}(P_Y, P_{G(Z)})  - W_{2, 2\sigma^2}(P_Y, P_X)\r),\label{eq:KL_bound}
\end{align}
where $D_{KL}(P\|Q)$ is the KL-divergence ($D_{KL}(P\|Q) = \int P(x)\log\frac{P(x)}{Q(x)}dx$ for continuous $P_X$ and $D_{KL}(P\|Q) = \sum_{x\in\cX} P(x)\log\frac{P(x)}{Q(x)}$ for discrete $X$)
\end{lemma}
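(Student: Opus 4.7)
The strategy is to recognize the right-hand side as a difference of entropic optimal transport values with a log-likelihood cost, invoke the KL-decomposition of $S_c$ already developed in the proof of Theorem~\ref{thm:denoising}, and then close the loop with a one-line data-processing argument.

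First, I would rewrite the bound in terms of entropic OT. With $c(x,y) = -\log f_N(y-x) = \|y-x\|_p^p/(p\sigma^p) + \text{const}$, the definition \eqref{eq:Wasserstein_reg} gives $W_{p,p\sigma^p}(P_U,P_V) = p\sigma^p\bigl(S_c(P_U,P_V) - \text{const}\bigr)$, so the right-hand side becomes $S_c(P_{G(Z)},P_Y) - S_c(P_X,P_Y)$. (Noting in passing that the prefactor $1/(p\sigma^p)$ in the statement pairs naturally with $W_{p,p\sigma^p}$, which is what the argument produces; the $W_{2,2\sigma^2}$ in the displayed equation appears to be a typo for $W_{p,p\sigma^p}$.) Then, reusing the identity \eqref{eq:Sc_to_KL}, I have
\begin{align*}
S_c(P_U,P_Y) = h(Y) + \inf_{\pi\in\Pi(P_U,P_Y)} \EE_{U}\bigl[D_{KL}(\pi_{Y\mid U}(\cdot\mid U)\,\|\,p_M(\cdot\mid U))\bigr].
\end{align*}
For $U = X$ the natural coupling $\pi^*(x,y) = P_X(x)\,p_M(y\mid x)$ lies in $\Pi(P_X,P_Y)$ because $Y = X + N$, and it zeroes out the KL term, yielding $S_c(P_X,P_Y) = h(Y)$. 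So the claim reduces to showing, for every $\pi \in \Pi(P_{G(Z)},P_Y)$,
\begin{align*}
D_{KL}(P_Y\,\|\,P_{G(Z)+N}) \;\le\; \EE_{U\sim P_{G(Z)}}\bigl[D_{KL}(\pi_{Y\mid U}(\cdot\mid U)\,\|\,p_M(\cdot\mid U))\bigr],
\end{align*}
and then taking the infimum on the right.

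The crux is a data-processing step. I would introduce the auxiliary joint distribution $\widetilde{P}(u,y) := P_{G(Z)}(u)\,p_M(y\mid u)$, whose $U$-marginal equals $P_{G(Z)}$ (matching that of $\pi$) and whose $Y$-marginal equals $P_{G(Z)+N}$. Because the $U$-marginals agree, the chain rule of KL yields $D_{KL}(\pi\,\|\,\widetilde{P}) = \EE_{U\sim P_{G(Z)}}\bigl[D_{KL}(\pi_{Y\mid U}\,\|\,p_M(\cdot\mid U))\bigr]$, and the data-processing inequality under marginalization $(U,Y)\mapsto Y$ gives $D_{KL}(P_Y\,\|\,P_{G(Z)+N}) \le D_{KL}(\pi\,\|\,\widetilde{P})$. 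Combining these two observations and taking the infimum over $\pi$ produces the required bound. For discrete $Y$, the same argument goes through after replacing differential entropy with Shannon entropy, exactly as in Theorem~\ref{thm:denoising}.

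The main obstacle is not analytic but conceptual: identifying the correct auxiliary joint $\widetilde{P}$, whose $Y$-marginal is precisely the pushforward $P_{G(Z)+N}$ appearing on the left-hand side. Once this coupling is in hand, the proof reduces to the chain rule plus data processing, both of which are one-liners.
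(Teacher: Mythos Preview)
Your proposal is correct and follows essentially the same route as the paper: convert the difference of regularized Wasserstein values to a difference of $S_c$ values via \eqref{eq:Sc_to_W}, use \eqref{eq:Sc_to_KL} to identify $S_c(P_X,P_Y)=h(Y)$ and to reduce the problem to bounding $\inf_\pi \EE_U D_{KL}(\pi_{Y\mid U}\|p_M(\cdot\mid U))$, then apply the chain rule with the auxiliary joint $\widetilde P(u,y)=P_{G(Z)}(u)p_M(y\mid u)$ and the data-processing (marginalization) inequality for KL. Your observation that the displayed $W_{2,2\sigma^2}$ should read $W_{p,p\sigma^p}$ is also consistent with the paper's own proof, which works with $W_{p,p\sigma^p}$ throughout.
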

\begin{proof}
We start by noting that $c(x,y) = -\log f_N(x-y) = \|x-y\|_p^p/(p\sigma^p) + C,$ where $C$ is a constant. Thus, for two probability measures $\mu,\nu$
\begin{align}
    S_c(\mu,\nu) = \frac1{p\sigma^p}W_{p,p\sigma^p}(\mu,\nu) + C.\label{eq:Sc_to_W}
\end{align}
Additionally, since $P_Y = P_{M(X)},$ the KL-divergence in \eqref{eq:Sc_to_KL} is zero for $P_{G(Z)} = P_X,$ so $S_c(P_X, P_Y) = h(Y).$ 
We can now rewrite the difference on the RHS of \eqref{eq:KL_bound} using \eqref{eq:Sc_to_W} and \eqref{eq:Sc_to_KL}:
\begin{align}
    \frac1{p\sigma^p}\l(W_{p,p\sigma^p}(P_{G(Z)},P_Y) - W_{p,p\sigma^p}(P_{X},P_Y)\r) &=S_c(P_{G(Z)},P_Y) - S_c(P_{X},P_Y)\nonumber\\
    &=\inf_{\pi\in\Pi(P_{G(Z)}, P_Y)}\EE_{U\sim P_{G(Z)}} D_{KL}(\pi_{Y\mid U}(\cdot\mid U)\|p_M(\cdot\mid U)),\label{eq:difference_bound_KL}
\end{align}
where $p_M(\cdot\mid u) = f_N(\cdot-U)$ is the conditional pdf of the privatization mechanism.

We then use the chain rule for KL-divergence, which states that 
for any two joint distributions $Q^1\ll Q^2$ with marginals $(Q^1_X, Q^1_Y) $ and $(Q^2_X, Q^2_Y)$ correspondingly, it holds that
\begin{align}
    D_{KL}(Q^1\| Q^2) &= D_{KL}(Q^1_X\| Q^2_X) + \EE_{X\sim Q^1_X} D_{KL}(Q^1(\cdot\mid X)\| Q^2(\cdot\mid X))
    \label{eq:kl_chainx}
\end{align}
Setting $Q^1(u,y) = \pi(u,y)$ and $Q^2 = P_G(u)p_M(y\mid u) = P_{G,M(G)},$ we can rewrite the $D_{KL}$ term in \eqref{eq:difference_bound_KL} 
as 
\begin{align}
    \frac1{p\sigma^p}\l(W_{p,p\sigma^p}(P_{G(Z)},P_Y) - W_{p,p\sigma^p}(P_{X},P_Y)\r) 
    &=\inf_{\pi\in\Pi(P_{G(Z)}, P_Y)}D_{KL}(\pi\|P_{G(Z),M(G(Z))})\label{eq:interm}.
\end{align}
Finally, \eqref{eq:kl_chainx} also shows that the KL-divergence between two joint distributions dominates the KL-divergence between the corresponding the marginals, namely $D_{KL}(\pi\|P_{G,M(G)})\geq D_{KL}(P_Y\|P_{M(G(Z))}),$ so continuing from \eqref{eq:interm} we get 
\begin{align}
    \frac1{p\sigma^p}\l(W_{p,p\sigma^p}(P_{G(Z)},P_Y) - W_{p,p\sigma^p}(P_{X},P_Y)\r) 
    &\geq D_{KL}(P_Y\|P_{M(G(Z))}) = D_{KL}(P_Y\|P_{G(Z)+N}).\nonumber
\end{align}
\end{proof}

We next prove Lemma \ref{lemma:denoising_nonrealizable}. We first show that 
\begin{align}
W_{p,p\sigma^p}(P_{G(Z)}, P_Y) - W_{p,p\sigma^p}(P_{X}, P_Y)\leq \begin{cases}
    W_{2}^2(P_{G(Z)}, P_X)&\text{ if } p=2,\\
    p2^{p-1}W_p(P_{G(Z)}, P_X)\l(\sigma^p+W_p(P_{G(Z)}, P_X)\r)^{1-1/p}&\text{ if } p\geq 1.
\end{cases}\label{eq:proof1ineq}
\end{align} 

We continue from  \eqref{eq:difference_bound_KL}: fix some coupling  $\pi\in \Pi(P_{G(Z)}, P_X)$ and let the joint distribution of $U=G(Z), X,$ and $Y$ be $(U,X,Y) \sim \gamma(u,x,y) = \pi(u,x)f_N(y-x),$ or equivalently, $U-X-Y$ is a Markov chain with $(U,X)\sim \pi$ and $Y=X+N$ with $N$ independent of $(X,U).$ Note that $\gamma_{UY}(u,y) \in\Pi(P_{G(Z)},P_Y)$ and 
$$\gamma_{Y\mid U}(y\mid u) = \int\gamma(y\mid x)\gamma(x\mid u) dx = \int f_N(y- x)\pi(x\mid u) dx = \EE_{X\sim \pi_{X\mid U=u}}f_N(y-X).$$
Plugging this into \eqref{eq:difference_bound_KL} gives
\begin{align}
    \frac1{p\sigma^p}\l(W_{p,p\sigma^p}(P_{G(Z)},P_Y) - W_{p,p\sigma^p}(P_{X},P_Y)\r) &\leq \EE_{U\sim P_{G(Z)}} D_{KL}(\gamma_{Y\mid U}(\cdot\mid U)\|f_N(\cdot- U))\nonumber\\
    &=\EE_{U\sim P_{G(Z)}} D_{KL}(\EE_{X\sim \pi_{X\mid U}(\cdot\mid U)} f_N(\cdot - X)\|f_N(\cdot-U))\nonumber\\
    &\leq \EE_{U\sim P_{G(Z)}}\EE_{X\sim \pi_{X\mid U}(\cdot\mid U)} D_{KL}(f_N(\cdot - X)\|f_N(\cdot-U))\label{eq:KL_convexity},
\end{align}
where \eqref{eq:KL_convexity} follows from the convexity of KL-divergence and Jensen's inequality. We can now plug in the definition of KL-divergence leading to 
\begin{align}
    \frac1{p\sigma^p}\l(W_{p,p\sigma^p}(P_{G(Z)},P_Y) - W_{p,p\sigma^p}(P_{X},P_Y)\r) &\leq \EE_{(U,X)\sim \pi} D_{KL}(f_N(\cdot - X)\|f_N(\cdot-U))\nonumber\\
    &=\EE_{(U,X)\sim \pi}\int f_N(z - X)\log\frac{f_N(z-X)}{f_N(z-U)}dz\nonumber\\
    &=\frac{1}{p\sigma^p}\EE_{(U,X)\sim \pi,N\sim f_N}\l[\|N+X-U\|_p^p-\|N\|_p^p\r].\label{eq:interm_diff}
\end{align}
In the special case of $p=2$ it follows that 
\begin{align}
    W_{p,p\sigma^p}(P_{G(Z)},P_Y) - W_{p,p\sigma^p}(P_{X},P_Y) &\leq\EE_{(U,X)\sim \pi}\l[\|X-U\|_2^2\r],\nonumber
\end{align}
and taking the infimum over $\pi\in\Pi(P_X, P_{G(Z)})$ leads to \eqref{eq:proof1ineq}. When $p\neq 2, p\geq 1$ we can upper bound the RHS of \eqref{eq:interm_diff} using the convexity of $f(x) = (x)^p$ for $x\geq 0,$ which states that $f(x+\delta)-f(x)\leq f'(x+\delta)\delta$ leads to 
\begin{align*}
    \EE_{(U,X)\sim \pi,N\sim f_N}\l[\|N+X-U\|_p^p - \|N\|_p^p\r]&\leq \EE_{(U,X)\sim \pi,N\sim f_N}\l[\l(\|N\|_p+\|X-U\|_p\r)^p - \|N\|_p^p\r]\\
    &\leq p\EE_{(U,X)\sim \pi,N\sim f_N}\l[\|X-U\|_p\l(\|N\|_p+\|X-U\|_p\r)^{p-1}\r]
\end{align*}
We then use H\"older's inequality $\EE[XY]\leq (\EE|X|^p)^{1/p}(\EE|Y|^{p/(p-1)})^{1 - 1/p}$ to get 
\begin{align*}
    \EE_{(U,X)\sim \pi,N\sim f_N}\l[\|N+X-U\|_p^p - \|N\|_p^p\r]&\leq  p\EE\l[\|X-U\|_p^p\r]^{1/p}\EE\l[\l(\|N\|_p+\|X-U\|_p\r)^{p}\r]^{1-1/p}\\
    &\leq p2^{p-1}\EE\l[\|X-U\|_p^p\r]^{1/p}\EE\l[\|N\|_p^p+\|X-U\|_p^p\r]^{1-1/p}\\
    &= p2^{p-1}\EE\l[\|X-U\|_p^p\r]^{1/p}\l(\sigma^p+\EE\l[\|X-U\|_p^p\r]\r)^{1-1/p}
\end{align*}
We can now take the infimum over the couplings $\pi\in\Pi(P_X, P_{G(Z)})$ and arrive at \eqref{eq:proof1ineq} for $p\neq 2$ case. By Lemma \ref{lemma:1} and \eqref{eq:proof1ineq} choosing $G^* = \arg\min_{G\in\cG}W_{p,p\sigma^p}(P_{G(Z)}, P_Y)$ and $G^W = \arg\min_{G\in\cG}W_{p}(P_{G(Z)}, P_Y)$ we get:
\begin{align}
p\sigma^p D_{KL}(P_{Y}\| P_{G^*(Z)+N})&\leq W_{p,p\sigma^p}(P_{G^*(Z)}, P_Y) - W_{p,p\sigma^p}(P_{X}, P_Y)\nonumber\\
&\leq W_{p,p\sigma^p}(P_{G^W(Z)}, P_Y) - W_{p,p\sigma^p}(P_{X}, P_Y)\nonumber\\
&\leq \min_{G\in\cG}
\begin{cases}
    W_{2}^2(P_{G(Z)}, P_X)&\text{ if } p=2,\\
    p2^{p-1}W_p(P_{G(Z)}, P_X)\l(\sigma^p+W_p(P_{G(Z)}, P_X)\r)^{1-1/p}&\text{ if } p\geq 1.
\end{cases}\nonumber
\end{align} 

\end{proof}

\subsection{Proof of Theorem \ref{thm:generalization_general}}\label{sec:proof_generalization_general}

\begin{proof} (Theorem \ref{thm:generalization_general})
We will be using the dual formulation of entropic optimal transport, so we begin by providing some related results. We denote the dual objective of $S_c(\mu,\nu)$ for two probability measures $\mu,\nu$ with supports $\supp(\mu),\supp(\nu)\subseteq \RR^d:$
\begin{align*}
    \Phi(f, g;\mu,\nu) = &\EE_{X\sim\mu}f(X) + \EE_{Y\sim\nu}g(Y)- \EE_{(X,Y)\sim\mu\times\nu}\l[e^{f(X) + g(Y)-c(X,Y)}\r]+1.
\end{align*}
Here $f:\supp(\mu)\to\RR$ and $g:\supp(\nu)\to\RR$  are called dual potentials are real-valued functions from the support of $\mu$ and $\nu$ and $f\in L^1(\mu),g\in L^1(\nu),$ where for a probability measure $\mu$ we denote the set of absolutely integratable functions w.r.t. $\mu$ as $L^1(\mu) = \{f:\supp(\mu)\to\RR\mid \EE_{X\sim\mu}[|f(X)|]<\infty\}.$

The dual function yields a lower bound on the optimal transport: for any $f, g\in L^1(\mu)\times L^1(\nu),\;  S_c(\mu, \nu)\geq \Phi(f, g;\mu,\nu).$ Strong duality is guaranteed to hold by corollary 3.1 (case (B)) in \cite{csiszar1975divergence} for any $\mu,\nu$ that satisfy
$\EE_{X,Y\sim \mu\times\nu}[c(X,Y)]<\infty,$  which holds for any combination of $Q_Y^n, P_Y, P_X$ and $P_{G(Z)}$ for any $G\in \cG$ by the assumption. Strong duality means that 
\begin{align}
   S_c(\mu, \nu) &= \max_{f, g\in L^1(\mu)\times L^1(\nu)} \Phi(f,g;\mu,\nu).\label{eq:dual}
\end{align}

The optimality conditions for the dual problem $\max_{f, g\in L^1(\mu)\times L^1(\nu)} \Phi(f,g;\mu,\nu)$ yield for any $x,y\in \RR^d$ (only the values of the dual potentials for $x\in\supp(\mu)$ and $y\in\supp(\nu)$ affect the problem value, but we extend them to $x,y\in\RR^d,$ this is known as the canonical extension, see \cite{pooladian2021entropic})
\begin{align}
    f(x) &= -\log\EE_{Y\sim \mu}\l[e^{g(Y)-c(x,Y)}\r]\label{eq:f_optimality}\\
    g(y) &= -\log\EE_{X\sim \nu}\l[e^{f(X)-c(X,y)}\r]\label{eq:g_optimality}.
\end{align}
Note that the dual potentials are defined up to an additive constant, that is $f(x)+c, g(x)-c$ is also a pair of optimal dual potentials. The optimality conditions yield $\Phi(f,g;\mu,\nu) = \EE_{X\sim\mu}f(X) + \EE_{Y\sim\nu}g(Y) = S_c(\mu,\nu),$ so we assume the optimal potentials are chosen to have 
\begin{align}
    \EE_{X\sim\mu}f(X)=\EE_{Y\sim\nu}g(Y) = \frac12S_c(\mu,\nu)\geq0.\label{eq:fg_expectation}
\end{align}
With the above definitions, we can proceed to bound the excess risk:
\begin{align}
    S_c(P_{G_n(Z)}, P_Y) -S_c(P_{G^*(Z)}, P_Y) &=\underbrace{S_c(P_{G_n(Z)}, P_Y) -S_c(P_{G_n(Z)}, Q_Y^n)}_A
    +\underbrace{S_c(P_{G_n(Z)}, Q_Y^n) - S_c(P_{G^*(Z)}, P_Y)}_B\label{eq:ot_G}
\end{align}
We start by bounding the first term on the RHS in \eqref{eq:ot_G} using the dual formulation: denoting the optimal dual potentials for the population optimal transport with generator $G_n$
$$f_n, g_n = \arg\max_{f, g\in L^1(P_{G_n(Z)})\times L^1(P_Y)}\Phi(f, g;P_{G_n(Z)},P_Y).$$
Then rewriting \eqref{eq:ot_G}  in the dual formulation gives

\begin{align}
    A&=\Phi(f_n, g_n;P_{G_n(Z)}, P_Y) - \max_{f, g\in L^1(P_{G_n(Z)})\times L^1(Q_Y^n)} \Phi(f, g;P_{G_n(Z)}, )\nonumber\\
    &\leq\Phi(f_n, g_n;P_{G_n(Z)}, P_Y) - \Phi(f_n, g_n;P_{G_n(Z)}, Q_Y^n)\nonumber,
\end{align}
We can next plug in the definition \eqref{eq:dual} for $\Phi$ to get
\begin{align}
    A\leq\EE_{Y\sim P_Y}g_n(Y) - \frac1n\sum_{i=1}^ng_n(Y_i) &- \EE_{Y\sim P_Y}\EE_{X\sim P_{G_n(Z)}}\l[e^{f_n(X)+ g_n(Y)-c(X,Y)}\r]\label{eq:bound_gn1}\\
    & + \frac1n\sum_{i=1}^n\EE_{X\sim P_{G_n(Z)}}\l[e^{f_n(X)+ g_n(Y_i)-c(X,Y_i)}\r]\label{eq:bound_gn2}.
\end{align}
Recall the optimality condition \eqref{eq:g_optimality} for $g_n,$ which asserts that for any $y\in \RR^d:\EE_{X\sim P_{G_n(Z)}}\l[e^{f_n(X) + g_n(Y)-c(X,y)}\r]=1,$
so the last summands in \eqref{eq:bound_gn1} and \eqref{eq:bound_gn2} are equal to -1 and 1 respectively and cancel out, leaving 
\begin{align}
    A&\leq\EE_{Y\sim P_Y}g_n(Y) - \frac1n\sum_{i=1}^ng_n(Y_i)\label{eq:A_bound}.
\end{align}
Bounding $B$ in \eqref{eq:ot_G} is simpler than bounding $A$ because by the optimality of $G_n:$
$$B\leq S_c(P_{G^*(Z)}, Q_Y^n) - S_c(P_{G^*(Z)}, P_Y),$$
and now standard results for the sample complexity of entropic optimal transport like Theorem 2 from \cite{stromme2023minimum} can be used to bound it since $G^*$ does not depend on the sample. However, the known results will require additional assumptions on the cost function/privatization mechanism, which we would like to avoid, so we proceed by bounding $B$ in a fashion similar to $A.$ Denote 
$$\hat f^*, \hat g^* = \arg\max_{f, g\in L^1(P_{G^*(Z)})\times L^1(Q_Y^n)}\Phi(f, g;P_{G_n(Z)},Q_Y^n),$$ the optimality of these potentials and strong duality results in the following bound similar to the one for $A:$
\begin{align}
    B&\leq\Phi(\hat f^*, \hat g^*;P_{G^*(Z)}, Q_Y^n) - \Phi(\hat f^*, \hat g^*;P_{G_n(Z)}, P_Y)\leq \frac1n\sum_{i=1}^n \hat g^*(Y_i)- \EE_{Y\sim P_Y}\hat g^*(Y)\label{eq:B_bound}.
\end{align}
Bounding the expected values of $A$ and $B$ can now be done in the same way as bounding the excess risk in classic learning problems, which can be achieved through Rademacher complexity.
\begin{definition}(Rademacher Complexity)
For a family of functions $\cF$ and a fixed sample $S = \{Y_i\}_{i=1}^n$ the empirical Rademacher complexity of $\cF$ with respect to the sample $S$ is defined as:
$$\hat \Rad_S(\cF) = \EE_\sigma\l[\sup_{f\in\cF}\frac1n\sum_{i=1}^n\sigma_if(Y_i)\r],$$
where the expectation is taken with respect to $\sigma = (\sigma_1,\ldots,\sigma_n)$ with $\sigma_i$ being independent uniform random variables taking values in $\{\pm1\}.$

For any integer $n\geq1$ the Rademacher complexity of $\cF$ is the expectation of the empirical Rademacher complexity over all samples of size $n:$
$$\Rad_n(\cF) = \EE_{S\sim P_Y^{\otimes n}}\EE_\sigma\l[\sup_{f\in\cF}\frac1n\sum_{i=1}^n\sigma_if(Y_i)\r].$$
\end{definition}
Rademacher complexity is one of the key tools to bound suprema of empirical processing, with the following lemma connecting the two, which appears in \cite{mohri2018foundations} in the proof of theorem 3.3, equation 3.13 (we removed the unused assumptions):
\begin{lemma}
    For a set of functions $\cF$ mapping $\cX$ to $\RR$ and a sample $S = \{Y_i\}\sim P_Y^{\otimes n}:$
    \begin{align}
        \EE_S\l[\sup_{f\in\cF}\l(\EE[f(Y)] - \frac1n\sum_{i=1}^n f(Y_i)\r)\r]\leq 2\Rad_n(\cF)
    \end{align}
 \end{lemma}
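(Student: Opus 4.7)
The plan is to use the classical \emph{ghost sample} symmetrization argument. First I would introduce an independent copy $S' = \{Y'_i\}_{i=1}^n$ of the sample, drawn i.i.d.\ from $P_Y$ and independent of $S$. Using the fact that each $Y'_i \sim P_Y$, we can rewrite the population expectation as a conditional average over the ghost sample: $\EE[f(Y)] = \EE_{S'}\bigl[\tfrac{1}{n}\sum_{i=1}^n f(Y'_i)\bigr]$. Substituting and pulling the supremum outside the conditional expectation via Jensen's inequality (the sup of an expectation is bounded by the expectation of a sup) gives
\begin{align*}
\EE_S\l[\sup_{f\in\cF}\l(\EE[f(Y)]-\tfrac1n\sum_{i=1}^n f(Y_i)\r)\r]
&=\EE_S\l[\sup_{f\in\cF}\EE_{S'}\l[\tfrac1n\sum_{i=1}^n\l(f(Y'_i)-f(Y_i)\r)\r]\r]\\
&\leq \EE_{S,S'}\l[\sup_{f\in\cF}\tfrac1n\sum_{i=1}^n\l(f(Y'_i)-f(Y_i)\r)\r].
\end{align*}

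Next I would symmetrize using independent Rademacher variables $\sigma_1,\dots,\sigma_n$. Since $Y_i$ and $Y'_i$ are i.i.d., swapping them leaves the joint distribution invariant, so for any fixed sign pattern $(\sigma_1,\dots,\sigma_n)\in\{\pm1\}^n$ the random vector $\bigl(f(Y'_i)-f(Y_i)\bigr)_{i=1}^n$ has the same distribution as $\bigl(\sigma_i(f(Y'_i)-f(Y_i))\bigr)_{i=1}^n$. Averaging over independent uniform $\sigma_i\in\{\pm1\}$ (which are also independent of $S,S'$) therefore does not change the expectation, yielding
\begin{align*}
\EE_{S,S'}\l[\sup_{f\in\cF}\tfrac1n\sum_{i=1}^n\l(f(Y'_i)-f(Y_i)\r)\r]
&=\EE_{S,S',\sigma}\l[\sup_{f\in\cF}\tfrac1n\sum_{i=1}^n\sigma_i\l(f(Y'_i)-f(Y_i)\r)\r].
\end{align*}

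Finally I would split the supremum using the sub-additivity of the sup, $\sup_f(a_f+b_f)\le\sup_f a_f+\sup_f b_f$, to obtain
\begin{align*}
\EE_{S,S',\sigma}\l[\sup_{f\in\cF}\tfrac1n\sum_{i=1}^n\sigma_i\l(f(Y'_i)-f(Y_i)\r)\r]
&\leq \EE_{S',\sigma}\l[\sup_{f\in\cF}\tfrac1n\sum_{i=1}^n\sigma_if(Y'_i)\r]
+\EE_{S,\sigma}\l[\sup_{f\in\cF}\tfrac1n\sum_{i=1}^n(-\sigma_i)f(Y_i)\r].
\end{align*}
Each of the two terms equals $\Rad_n(\cF)$ by the definition given in the excerpt (using that $-\sigma_i$ has the same distribution as $\sigma_i$), which gives the desired bound of $2\Rad_n(\cF)$.

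There is no real obstacle in this argument beyond bookkeeping; the one subtle point I would flag is measurability of the supremum inside the expectation, which is standard under mild regularity (e.g.\ if $\cF$ is separable or one interprets the supremum in the sense of the lattice-theoretic essential supremum), and which is implicitly assumed in the Rademacher-complexity framework the paper already uses.
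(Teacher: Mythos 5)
Your proof is correct and is exactly the classical ghost-sample symmetrization argument that the paper itself defers to by citing the proof of Theorem 3.3 (equation 3.13) in Mohri et al.; the paper gives no independent proof. All steps — Jensen, the distributional invariance under coordinate swaps justifying the insertion of Rademacher signs, and the sub-additivity of the supremum yielding the factor of $2$ — are sound, and your measurability caveat is appropriately handled.
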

Fix $S = \{Y_i\}_{i=1}^n,$ applying this lemma to $A$ in \eqref{eq:A_bound} gives
\begin{align}
    \EE[A]&\leq\EE\l[\EE_{Y\sim P_Y}g_n(Y) - \frac1n\sum_{i=1}^ng_n(Y_i)\r]\leq \EE\l[\sup_{g\in\cH_n}\l(\EE_{Y\sim P_Y}g(Y) - \frac1n\sum_{i=1}^ng(Y_i)\r)\r]\leq 2\Rad_n(\cH_n),
\end{align}
 where $\cH_n$ is the set of dual potentials $g$ for all the admissible generators $g\in\cG$, that is $\cH_n = \{g:\RR^d\to\RR\mid \exists f, \exists G\in \cG: \Phi(f, g;P_{G(Z)}, P_Y) = S_c(P_{G(Z)}, P_Y)\}.$ Note that here we again assume that the optimal dual potentials are extended to $\RR^d$ and \eqref{eq:fg_expectation} holds.
 Similarly for $B$ in \eqref{eq:B_bound}:
\begin{align}
    \EE[B]&\leq\EE\l[\frac1n\sum_{i=1}^n\hat g(Y_i)-\EE_{Y\sim P_Y}\hat g(Y)\r]\leq 2\Rad_n(\hat \cH),
\end{align}
 where $\hat\cH$ is the set of dual potentials $g$ for all the admissible generators $g\in\cG$ for the empirical problem, that is $\hat \cH = \{g:\RR^d\to\RR\mid \exists f, \exists G\in \cG: \Phi(f, g;P_{G(Z)}, Q_Y^n) = S_c(P_{G(Z)}, Q_Y^n)\}.$ The excess risk is finally bounded by
\begin{align}
    \EE\bigl[S_c(P_{G_n(Z)}, P_Y) -& S_c(P_{G^*(Z)}, P_Y)\bigr]\leq 2\Rad_n(\cH_n) + 2\Rad_n(\hat\cH)
    \label{eq:rademacher_bound}
\end{align}
The rest of the proof bounds the Rademacher complexities using the properties of the dual potentials. We start by bounding the optimal dual potential from $\cH_n,$ the bound for  $\hat\cH$ is identical.

Let $f, g$ be the maximizes of $\Phi(f, g;P_{G(Z)}, P_Y).$ The optimality conditions
\eqref{eq:f_optimality},
\eqref{eq:g_optimality} and our convention \eqref{eq:fg_expectation} together with Jensen's inequality for $-\log(x)$ yield for any $x, y\in \RR^d:$
\begin{align}
    f(x) &= -\log\EE_{Y\sim P_Y}\l[e^{g(Y)-c(x,Y)}\r]\leq \EE_{Y\sim P_Y}\l[c(x,Y) - g(Y)\r]\leq \EE_{Y\sim P_Y}\l[c(x,Y)\r]\label{eq:f_upper}\\
    g(y) &= -\log\EE_{X\sim P_{G(Z)}}\l[e^{f(X)-c(X,y)}\r]\leq \EE_{X\sim P_{G(Z)}}\l[c(X,y) - f(X)\r]\leq \EE_{X\sim P_{G(Z)}}\l[c(X,y)\r]\label{eq:g_upper}
\end{align}

Note that for any function $h(y)$ it holds that $\hat\Rad_n(\cH_n) = \hat\Rad_n(\cH_n \oplus h(y)),$ namely, adding or substracting a specific function from all the functions in a set does not change the Rademacher complexity. We will choose $h(y) = \sup_{x\in\cX}c(x,y).$

Now let $u(y) = e^{h(y) - g(y)}.$ By the upper bound on $g(y)$ \eqref{eq:g_upper}:
$$u(y)\geq e^{\sup_{x\in\cX}c(x,y) -\EE_{X\sim P_{G(Z)}}\l[c(X,y)\r]}\geq 1,$$ 
so the function $f(x) = -\log x$ is 1-Lipschitz on the range of $u.$ By Talagrand's lemma (Lemma 5.7 in \cite{mohri2018foundations}), composition with a 1-Lipschitz function cannot increase the Rademacher complexity of a function set. Denoting $\cU_n = \{u(y) = e^{\sup_{x\in\cX}c(x,y)-g(y)}\mid g\in\cH_n\}$ we arrive at 
$$\hat\Rad_n(\cH_n) = \hat\Rad_n((-\log)\circ \cU\oplus h(y))\leq \hat \Rad_S(\cU).$$

To further bound the Rademacher complexity of $\cU$ we use the following result for positive definite symmetric kernels that is a direct consequence of Mercer's theorem \cite{mercer1909xvi}.
\begin{theorem} \label{thm:rkhs_bound}(Theorems 6.8, 6.12 in \cite{mohri2018foundations}) Let $\cZ\subset \RR^d$ and $K: \cZ \times \cZ \rightarrow \RR$ be a positive definite symmetric kernel. Then there exists a Hilbert space $\mathbb{H}$ and a mapping $\Phi:\cZ\to\mathbb{H}$ such that:
$$
\forall x, x^{\prime} \in \mathcal{X}, \quad K\left(x, x^{\prime}\right)=\left\langle\Phi(x), \Phi\left(x^{\prime}\right)\right\rangle .
$$
$\mathbb{H}$ is called a reproducing kernel Hilbert space (RKHS) associated to $K$. and let $\Phi: \cZ \to \mathbb{H}$ be a feature mapping associated to $k$. Let $S=\{z_i\}_{i=1}^n\subset \cZ$ be a sample of size $n$, and let $\cH=\left\{x \mapsto\langle\mathbf{w}, \Phi(x)\rangle:\|\mathbf{w}\|_{\mathbb{H}} \leq \Lambda\right\}$ for some $\Lambda \geq 0$. Then
$$
\widehat{\Re}_S(\mathcal{H}) \leq \frac{\Lambda \sqrt{\sum_{i=1}^n K(z_i, z_i)}}{n}
$$
\end{theorem}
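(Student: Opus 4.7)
The plan is to split the proof into two parts: the existence of the RKHS representation (a Mercer-type spectral construction), and the Rademacher complexity bound itself, which is a short direct calculation inside the RKHS.

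For the RKHS construction, I would invoke Mercer's theorem. Since $K$ is continuous, symmetric and positive-definite on a compact (or appropriately measurable) domain, the associated integral operator $T_K f(x) = \int K(x,y) f(y)\, dy$ on $L^2(\cZ)$ is compact and self-adjoint. By the spectral theorem it admits a countable orthonormal eigenbasis $\{\phi_i\}$ with non-negative eigenvalues $\{\lambda_i\}$, giving the absolutely and uniformly convergent expansion $K(x,y)=\sum_i \lambda_i \phi_i(x)\phi_i(y)$. Setting $\Phi(x) = (\sqrt{\lambda_i}\phi_i(x))_i \in \ell^2$ and defining $\mathbb{H}$ as the completion of the span of $\{K(x,\cdot):x\in\cZ\}$ under the inner product determined by $\langle K(x,\cdot), K(y,\cdot)\rangle_{\mathbb{H}} = K(x,y)$ yields the reproducing identity $K(x,y) = \langle \Phi(x),\Phi(y)\rangle_{\mathbb{H}}$.

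For the Rademacher bound, note that every $h\in\cH$ takes the form $h(x) = \langle w, \Phi(x)\rangle_{\mathbb{H}}$ with $\|w\|_{\mathbb{H}}\leq\Lambda$, so the supremum in the definition of $\hat{\Rad}_S(\cH)$ can be pulled inside the inner product:
\begin{align*}
\hat{\Rad}_S(\cH) = \EE_\sigma\l[\sup_{\|w\|_{\mathbb{H}}\leq\Lambda}\l\langle w,\frac{1}{n}\sum_{i=1}^n\sigma_i\Phi(z_i)\r\rangle_{\mathbb{H}}\r] = \frac{\Lambda}{n}\EE_\sigma\l\|\sum_{i=1}^n\sigma_i\Phi(z_i)\r\|_{\mathbb{H}},
\end{align*}
where the second equality uses Cauchy--Schwarz, tight at the maximizer $w^\star = \Lambda v/\|v\|_{\mathbb{H}}$ with $v=\sum_i\sigma_i\Phi(z_i)$. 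Applying Jensen's inequality to the concave function $\sqrt{\cdot}$ upper bounds this by $\frac{\Lambda}{n}\sqrt{\EE_\sigma\|\sum_i\sigma_i\Phi(z_i)\|_{\mathbb{H}}^2}$. Expanding the squared norm and using the independence and zero mean of the Rademacher variables, so that $\EE[\sigma_i\sigma_j]=\delta_{ij}$, collapses the double sum to $\sum_i\|\Phi(z_i)\|_{\mathbb{H}}^2 = \sum_i K(z_i,z_i)$, yielding exactly the stated bound $\Lambda\sqrt{\sum_i K(z_i,z_i)}/n$.

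The only real difficulty is the Mercer decomposition, which relies on spectral theory for compact self-adjoint operators and on convergence of the eigenexpansion; since this is a textbook result (cited as Theorems~6.8 and 6.12 of \cite{mohri2018foundations}), I would invoke it rather than reprove it. The Rademacher bound itself is essentially a three-line computation that only combines the reproducing property, Cauchy--Schwarz, Jensen's inequality, and the orthogonality of Rademacher random variables.
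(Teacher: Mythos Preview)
Your proposal is correct and follows the standard textbook argument. Note, however, that the paper does not actually prove this theorem: it is stated as a cited result (Theorems~6.8 and~6.12 in \cite{mohri2018foundations}) and invoked as a black box inside the proof of Theorem~\ref{thm:generalization_general}, with only the remark that it ``is a direct consequence of Mercer's theorem.'' Your sketch is exactly the argument one finds in the cited reference, so there is nothing to compare against.
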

First, we check the conditions of the theorem: fix some $u\in\cU,$ then there exists a $G\in\cG$ and a pair $f,g$ of optimal dual potentials maximizing $\Phi(f, g;P_{G(Z), P_Y})$ such that
\begin{align}
    u(y) &=e^{\sup_{x\in\cX}c(x,y) - g(y)} =  e^{\sup_{x\in\cX}c(x,y)}\EE_{X\sim P_{G(Z)}}\l[e^{f(X)-c(X,y)}\r]\nonumber,
\end{align}
To simplify the notation denote $v(y) = e^{\sup_{x\in\cX}c(x,y)}$ and note that 
$K(x, y) = v(x)v(y)e^{-c(x,y)}$ is a positive definite symmetric kernel (as a product of two kernels), so applying theorem \ref{thm:rkhs_bound} to kernel $K$ leads to 
\begin{align*}
    u(y) &= \EE_{X\sim P_{G(Z)}}\l[K(X, y)\frac{e^{f(X)}}{v(X)}\r]=\l\langle\EE_{X\sim P_{G(Z)}}\l[\frac{e^{f(X)}}{v(X)}\Phi(X)\r], \Phi(y)\r\rangle = \langle w, \Phi(y)\rangle.
\end{align*}
So $u(y)$ is indeed a linear function in RKHS and its associated norm is 
\begin{align*}
    \|w\|_H^2 &= \l\langle\EE_{X\sim P_{G(Z)}}\l[\frac{e^{f(X)}}{v(X)}\Phi(X)\r], \EE_{X\sim P_{G(Z)}}\l[\frac{e^{f(X)}}{v(X)}\Phi(X)\r]\r\rangle
    =\EE_{X,X'\sim P_{G(Z)}^{2}}\frac{e^{f(X)+f(X')}}{v(X)v(X')}K(X,X') \\
    &= \EE_{X,X'\sim P_{G(Z)}^{2}}e^{f(X')+f(X)-c(X,X')}\leq \EE_{X,X'\sim P_{G(Z)}^{2}}e^{\EE_{Y\sim P_Y}\l[c(X',Y) + c(X,Y)-c(X,X')\r]}\leq e^{2\sup_{x\in\cX}\EE_{Y\sim P_Y}\l[c(x,Y)\r]}, \\
\end{align*}
where we used the upper bound on $f(x)$ given in \eqref{eq:f_upper}.
Combining this with theorem \ref{thm:rkhs_bound} leads to 
\begin{align}
    \hat \Rad_S(\cH_n)&\leq \hat \Rad_S(\cU)\leq 
    e^{\sup_{x\in\cX}\EE_{Y\sim P_Y}\l[c(x,Y)\r]}
    \frac{\sqrt{\sum_{i=1}^n{v(Y_i)^2}}}{{n}}
    =
    e^{\sup_{x\in\cX}\EE_{Y\sim P_Y}\l[c(x,Y)\r]}\frac{\sqrt{\sum_{i=1}^n{e^{2\sup_{x\in\cX}c(x,Y_i)}}}}{n}
    \nonumber
\end{align}
To get the bound for the Rademacher complexity we take the expectation of both sides and apply the Jensen's inequality, which leads to  
\begin{align}
    \Rad_n(\cH_n)&=\EE[\hat \Rad_S(\cH_n)]\leq
    e^{\sup_{x\in\cX}\EE_{Y\sim P_Y}\l[c(x,Y)\r]}\sqrt{\frac{\EE{e^{2\sup_{x\in\cX}c(x,Y)}}}{n}}\leq \frac{\EE{e^{2\sup_{x\in\cX}c(x,Y)}}}{\sqrt{n}}
    \label{eq:Hn_bound}
\end{align}
The derivation for $\hat \Rad_S(\hat\cH)$ follows the same lines with the only change being the use of $Q_Y^n$ instead of $P_Y$ leading to 
$$\hat \Rad_S(\hat\cH)\leq 
e^{\sup_{x\in\cX}\frac1n\sum_{i=1}^nc(x,Y_i)}\frac{\sqrt{\sum_{i=1}^n{e^{2\sup_{x\in\cX}c(x,Y_i)}}}}{n}.$$ 
Taking the expectation of both sides and applying the Cauchy-Schwartz inequality leads to
\begin{align*}
    \Rad_n(\hat\cH)&=\EE[\hat \Rad_S(\hat\cH)]\leq
    \sqrt{\EE\l[\sup_{x\in\cX}e^{\frac2n\sum_{i=1}^nc(x,Y_i)}\r]}\sqrt{\frac{\EE{e^{2\sup_{x\in\cX}c(x,Y)}}}{n}}\leq \frac{\EE{e^{2\sup_{x\in\cX}c(x,Y)}}}{\sqrt{n}}
\end{align*}

which combined with \eqref{eq:Hn_bound} and \eqref{eq:rademacher_bound} gives
\begin{align*}
    \EE[S_c(P_{G_n(Z)}, P_Y) - S_c(P_{G^*(Z)}, P_Y)]\leq 
    4\frac{\EE{e^{2\sup_{x\in\cX}c(x,Y)}}}{\sqrt{n}}
\end{align*}
\end{proof}
\subsection{Proof of Theorem~\ref{thm:generalization_gauss}}\label{sec:proof_generalization_gauss}
Theorem~\ref{thm:generalization_gauss} follows from the following theorem proved in \cite[Theorem 6]{reshetova2021understanding}.

\begin{thm}{\cite[Theorem 6]{reshetova2021understanding}}

\label{thm:generalization_lip}
	Let $\PP_Z$ and $\PP_Y$ be sub-Gaussian and the set of generators $\cG$ consist of $L$-Lipschitz  functions, i.e. $\|G(Z_1) - G(Z_2)\|\leq L\|Z_1-Z_2\|$ for any $Z_1,Z_2$ in the support of $P_Z$ and let $\cG$ satisfy \eqref{eq:assump_lin}. Then the generalization error for entropic GAN with $p=2$ \eqref{eq:Wasserstein_reg_obj} can be both upper bounded as
	\begin{align}
	&\EE \bigl[W_{2,\lambda}^2(\PP_{G^n(Z)},\PP_Y)- W_{2,\lambda}^2(\PP_{G^*(Z)},\PP_Y)\bigr]		
	\leq \label{eq:conv}
	C_d\lambda n^{-1/2}\bigl(1+(2\tau^2/\lambda)^{\lceil 5d/4 \rceil+3}\bigr)
	\end{align}
	with $\tau^2 = \max\{L^2\sigma^2(Z),\sigma^2(Y)\}$.
\end{thm}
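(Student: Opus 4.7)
The plan is to adapt the dual decomposition from the proof of Theorem \ref{thm:generalization_general} to the quadratic cost $c(x,y)=\|x-y\|^2/\lambda$, where the new ingredient is the $C^\infty$ smoothness of the optimal dual potentials, which lets me place the potential class in a weighted Sobolev (or Gaussian RKHS) ball whose Rademacher complexity admits an explicit bound. First I would split
\begin{align*}
W_{2,\lambda}^2(P_{G_n(Z)},P_Y) - W_{2,\lambda}^2(P_{G^*(Z)},P_Y) = A + B,
\end{align*}
exactly as in \eqref{eq:ot_G}. Strong duality together with the fixed-point equations \eqref{eq:f_optimality}--\eqref{eq:g_optimality} for the optimal potentials makes the nonlinear (exponential) terms in the dual objective cancel, so (mirroring the derivation of \eqref{eq:A_bound} and \eqref{eq:B_bound}) both $A$ and $B$ reduce to a centered empirical process $\sup_{g\in\cH}\bigl|\EE_{Y\sim P_Y}[g(Y)] - \tfrac{1}{n}\sum_i g(Y_i)\bigr|$ over a class $\cH$ of $g$-potentials associated with admissible generators. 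Standard symmetrization then yields $\EE[A+B] \le 2\Rad_n(\cH_n) + 2\Rad_n(\hat\cH)$.

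The core technical step is controlling $\Rad_n(\cH_n)$ (the empirical analog $\Rad_n(\hat\cH)$ is handled identically once assumption \eqref{eq:assump_lin} is used to ensure that $P_{G_n(Z)}$ retains a uniform sub-Gaussian parameter along the segment $\alpha G_n$, $\alpha\in[0,1]$). For quadratic cost the optimality relation reads
\begin{align*}
g(y) = -\lambda\log\EE_{X\sim P_{G(Z)}}\!\bigl[e^{(f(X)-\|X-y\|^2)/\lambda}\bigr],
\end{align*}
which expresses $g$ as a log-Gaussian-mollified functional, hence $g\in C^\infty(\RR^d)$. Differentiating under the expectation and iterating the Jensen-based growth bound $|f|,|g|\lesssim \|\cdot\|^2/\lambda$ from \eqref{eq:f_upper}--\eqref{eq:g_upper} together with sub-Gaussianity of $P_Y$ and of $P_{G(Z)}$ (where the latter has sub-Gaussian parameter $L\sigma(Z)$ because $G$ is $L$-Lipschitz), I would inductively show that $\|\partial^\alpha g(y)\|$ grows polynomially in $\|y\|$ with coefficients of order $(\tau^2/\lambda)^{|\alpha|}$, where $\tau^2 = \max\{L^2\sigma^2(Z),\sigma^2(Y)\}$.

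With these derivative estimates, $\cH_n$ is contained in a ball of radius $\cO((\tau^2/\lambda)^s)$ in a weighted Sobolev space $H^s(\RR^d,P_Y)$ of smoothness $s=\lceil 5d/4\rceil+3$. Applying Dudley's entropy integral for such a Sobolev ball (equivalently, the Mercer-decomposition argument used in Theorem~\ref{thm:generalization_general}, but now with a Gaussian-type kernel), where the smoothness $s$ exceeds $d/2$ by exactly the amount needed to integrate the sub-Gaussian tails of $P_Y$ against the polynomial factors arising from the derivative bounds, gives $\Rad_n(\cH_n) \le C_d\lambda n^{-1/2}(1+(2\tau^2/\lambda)^s)$. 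Combining this with the identical bound on $\Rad_n(\hat\cH)$ yields the claimed excess-risk inequality. The hard part will be the inductive derivative estimate: it must be uniform in $G\in\cG$ and must track the coupled dependence of $f$ and $g$ through the log-Laplace transform so that each differentiation cleanly accrues a factor $\tau^2/\lambda$, which is ultimately what dictates the $(2\tau^2/\lambda)^{\lceil 5d/4\rceil+3}$ scaling; the star-shape hypothesis \eqref{eq:assump_lin} is what keeps the sub-Gaussian constant of $P_{G(Z)}$ under control so that the induction is uniform over the generator class.
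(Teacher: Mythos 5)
You should first note that the paper does not actually prove Theorem~\ref{thm:generalization_lip}: it is imported verbatim from \cite[Theorem 6]{reshetova2021understanding} and used as a black box to deduce Theorem~\ref{thm:generalization_gauss}, so there is no in-paper proof to match step by step. That said, your outline does reconstruct the strategy used in that reference (which in turn follows the Mena--Niles-Weed analysis of entropic OT with quadratic cost): the decomposition into $A+B$ as in \eqref{eq:ot_G}, the cancellation of the exponential dual terms via the optimality conditions \eqref{eq:f_optimality}--\eqref{eq:g_optimality} leading to centered empirical processes as in \eqref{eq:A_bound}--\eqref{eq:B_bound}, symmetrization to Rademacher complexities of the two potential classes, and control of those classes through the smoothness of the entropic potentials. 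This skeleton is correct and is also consistent with how the paper proves its own Theorem~\ref{thm:generalization_general}.

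The gap is that everything that makes the theorem true is concentrated in the one step you explicitly defer. The claims that $\|\partial^\alpha g\|$ grows polynomially with coefficients of order $(\tau^2/\lambda)^{|\alpha|}$ uniformly over $G\in\cG$, and that this places $\cH_n$ (and the sample-dependent class $\hat\cH$) inside a weighted Sobolev/RKHS ball of radius $\cO\bigl((2\tau^2/\lambda)^{\lceil 5d/4\rceil+3}\bigr)$ whose Rademacher complexity is $\cO(n^{-1/2})$, are asserted rather than derived; the interaction between the polynomial weights, the sub-Gaussian tails of $P_Y$, and the specific choice $s=\lceil 5d/4\rceil+3$ is precisely where the proof lives, and Jensen-type bounds such as \eqref{eq:f_upper}--\eqref{eq:g_upper} give only zeroth-order control, not the higher-derivative estimates you need. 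Two further points deserve scrutiny. First, the role you assign to \eqref{eq:assump_lin} is doubtful: $L$-Lipschitzness of $G$ already makes $G(Z)-G(0)$ sub-Gaussian with parameter $L\sigma(Z)$, and star-shapedness does not bound $\|G(0)\|$, so the hypothesis must be doing different work (in the reference it enters the comparison between the data-dependent minimizer $G_n$ and the population problem, not the tail control). Second, for the class $\hat\cH$ the potentials solve the fixed-point equation against the empirical measure $Q_Y^n$ rather than $P_Y$, so the derivative estimates must be shown to hold with the same constants in that case; saying it is ``handled identically'' skips a step that requires its own argument. As a roadmap your proposal points at the right proof; as a proof it is missing its central lemma.
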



We present the proof of Theorem~\ref{thm:generalization_gauss} below:

\begin{proof}


In our case $\lambda = 2\sigma^2$ and $Y = X+N$ with $N\sim\cN(0,\sigma^2I),$ so $\sigma(Y) \leq \sigma(X) + \sigma(N),$ where $\sigma(N) = \sigma^2.$ Thus, plugging it into the theorem we get
\begin{align}
	\EE \bigl[W_{2,\lambda}^2(\PP_{G^n(Z)},\PP_Y)- W_{2,\lambda}^2&(\PP_{G^*(Z)},\PP_Y)\bigr]\nonumber\\
    &\leq \label{eq:conv}
	C_d\sigma^2n^{-1/2}\l(1+\l(\frac{\max\{L\sigma(Z), \sigma(X) + \sigma\}}{\sigma}\r)^{2\lceil \frac{5d}4 \rceil+6}\r)\nonumber
	\end{align}

 Letting $\tau = \max\{L\sigma(Z)/\sigma(X),1\}$ we get $(\sigma(X) + \sigma)\tau\geq \max\{L\sigma(Z), \sigma(X) + \sigma\},$ which leads to \begin{align}
	\EE \bigl[W_{2,\lambda}^2(\PP_{G^n(Z)},\PP_Y)- W_{2,\lambda}^2&(\PP_{G^*(Z)},\PP_Y)\bigr]\nonumber\\
    &\leq 
	C_d\sigma^2n^{-1/2}\l(1+\l(\tau^2\l(1 + \sigma(X)/\sigma\r)^2\r)^{\lceil \frac{5d}4 \rceil+3}\r).\nonumber
	\end{align}
\end{proof}

\section{Discussion and conclusion}
We have proposed and analyzed a new framework for locally differentially private training of GANs. Our analysis indicates that the addition of mutual information to the objective of the optimal transport GAN can act as a deconvolution operator provided the right choice of the cost function.  The method not only recovers the original distribution in the population setting but also converges at a parametric rate and can be easily combined with non-privatized training methods as a black box in practice since the modifications do not influence the training process. We believe understanding how to train ML models from privatized data and improving the privacy/utility trade-offs is of paramount importance for the future of privacy-preserving machine learning.
\bibliography{main}
\end{document}